\documentclass[11pt, letterpaper]{article}
\pdfoutput=1
 \usepackage{color,amsmath,latexsym,amssymb}
 \usepackage{bbm}   
 \usepackage[]{graphicx}
\usepackage{algorithm}
\usepackage{enumitem}
\usepackage{algorithmic}

 \usepackage{geometry}
 \usepackage{mathtools}
 \usepackage[hidelinks]{hyperref}
 \usepackage{breakcites}
 \usepackage{siunitx}
 \usepackage[labelformat=simple]{subcaption}


\newcommand{\bpm}{\begin{pmatrix}}
\newcommand{\epm}{\end{pmatrix}}


\newcount\colveccount
\newcommand*\colvec[1]{
        \global\colveccount#1
        \begin{bmatrix}
        \colvecnext
}
\def\colvecnext#1{
        #1
        \global\advance\colveccount-1
        \ifnum\colveccount>0
                \\
                \expandafter\colvecnext
        \else
                \end{bmatrix}
        \fi
}


\newcommand {\real} {\mathbb{R}}

\newcommand {\cB} {\mathcal{B}}

\newcommand {\cD} {\mathcal{D}}

\newcommand {\cH} {\mathcal{H}}
\newcommand {\cI} {\mathcal{I}}

\newcommand {\cN} {\mathcal{N}}
\newcommand {\cO} {\mathcal{O}}

\newcommand {\cU} {\mathcal{U}}

\newcommand{\BC}{\ensuremath{\mathbf{C}} } %
 %
 %
 %
 %
 %
 %
 %
 %
 %
 %
 %
 %
 %
 %
 %
 %
 %
 %
 %
 %
 %
 %



\DeclareMathOperator*{\V}{\mathbb{V}}



%
\DeclareMathOperator*{\argmax}{arg\,max}%

\newcommand{\ba}{\ensuremath{\mathbf{a}}} %
\newcommand{\bb}{\ensuremath{\mathbf{b}}} %
\newcommand{\bc}{\ensuremath{\mathbf{c}}} %
\newcommand{\be}{\ensuremath{\mathbf{e}}} %
\newcommand{\bff}{\ensuremath{\mathbf{f}}} %
\newcommand{\bm}{\ensuremath{\mathbf{m}}} %
\newcommand{\bv}{\ensuremath{\mathbf{v}}} %
\newcommand{\bx}{\ensuremath{\mathbf{x}}} %
\newcommand{\by}{\ensuremath{\mathbf{y}}} %
\newcommand{\bz}{\ensuremath{\mathbf{z}}} %
\newcommand{\bzero}{\ensuremath{\mathbf{0}}} %

\newcommand {\BLambda} {\mbox{\boldmath $\Lambda$}}

\newcommand {\BTheta} {\boldsymbol{\Theta}}
\newcommand {\bpsi} {\mbox{\boldmath $\psi$}}

\newcommand {\btheta} {\mbox{\boldmath $\theta$}}

\newcommand {\BSigma} {\mbox{\boldmath $\Sigma$}}





%

 \newtheorem{theorem}{Theorem}[section]
 
 \newtheorem{proposition}[theorem]{Proposition}

 \newtheorem{definition}[theorem]{Definition}

 \newenvironment{proof}%
                {\noindent {\bf Proof:} }%
                {\hfill $\Box$ \\[1ex] }

               {\refstepcounter{theorem}\vspace{2ex}%
                \noindent{\bf Examples \thetheorem} }%
               {\hfill $\diamond$}

               {\refstepcounter{theorem}\vspace{2ex}%
                \noindent{\bf Example \thetheorem} }%
               {\hfill $\diamond$}

\newcounter{problem} 
\renewcommand{\theproblem} {\arabic{problem}}
               {\refstepcounter{problem} \vspace{2ex}%
                \noindent{\bf Problem \theproblem} }%
               { }

               {\noindent{\bf Solution}}%
               {\hfill $\Box$ \\[1ex]}

%

%
%
\newlength{\boxwidth}


%
%


\newlength{\fullboxwidth}
\setlength{\fullboxwidth}{\textwidth}
\addtolength{\fullboxwidth}{-0.5in}

\newlength{\fullinboxwidth}
\setlength{\fullinboxwidth}{\fullboxwidth}
\addtolength{\fullinboxwidth}{-0.5in}

\ifpdf
  \DeclareGraphicsExtensions{.eps,.pdf,.png,.jpg}
\else
  \DeclareGraphicsExtensions{.eps}
\fi






\newcommand{\TheTitle}{Adaptive Gaussian process surrogates for Bayesian inference} 
\newcommand{\TheAuthors}{T.\ Takhtaganov, J.\ M\"uller}

\ifpdf
\hypersetup{
  pdftitle={\TheTitle},
  pdfauthor={\TheAuthors}
}
\fi

\title{\TheTitle
	\thanks{This work was supported by the Lawrence Berkeley National Laboratory LDRD program under contract number DE-AC02005CH11231.}}

\newcommand{\email}[1]{\protect\href{mailto:#1}{#1}}

\author{T. Takhtaganov\thanks{Center for Computational Sciences and Engineering, Computational Research Division,
Lawrence Berkeley National Laboratory, Berkeley, CA 94720-8139, USA (\email{tat@lbl.gov}, \email{JulianeMueller@lbl.gov}, \url{https://ccse.lbl.gov/}).}
\and J. M\"uller\footnotemark[2]}


\begin{document}

\maketitle

\graphicspath{{./figures/}}

\begin{abstract}
We present an adaptive approach to the construction of Gaussian process surrogates for Bayesian inference with expensive-to-evaluate forward models. Our method relies on the fully Bayesian approach to training Gaussian process models and utilizes the expected improvement idea from Bayesian global optimization. We adaptively construct training designs by maximizing the \textit{expected improvement in fit} of the Gaussian process model to the noisy observational data. Numerical experiments on model problems with synthetic data demonstrate the effectiveness of the obtained adaptive designs compared to the fixed non-adaptive designs in terms of accurate posterior estimation at a fraction of the cost of inference with forward models.

\end{abstract}

\begin{keywords}
Bayesian inference, surrogate models, Gaussian process models, experimental designs, adaptive sampling, epistemic uncertainty, computationally expensive problems
\end{keywords}



\section{Introduction}

Computer simulations are used in many science domains to study complex physical phenomena for which targeted experiments to test hypotheses would otherwise be too time consuming or impossible to conduct. The simulation models generally have parameters whose values influence the output of the model. Experimental data are often used in order to assess the accuracy of the simulation model. However, these observations are often noisy, leading to an inverse problem to arise. 

Our work is concerned with the design of efficient tools for the solution of inverse problems encountered in science domains such as engineering, cosmology, or combustion. The goal of inference is the estimation of the parameters of interest that serve as inputs into the computational model from a set of observations. We consider applications in which the computational model is expensive (several minutes to hours per run on a modern supercomputer) and the observational data are noisy. The Bayesian approach provides a statistical framework for solving inverse problems with noisy and incomplete data. The solution of the inverse problem in the Bayesian framework is a posterior distribution that describes the degree of confidence about the parameters of interest. Typically, this distribution does not have an analytical form and is represented by the samples obtained with posterior sampling approaches such as Markov chain Monte Carlo (MCMC). These methods commonly require repeated evaluations of the forward model, which, in our setting, quickly becomes prohibitive from a computational point of view. Thus, our main challenge is to reduce the number of model evaluations that are required to find posterior distributions of the model's parameters. In this paper, we address this challenge by employing Gaussian process models as surrogates of the computationally expensive model.
%


Approximate methods (also surrogates or metamodels) have been employed by many authors to accelerate inference tasks. Among the approximation methods used in the context of inference are projection-based model reduction \cite{NCNguyen_GRozza_DBPHuynh_ATPatera_2011a, CuiT_YMMarzouk_KEWillcox_2015a}, stochastic spectral methods \cite{YMMarzouk_HNNajm_LARahn_2007a, YMMarzouk_HNNajm_2009a}, and Gaussian process regression \cite{MCKennedy_AOHagan_2001a, SHabib_KHeitmann_DHigdon_CNakhleh_BWilliams_2007a, DHigdon_JGattiker_BWilliams_MRightley_2008a}. Typically, the surrogate model is built over the support of the prior distribution on the parameters of interest making it a ``global'' approximation. As argued in \cite{JLi_YMMarzouk_2014a}, it can be sufficient to have a ``localized'' surrogate---the one that is accurate only in the region of the posterior measure concentration. In cases where the prior distribution is ``broad'' and the posterior is highly concentrated, the localized surrogate approach can lead to a significant reduction in the number of forward model evaluations that are required to obtain it.

In \cite{JLi_YMMarzouk_2014a}, the authors perform Bayesian inference using Polynomial Chaos (PC) surrogates that are adaptively constructed over probability distributions chosen to approximate the posterior in the sense of Kullback--Leibler (K-L) divergence. Candidate distributions are chosen from a parameterized family by minimizing the approximate K-L divergence, and the localized PC surrogates are built with respect to the chosen distributions.

In this paper, we pursue a similar idea of localized surrogates but with Gaussian process (GP) models as surrogates instead of PC. We treat the GP surrogate as a Bayesian surrogate described by a predictive distribution that encodes the available information from a limited number of forward model evaluations. We make use of the fully Bayesian formalism developed in \cite{IBilionis_NZabaras_2014a} to account for the uncertainty in the observational data as well as epistemic uncertainty arising from a limited number of simulations.

Our main contribution is the design of an adaptive algorithm to guide the selection of training inputs for the construction of the GP surrogate. Our algorithm aims at building a GP surrogate that is effective for the purpose of solving a specific inverse problem. In each step of the algorithm, we maximize an acquisition function that quantifies a potential improvement in the fit of the GP model to the observational data. This greedy approach explores the prior distribution of the parameters sufficiently in order to inform the surrogate model globally while emphasizing the regions that are most likely to be of interest for the construction of the posterior. 

The acquisition function in our algorithm is commonly used in Bayesian global optimization under the name of expected improvement. In that context, it is used within an algorithm called Efficient Global Optimization (EGO) \cite{DRJones_MSchonlau_WJWelch_1998a} to find the global optimum of an expensive-to-evaluate function. We, however, do not merely apply EGO to our objective. The difference between our approach of employing the expected improvement function and the EGO approach is explained in Section \ref{sec:difference_with_EGO}.

Previous work that applied Bayesian optimization and EGO to the solution of inverse problems is reported in \cite{PPerdikaris_GEKarniadakis_2016a}. The authors of this work apply the EGO algorithm directly to minimize the error between the model and the experimental data. However, the authors do not treat the problem in the fully Bayesian setting as we do here. A recent work \cite{MSinsbeck_WNowak_2017a} also considers a sequential design strategy for the solution of inverse problems with GP emulators. In this work, however, the forward model is assumed to be a realization of the GP model which is known completely (i.e., with fixed hyperparameters of the covariance function, see Section~\ref{sec:GPmodels}). We do not make such assumptions. Furthermore, our choice of acquisition function leads to a more tractable auxiliary problem.

Active learning and Bayesian optimization methods have also been  applied to the related problem of estimating the likelihood functions for Bayesian inference. Gaussian process models have been applied to directly approximate the likelihood, for example, in \cite{KKandasamy_JSchneider_BPoczos_2017a, HWang_JLi_2017a}. In these works, the training points for the GP model are chosen adaptively based on a measure of uncertainty such as predictive variance or entropy. A similar approach is developed in \cite{TBuiThanh_OGhattas_DHigdon_2012a} where the authors propose a method for approximating high-dimensional expensive-to-evaluate probability density functions (p.d.f.`s) with adaptive Gaussian approximations. The p.d.f.`s of interest are the ones arising from the Bayesian solution of inverse problems with Gaussian priors and likelihoods.
The proposed method is based on Gaussian processes with covariance functions utilizing the Hessian of the negative log-likelihood of the posterior density.
The training points are selected adaptively by maximizing the squared error between the true posterior density and the GP-based predictor. 
This approach requires derivatives of the forward model.

Our approach differs from the methods described above in that we build the surrogate model for the forward simulation rather than for the likelihood function. While approximating the likelihood function directly might prove advantageous if it is multi-modal and the forward model is highly nonlinear, we argue that having a surrogate of the forward model has its own merits. In particular, once the surrogate is built, it can be used not only for estimating the parameter posterior but also for forward uncertainty propagation and prediction, albeit in a limited way due to the localized nature of the surrogate. 

Regarding the choice of Gaussian process models as surrogates, we refer to the recent comparison of surrogate-based uncertainty quantification methods conducted in \cite{NEOwen_PChallenor_PPMenon_SBennani_2017a}. Gaussian process models have several advantages over polynomial chaos, for example, in terms of  their flexibility and the freedom in the choice of the design. Gaussian process models also prove to be more suitable for modeling nonlinear simulator behavior, and provide estimates of the prediction uncertainty. This last feature is particularly important for the method developed here.

The remainder of this paper is organized as follows. In Section~\ref{sec:BayesianInference}, we provide the definition of the inverse problem and a brief summary of Bayesian inference. In Section~\ref{sec:GPmodels}, we review Gaussian process models for the single and the multiple output cases and Bayesian inference with GP models. This section provides a review of the existing methodology for training GP models and motivates our modeling choices. It culminates in the derivation of the GP-based likelihood function used for inference. We describe our adaptive approach to constructing the GP models in Section~\ref{sec:adaptiveGP} and we show its performance in numerical experiments in Section~\ref{sec:numericalExp}. Here, we contrast our approach with the commonly used randomized designs that are not goal-oriented, such as Latin hypercube designs \cite[Section~5.2.2]{TJSantner_BJWilliams_WINotz_2003a}. We do not compare our results to any other adaptive or sequential experimental designs that aim to reduce the predictive errors in GP regression,
see, for example, \cite{JSacks_WJWelch_TJMitchell_HPWynn_1989a, AGorodetsky_YMMarzouk_2016a} for the summaries. The objectives of such designs are different from those considered in our work. In Section~\ref{sec:conclusions}, we draw conclusions and outline future research directions.  

%

\section{Bayesian inference}\label{sec:BayesianInference}

We start by formulating the inference problem and introducing the notation. Let the vector of parameters of interest be denoted by $\btheta\in\real^p$. These parameters serve as an input into the simulation model (i.e., the computer code) that represents a given physical system. Let $\bff:\real^p \rightarrow \real^q$ denote a mapping from the inputs to the outputs of the deterministic forward model. The components of the output vector will be denoted by $f_i(\btheta)$: $\bff(\btheta) = (f_1(\btheta),\dots,f_q(\btheta))^T$. Multiple outputs arise, for example, if the forward model depends on an additional (deterministic) variable $x$ that takes on $q$ values; in this case, each component of the output represents the value for a fixed $x_i$: $f_i(\btheta) = f(\btheta, x_i)$, $i=1,\dots,q$. For example, $x$ could represent time in time-dependent problems.

The goal of inference is to learn the parameters $\btheta$ from the direct observations of the physical system. We will denote such observations (experimental data) of the output quantities by a vector $\bz = (z_1,\dots,z_q)^T$. The measured quantities are never perfect and contain measurement noise that will be denoted by a vector $\be \in\real^q$. As in classical statistical inverse problems \cite{JKaipio_ESomersal_2005a}, we will view all the variables as random and use capital letters to represent them. Lower case letters will be reserved for their realizations.

\noindent
We assume the following statistical model for the measurements with additive noise:
\[
	Z_i = f_i(\BTheta) + E_i, \quad i=1,\dots,q.
\]
We further assume that the components of the measurement noise are normally distributed ($E_i\sim\cN(0,\sigma_{i}^2)$) and potentially correlated. The probability density of the measurement noise is thus given by a $q$-variate normal density:
\[
	p_{E}(\be) = \cN_q(\be \,|\, \bzero_q, \BSigma_{E}) \coloneqq (2\pi)^{-q/2} |\BSigma_E|^{-1/2} \exp\left[ -\frac{1}{2}\be^T\BSigma_{E}^{-1}\be \right]
\]
with known covariance $\BSigma_{E}$; $\bzero_q$ here denotes a vector of $q$ zeros. 

From the assumed statistical model it follows that $Z_i$ conditioned on $\BTheta=\btheta$ is distributed like $E_i$, which leads to the following measurement likelihood function:
\begin{equation}\label{eq:likelihood_z_full_model}
	L(\btheta | \bz) \coloneqq p_{E}(\bz - \bff(\btheta)) = \cN_q(\bz-\bff(\btheta) \,|\, \bzero_q, \BSigma_{E}).
\end{equation}
Here, we use the notation $L(\btheta | \bz) = p(\bz | \btheta)$ as in \cite[Section~6.3]{ELLehmann_GCasella_1998a}, i.e., the likelihood is the density of the data considered as a function of the parameters $\btheta$ for fixed $\bz$.

\noindent
Assuming the Bayesian framework, any prior information on the parameters $\btheta$ is encoded in the prior density function $p(\btheta)$. Given the prior and the observed measurements $\bz$, the solution of the inverse problem is the posterior density obtained by applying Bayes' rule:
\begin{equation} \label{eq:bayes_rule_for_theta}
	p(\btheta | \bz) \propto L(\btheta | \bz)p(\btheta).
\end{equation}
The posterior density usually does not have a closed form solution and is explored via sampling, for example, with MCMC methods \cite{JSLiu_2001a}. Applying MCMC requires repeated evaluations of the likelihood function $L(\btheta | \bz)$. Since these evaluations involve computing the forward model $\bff(\btheta)$, they are expensive, making direct application of MCMC methods infeasible. The goal of the next sections is to develop a method for approximating the forward model $\bff(\btheta)$ with a Bayesian surrogate that  allows an efficient and accurate computation of the likelihood function $L(\btheta|\bz)$. We use a Gaussian process (GP) model as the surrogate model. Before describing the proposed method, we review the standard GP methodology in the next section.

%

\section{Gaussian Process models}\label{sec:GPmodels}
In this section, we describe a GP model for the single and the multiple output cases followed by the Bayesian inference with GP models. 
\subsection{Single output case} We start with a one-dimensional GP model for the case of a single output $f(\btheta)$. Formally, a Gaussian process model $f^{GP}(\btheta)$ is a collection of random variables such that any finite number of them has a joint Gaussian distribution \cite[Chapter~2]{CERasmussen_CKIWilliams_2006a}. This distribution is characterized by its mean and covariance functions. In the following, we take the mean function to be zero, and we let the covariance function be the squared exponential:
\begin{equation}\label{eq:squared_exponential_covariance}
	\text{cov}\big(f^{GP}(\btheta), f^{GP}(\btheta')\big) = c(\btheta, \btheta') \coloneqq \sigma_c^2 \exp\left[ - \sum_{i=1}^p \frac{(\theta_i - \theta_i')^2}{\ell_i^2} \right].
\end{equation}
This covariance function expresses prior information about $f^{GP}(\btheta)$: it prescribes the common variance $\sigma_c^2$ to the values at different $\btheta$ and expresses correlations through the distance between inputs weighted by characteristic length-scales $\ell_i$ in each input dimension $i=1,\dots,p$. We will denote the vector of the parameters of the covariance function by $\bpsi$,
\[
	\bpsi = (\sigma_c, \ell_1, \dots, \ell_p)^T \in \real^{p+1},
\]
and refer to them as \textit{hyperparameters}. 
We will write the covariance function in the form $c(\btheta, \btheta'; \bpsi)$ to emphasize its dependence on the hyperparameters $\bpsi$.

The choice of the zero mean function does not affect our methodology, but is assumed for convenience. In practice, it is common to model the mean using a fixed basis which leads to the introduction of additional regression parameters, see, e.g., \cite{MCKennedy_AOHagan_2001a}. The modeling choice of the covariance function $c(\cdot, \, \cdot)$ is usually more important as it encodes certain assumptions on the smoothness of $f(\btheta)$. We choose the squared exponential covariance \eqref{eq:squared_exponential_covariance} for reasons of its interpretability and widespread use. However, depending on the application problem and the underlying physical process at hand, other choices might be more appropriate, see, e.g., \cite[Section~4.2]{CERasmussen_CKIWilliams_2006a}.

Besides specifying the mean and the covariance functions, constructing a GP model requires choosing a set of input parameter values for training: $\btheta^{(j)}_{train}$, $j=1,\dots, n_{train}$. Together with the corresponding values of the forward model,  we form the training set $\cD$:
\[
	\cD \coloneqq \big\{ \btheta^{(j)}_{train}, f\big(\btheta^{(j)}_{train}\big) \big\}_{j=1}^{n_{train}}.
\]
Given the training set $\cD$ and the hyperparameters $\bpsi$, the distribution of $f^{GP}(\btheta)$ at a test input $\btheta$ is given by
\begin{equation}\label{eq:gp_pred_posterior_single}
	p\big(f^{GP} \,\big|\, \btheta, \cD, \bpsi\big) = \cN\big(f^{GP} \,\big|\, m(\btheta; \cD, \bpsi), \V(\btheta; \cD, \bpsi)\big),
\end{equation}
with the mean and the variance given by
\begin{subequations}
\begin{align}
	m(\btheta; \cD, \bpsi) &= \bc_\psi^T (\BC_\psi)^{-1} \by, \label{eq:gp_pred_mean}\\ 
	\V(\btheta; \cD, \bpsi) &= c(\btheta, \btheta; \bpsi) - \bc_\psi^T(\BC_\psi)^{-1}\bc_\psi. \label{eq:gp_pred_var}
\end{align}
\end{subequations}
Here, $\bc_\psi = \big(c\big(\btheta, \btheta_{train}^{(1)}; \bpsi\big), \dots, c\big(\btheta, \btheta_{train}^{(n_{train})}; \bpsi\big)\big)^T \in\real^{n_{train}}$ is the vector of covariances between the test input $\btheta$ and the inputs in the training set given the hyperparameters $\bpsi$, $\by = \big( f\big(\btheta_{train}^{(1)}\big), \dots, f\big(\btheta_{train}^{(n_{train})}\big) \big)^T$, and $\BC_\psi\in\real^{{n_{train}}\times{n_{train}}}$ is the matrix of covariances between the inputs in the training set given the hyperparameters $\bpsi$:
 \[
 	(\BC_\psi)_{i,j} = c\big(\btheta_{train}^{(i)}, \btheta_{train}^{(j)}; \bpsi\big),\ i, j=1,\dots, n_{train}.
 \]
To train a GP model means to prescribe the hyperparameters $\bpsi$ using the training set $\cD$. This is commonly done using the evidence framework \cite{DJCMacKay_1999a}: hyperparameters $\bpsi$ are chosen by maximizing the logarithm of the marginal likelihood (or evidence) of the training values:
\begin{equation}\label{eq:max_log_marg_psi}
	\bpsi^* = \argmax\limits_{\ensuremath{\boldsymbol{\psi}}\in\cB_\psi} \,\log L(\bpsi | \cD),
\end{equation}
where $\cB_\psi$ is a compact subset of $\real^{p+1}$, and
\begin{equation} \label{eq:evidence_single_out}
	L(\bpsi | \cD) = \cN_{n_{train}}( \by \,|\, \bzero_{n_{train}}, \BC_\psi ).
\end{equation}

In practice, the maximization of the log-marginal likelihood in \eqref{eq:max_log_marg_psi} is performed by using a multi-start strategy to avoid getting trapped in local maxima. For a small number of training inputs, the slope of the log-marginal likelihood can be very low leading to multiple  hyperparameter values being consistent with the training data. In such cases, choosing the hyperparameter values by maximizing the log-marginal likelihood can become unreliable and produce estimates with high empirical variances \cite{DGinsbourger_DDupuy_ABadea_LCarraro_ORoustant_2009a}. Furthermore, the predictive variance of the GP model \eqref{eq:gp_pred_var} with the plug-in estimator \eqref{eq:max_log_marg_psi} is known to underestimate the true mean-squared prediction error of the model \cite{DLZimmerman_NCressie_1992a}.

An alternative way of training a GP model is to adopt a fully Bayesian perspective. Instead of taking the point-estimate of the hyperparameters as in \eqref{eq:max_log_marg_psi}, we can condition the predictive distribution \eqref{eq:gp_pred_posterior_single} on the hyperparameter distribution. 
It has been reported that the fully Bayesian approach gives wider confidence bounds than predictors based on plug-in estimators, thus, better accounting for the uncertainty about the covariance function \cite{MSHandcock_MLStein_1993a}. In the following we adopt the fully Bayesian approach to  the GP model training. 
That is, we specify a prior on the hyperparameters, $p(\bpsi)$, and use the likelihood function $L(\bpsi | \cD)$  from \eqref{eq:evidence_single_out} to obtain the hyperparameter posterior using MCMC methods:
 \[
	p(\bpsi | \cD) \propto  L(\bpsi | \cD) p(\bpsi).
\]
 As a result, we obtain an ensemble of samples of the hyperparameter vector $\bpsi$ distributed according to $p(\bpsi| \cD)$: $\{\bpsi^{(j)}\}_{j=1}^{n_\psi}$.

The predictive distribution of the GP model at a test point $\btheta$ can then be obtained by marginalizing over (integrating out) the hyperparameters:
\[
	p\big(f^{GP} \,\big|\, \btheta, \cD\big) = \int p\big(f^{GP} \,\big|\, \btheta, \cD, \bpsi\big) p(\bpsi | \cD) d\bpsi,
\]
where $p\big(f^{GP} \,\big|\, \btheta, \cD, \bpsi\big)$ is given by \eqref{eq:gp_pred_posterior_single}. Using the samples of the hyperparameter posterior computed with MCMC, this integral can be discretized as follows:
\begin{equation} \label{eq:discrete_approx_gp_posterior_mcmc}
	p\big(f^{GP} \,\big|\, \btheta,\cD\big) \approx \frac{1}{n_\psi} \sum_{j=1}^{n_{\psi}} p\big(f^{GP} \,\big|\, \btheta, \cD, \bpsi^{(j)}\big).
\end{equation}
Thus, we obtain a Gaussian mixture model of the predictive distribution. The mean of this model is simply the average of the means from \eqref{eq:gp_pred_mean} with $\bpsi = \bpsi^{(j)}$,
\begin{equation}\label{eq:mean_mixture_single_out}
	m(\btheta; \cD) = \frac{1}{n_\psi}\sum_{j=1}^{n_\psi} m\big(\btheta; \cD, \bpsi^{(j)}\big),
\end{equation}
and the variance can be obtained using $\V(\btheta; \cD, \bpsi)$ with $\bpsi=\bpsi^{(j)}$ from \eqref{eq:gp_pred_var} as follows:
\begin{align}
\begin{split}\label{eq:var_mixture_single_out}
	\V(\btheta; \cD) &= \frac{1}{n_\psi} \sum_{j=1}^{n_\psi} \V\big(\btheta; \cD, \bpsi^{(j)}\big) \\
	&+ \frac{1}{n_\psi}\sum_{j=1}^{n_\psi} \big( m\big(\btheta; \cD, \bpsi^{(j)} \big) \big)^2 - \bigg( \frac{1}{n_\psi}\sum_{j=1}^{n_\psi} m\big(\btheta; \cD, \bpsi^{(j)}\big)  \bigg)^2.
\end{split}
\end{align}

\subsection{Multiple output case} Gaussian process models for the multi-output case $q > 1$ are discussed in detail in \cite{SConti_AOHagan_2010a}, where the authors employ a matrix-normal distribution for the training data to account for possible correlations between the outputs. 
In our work, we take a simplified approach from \cite{IBilionis_NZabaras_2012a} and \cite{IBilionis_NZabaras_2014a} that assumes that the outputs are conditionally independent given the covariance function. This treatment of the multi-output case is similar to \cite{SConti_AOHagan_2010a} if a diagonal correlation matrix and a constant mean are used, and is based on an assumption that the regularity of the outputs is approximately the same. In order to account for potentially different scales of the outputs, we normalize the training outputs as described below.

Let $f_i(\btheta)$ denote the $i$-th output of the forward model, $i=1,\dots, q$. Now, for the same set of training inputs $\btheta_{train}$, we have $q$ sets of output values which we will denote by $\cD_i$:
\[
	\cD_i = \big\{ \btheta_{train}^{(j)}, f_i\big(\btheta_{train}^{(j)}\big) \big\}_{j=1}^{n_{train}}, \quad i=1,\dots, q.
\]
We will write $\cD=\cup_{i=1}^{\,q} \cD_i$.
Assuming conditionally independent outputs, the marginal likelihood of the training outputs becomes
 \begin{equation}\label{eq:evidence_mult_out}
 	L(\bpsi | \cD) = \prod_{i=1}^q L(\bpsi | \cD_i ),
 \end{equation}
 with the one-dimensional likelihoods given by
 \[
 	L(\bpsi | \cD_i) = \cN_{n_{train}}(\by_i \,|\, \bzero_{n_{train}}, \BC_\psi ).
 \]
 Here, the training outputs $\by_i$ represent scaled responses:
 \begin{equation}\label{eq:scaled_outputs_mult_case}
 	\by_i = \big( \widehat{f}_i\big(\btheta_{train}^{(1)}), \dots, \widehat{f}_i\big(\btheta_{train}^{(n_{train})}\big) \big)^T
 \end{equation}
 with
 \[
 	\widehat{f}_i(\btheta) = \frac{f_i(\btheta) - m_i}{{\V_i}^{1/2}},
 \]
 where
 \[
 	m_i = \frac{1}{n_{train}}\sum_{j=1}^{n_{train}}f_i\big(\btheta_{train}^{(j)}\big), \quad \V\nolimits_i = \frac{1}{n_{train}} \sum_{j=1}^{n_{train}} \big(f_i\big(\btheta_{train}^{(j)}\big) - m_i\big)^2.
 \]
Similarly to the single-output case, the likelihood \eqref{eq:evidence_mult_out} is used to obtain the samples of the hyperparameter posterior $p(\bpsi | \cD)$: $\{\bpsi^{(j)}\}_{j=1}^{n_\psi}$. Under the standing assumption of conditional independence, 
\[
	p\big(\bff^{GP} \,\big|\, \btheta, \cD, \bpsi\big) = \prod_{i=1}^q p\big(f_i^{GP} \,\big|\, \btheta, \cD_i, \bpsi\big),
\]
which leads to the following predictive density of the combined output vector
\begin{align}\label{eq:pred_density_GP_vector}
\begin{split}
	p\big(\bff^{GP} \,\big|\, \btheta, \cD\big) &= \int p\big(\bff^{GP} \,\big|\, \btheta, \cD, \bpsi\big) p(\bpsi | \cD) d\bpsi \\
	&\approx \frac{1}{n_\psi} \sum_{j=1}^{n_\psi} \prod_{i=1}^q p\big(f_i^{GP} \,\big|\, \btheta, \cD_i, \bpsi^{(j)}\big).
\end{split}
\end{align}
In a compact form, \eqref{eq:pred_density_GP_vector} can be written as a mixture of $q$-variate Gaussians:
\begin{equation} \label{eq:pred_density_GP_vector_compact}
	p\big(\bff^{GP} \,\big|\, \btheta, \cD\big) \approx \frac{1}{n_\psi} \sum_{j=1}^{n_\psi} \cN_{q}\bigg( \bff^{GP} \,\big|\, \bm\big(\btheta; \cD, \bpsi^{(j)}\big), \BSigma_{GP}\big(\btheta; \cD, \bpsi^{(j)}\big)  \bigg),
\end{equation}
where for each $\bpsi=\bpsi^{(j)}$, $j=1,\dots, n_\psi$, (applying the necessary re-scaling)
\begin{equation}\label{eq:gp_vec_pred_mean}
	\bm(\btheta; \cD, \bpsi) = \big( {\V\nolimits_1}^{1/2}\cdot m(\btheta; \cD_1, \bpsi) + m_1, \dots, {\V\nolimits_q}^{1/2}\cdot m(\btheta; \cD_q, \bpsi) + m_q \big)^T,
\end{equation}
with $m(\btheta; \cD_i, \bpsi)$ as in \eqref{eq:gp_pred_mean}, and
\begin{equation}\label{eq:gp_vec_pred_cov}
	\BSigma_{GP}(\btheta; \cD, \bpsi) = \V(\btheta; \cD, \bpsi)\cdot\text{diag}\big[ \V\nolimits_1, \dots, \V\nolimits_q \big]\in \real^{q\times q}
\end{equation}
with $\V(\btheta; \cD, \bpsi)$ as in \eqref{eq:gp_pred_var}.
The mean of the mixture distribution \eqref{eq:pred_density_GP_vector_compact} is given by
\begin{equation}\label{eq:mean_mixture_vector}
	\bm(\btheta; \cD) = \frac{1}{n_\psi} \sum_{j=1}^{n_\psi} \bm\big(\btheta; \cD, \bpsi^{(j)}\big),
\end{equation}
and the covariance is given by
\begin{align*}
	\BSigma_{GP}(\btheta; \cD) &= \frac{1}{n_\psi} \sum_{j=1}^{n_\psi} \BSigma_{GP}\big(\btheta; \cD, \bpsi^{(j)}\big) + \frac{1}{n_\psi} \sum_{j=1}^{n_\psi} \bm\big(\btheta; \cD, \bpsi^{(j)}\big) \bm\big(\btheta; \cD, \bpsi^{(j)}\big)^T \\
	&- \bigg(\frac{1}{n_\psi}\sum_{j=1}^{n_\psi} \bm\big(\btheta; \cD, \bpsi^{(j)}\big)\bigg)\bigg(\frac{1}{n_\psi}\sum_{k=1}^{n_\psi} \bm\big(\btheta; \cD, \bpsi^{(k)}\big)\bigg)^T.
\end{align*}


\subsection{Bayesian inference with GP models}

Now, we re-formulate problem \eqref{eq:bayes_rule_for_theta} using the multi-output GP surrogate $\bff^{GP}$ with the predictive distribution given by \eqref{eq:pred_density_GP_vector_compact}. While the simplest approach would be to substitute $\bff(\btheta)$ in the likelihood definition $L(\btheta | \bz)$ in \eqref{eq:likelihood_z_full_model} with the mean vector in \eqref{eq:mean_mixture_vector}, such an approach would ignore the uncertainty of the surrogate. The availability of uncertainty estimates is the strength of the GP model and should therefore be exploited. Hence, we follow the approach in \cite{IBilionis_NZabaras_2014a} and substitute $L(\btheta | \bz)$ in \eqref{eq:likelihood_z_full_model} with $L(\btheta | \bz, \cD)$---the so-called \textit{$\cD$-restricted likelihood} function defined as follows: 
\begin{equation}\label{eq:D_restricted_likelihood}
	L(\btheta | \bz, \cD) \coloneqq \int L\big(\btheta \big| \bz, \bff^{GP}\big) p\big(\bff^{GP} \big| \btheta, \cD\big) d\bff^{GP},
\end{equation}
where $L\big(\btheta \big| \bz, \bff^{GP}\big)$ is the likelihood from \eqref{eq:likelihood_z_full_model} evaluated with $\bff^{GP}(\btheta)$ instead of $\bff(\btheta)$:
\begin{equation}\label{eq:likelihood_z_gp_model}
	L\big(\btheta \big| \bz, \bff^{GP}\big) \coloneqq \cN_q\big(\bz-\bff^{GP}(\btheta) \big|\, \bzero_q, \BSigma_{E}\big).
\end{equation}
Next, we plug in the mixture approximation of $p\big(\bff^{GP} \big| \btheta, \cD\big)$ from \eqref{eq:pred_density_GP_vector_compact} and the likelihood $L\big(\btheta \big| \bz, \bff^{GP}\big)$ from \eqref{eq:likelihood_z_gp_model} into \eqref{eq:D_restricted_likelihood} and integrate the product of the two Gaussians:
\begin{align}\label{eq:D_restricted_likelihood_approx}
\begin{split}
	L&(\btheta | \bz, \cD)  \\
	&\approx \frac{1}{n_\psi} \sum_{j=1}^{n_\psi} \int \cN_q\big(\bff^{GP}\big|\, \bz, \BSigma_{E}\big) \cN_q\bigg( \bff^{GP} \big|\, \bm\big(\btheta; \cD, \bpsi^{(j)}\big), \BSigma_{GP}\big(\btheta; \cD, \bpsi^{(j)}\big)  \bigg) d\bff^{GP} \\
	&= \sum_{j=1}^{n_\psi} \frac{k^{(j)}}{n_\psi} \exp\bigg[ - \frac{(\bz - \bm(\btheta; \cD, \bpsi))^T (\BSigma_{E} + \BSigma_{GP}(\btheta; \cD, \bpsi) )^{-1} (\bz - \bm(\btheta; \cD, \bpsi) )}{2} \bigg]
	\end{split}
\end{align}
with $\bpsi = \bpsi^{(j)}$ and $k^{(j)}=(2\pi)^{-q/2}\big|\BSigma_{E} + \BSigma_{GP}\big(\btheta; \cD, \bpsi^{(j)}\big)\big|^{-1/2}.$
The obtained $\cD$-restricted likelihood function incorporates both the measurement errors and the uncertainty of the GP model. Once the posterior samples of the hyperparameters $\{\bpsi^{(j)}\}_{j=1}^{n_\psi}$ are obtained, the approximation of $L(\btheta | \bz, \cD)$ in \eqref{eq:D_restricted_likelihood_approx} at a given test input $\btheta$ requires $\cO(n_{train}^3)$ operations for each $\bpsi^{(j)}$ for computing the predictive means and covariances \eqref{eq:gp_vec_pred_mean}--\eqref{eq:gp_vec_pred_cov}, and $\cO(q^3)$ for inverting the sum of the noise and the GP covariances, bringing the total cost to $\cO(n_\psi\cdot(n_{train}^3 + q^3))$ operations. In the special case of uncorrelated measurement noise and with the assumption of conditional independence of the outputs that we make, the cost of inverting $(\BSigma_{E}+\BSigma_{GP})$ becomes $\cO(q)$ instead of $\cO(q^3)$. The dominant cost then becomes that of computing the predictive means and the variances. In our target  applications, this cost is negligible since the number of training inputs $n_{train}$ is small and the cost of the forward model evaluation is large. In the numerical examples, we consider $\cO(10)$ training inputs. In general, the choice of $n_{train}$ is motivated by design considerations and may depend on the smoothness of the forward model mapping and on the dimension $p$ of the input space. For cases in which  the number of training inputs is large, various approximations of the covariance matrix $\BC_\psi$ can be considered, see, for example \cite[Chapter~8]{CERasmussen_CKIWilliams_2006a}. 


Analogously to \eqref{eq:bayes_rule_for_theta}, the $\cD$-restricted likelihood $L(\btheta | \bz, \cD)$ leads to the \textit{$\cD$-restricted posterior} $p(\btheta | \bz, \cD)$:
\begin{equation} \label{eq:bayes_rule_for_theta_GP}
	p(\btheta | \bz, \cD) \propto L(\btheta | \bz, \cD)p(\btheta).
\end{equation}

Next, we analyze the approximation of $L(\btheta|\bz, \cD)$ in \eqref{eq:D_restricted_likelihood_approx} and develop a sequential adaptive strategy for selecting training inputs based on the current data $\cD$.

%

\section{Adaptive construction of GP models}\label{sec:adaptiveGP}

Given the current training set $\cD$, it is of interest  how to select additional training inputs in order to make the GP-based likelihood $L(\btheta | \bz, \cD)$ more accurately represent the unattainable (due to its cost) ``true'' likelihood $L(\btheta | \bz)$. In particular, we would like $L(\btheta | \bz, \cD)$ to correctly capture the modes of the true likelihood $L(\btheta | \bz)$. Therefore, we attempt to find the minima of the ``true'' misfit function
\begin{equation}\label{eq:misfit_function_true}
	g(\btheta) \coloneqq (\bz - \bff(\btheta))^T \BSigma_E^{-1}(\bz - \bff(\btheta)).
\end{equation}
We start by defining the misfit function of the $\cD$-restricted likelihood \eqref{eq:D_restricted_likelihood_approx}:
\begin{equation}\label{eq:misfit_function_GP}
	g(\btheta; \cD, \bpsi) \coloneqq (\bz - \bm(\btheta; \cD, \bpsi))^T (\BSigma_{E} + \BSigma_{GP}(\btheta; \cD, \bpsi) )^{-1} (\bz - \bm(\btheta; \cD, \bpsi) ).
\end{equation}
With this definition we can re-write \eqref{eq:D_restricted_likelihood_approx} as
\[
	L(\btheta |\bz, \cD) \approx \sum_{j=1}^{n_\psi}\frac{k^{(j)}}{n_\psi} \exp\left[ - \frac{1}{2} g\big(\btheta; \cD, \bpsi^{(j)}\big) \right].
\]
The important properties of $g(\btheta; \cD, \bpsi)$ are summarized in the following proposition.


\begin{proposition}\label{prop:properties_of_GP_misfit}
The misfit function $g(\btheta; \cD, \bpsi)$ in \eqref{eq:misfit_function_GP} has the following properties:
\begin{enumerate}[label=\alph*)]
\item it interpolates the true misfit function $g(\btheta)$ at the inputs in the training set $\cD$;
\item it is continuously differentiable with respect to $\btheta$.
\end{enumerate}
\end{proposition}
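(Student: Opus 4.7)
The plan is to verify both properties directly from the defining formulas \eqref{eq:misfit_function_GP}, \eqref{eq:gp_vec_pred_mean}, \eqref{eq:gp_vec_pred_cov}, and the underlying scalar formulas \eqref{eq:gp_pred_mean}--\eqref{eq:gp_pred_var}. For (a), the key observation is that the training data $\cD$ are noise-free, so the underlying GP is an interpolant at each training input; for (b), I would appeal to the $C^\infty$ smoothness of the squared exponential covariance \eqref{eq:squared_exponential_covariance} and propagate it through the algebraic operations that assemble $g(\btheta; \cD, \bpsi)$.

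For (a), the route is to evaluate at $\btheta = \btheta_{train}^{(k)}$ for any $k \in \{1,\dots,n_{train}\}$ and observe that $\bc_\psi$ then coincides with the $k$-th column of the Gram matrix $\BC_\psi$, so $(\BC_\psi)^{-1}\bc_\psi$ is the $k$-th standard basis vector. Substituting this into \eqref{eq:gp_pred_var} collapses the scalar predictive variance to $c(\btheta_{train}^{(k)}, \btheta_{train}^{(k)}; \bpsi) - c(\btheta_{train}^{(k)}, \btheta_{train}^{(k)}; \bpsi) = 0$, and hence $\BSigma_{GP}(\btheta_{train}^{(k)}; \cD, \bpsi) = \bzero$ via \eqref{eq:gp_vec_pred_cov}; the same substitution in \eqref{eq:gp_pred_mean}, followed by the re-scaling in \eqref{eq:gp_vec_pred_mean}, yields $\bm(\btheta_{train}^{(k)}; \cD, \bpsi) = \bff(\btheta_{train}^{(k)})$ componentwise. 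Inserting these two equalities into \eqref{eq:misfit_function_GP} collapses the GP misfit to $(\bz - \bff(\btheta_{train}^{(k)}))^T \BSigma_E^{-1} (\bz - \bff(\btheta_{train}^{(k)})) = g(\btheta_{train}^{(k)})$, which is the interpolation claim.

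For (b), the squared exponential covariance \eqref{eq:squared_exponential_covariance} is $C^\infty$ in its first argument, so the vector $\bc_\psi$ is smooth in $\btheta$ while $\BC_\psi$ does not depend on $\btheta$ at all. Formulas \eqref{eq:gp_pred_mean}--\eqref{eq:gp_pred_var} then show that $m(\btheta;\cD_i,\bpsi)$ and $\V(\btheta;\cD,\bpsi)$ are smooth, and hence so are $\bm(\btheta; \cD, \bpsi)$ and $\BSigma_{GP}(\btheta; \cD, \bpsi)$. Because $\BSigma_E$ is positive definite and $\BSigma_{GP}(\btheta; \cD, \bpsi)$ is positive semidefinite (being a nonnegative scalar times a diagonal of variances), the sum $\BSigma_E + \BSigma_{GP}(\btheta; \cD, \bpsi)$ stays positive definite for every $\btheta$, so its inverse depends smoothly on $\btheta$ (Cramer's rule). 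The misfit $g(\btheta; \cD, \bpsi)$ is then a quadratic form in smooth ingredients, so it is $C^\infty$, in particular $C^1$.

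I do not anticipate any serious obstacle: part (a) rests on the absence of a nugget in $\BC_\psi$, i.e., noise-free training data, which is the setting adopted throughout Section~\ref{sec:GPmodels}, and part (b) only requires a uniform lower bound on the spectrum of $\BSigma_E + \BSigma_{GP}(\btheta; \cD, \bpsi)$, which is automatic from the positive-definiteness of $\BSigma_E$. The bookkeeping for the multi-output case reduces to applying the single-output argument componentwise under the conditional-independence assumption from the multi-output subsection.
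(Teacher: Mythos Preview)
Your proposal is correct and follows essentially the same route as the paper: part (a) via the GP interpolation property at noise-free training inputs (you spell out the $(\BC_\psi)^{-1}\bc_\psi = \be_k$ argument where the paper simply cites it), and part (b) via the $C^\infty$ smoothness of the squared exponential covariance propagated through the algebraic formulas. Your treatment is in fact more thorough than the paper's, since you also justify the smooth invertibility of $\BSigma_E + \BSigma_{GP}(\btheta;\cD,\bpsi)$, a point the paper leaves implicit.
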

\begin{proof}
\noindent
\begin{enumerate}[label=\alph*)]
\item Observe that for each output $i$, $m\big(\btheta^{(j)}_{train}; \cD_i, \bpsi\big) = \widehat{f}_i\big(\btheta^{(j)}_{train}\big)$ and $\V\big(\btheta^{(j)}_{train}; \cD, \bpsi\big)=0$  for $j=1,\dots,n_{train}$ due to the interpolative properties of the GP model (see \cite[Section~4.1]{TJSantner_BJWilliams_WINotz_2003a}). Then, from \eqref{eq:gp_vec_pred_mean} and \eqref{eq:gp_vec_pred_cov} we have $\bm\big(\btheta^{(j)}_{train}; \cD, \bpsi\big)=\bff\big(\btheta^{(j)}_{train}\big)$ and $\BSigma_{GP}\big(\btheta^{(j)}_{train}; \cD, \bpsi\big) \equiv \bzero_{q\times q}$. Hence,
\[
	g\big(\btheta^{(j)}_{train}; \cD, \bpsi\big) = g\big(\btheta^{(j)}_{train}\big), \quad \text{for } j=1,\dots,n_{train}, \text{ and all } \bpsi.
\]
\item The predictive mean and variance of a Gaussian process inherit their smoothness properties from the underlying covariance function $c(\btheta, \btheta')$. The squared exponential function \eqref{eq:squared_exponential_covariance} considered here is in fact infinitely differentiable.
\end{enumerate}
\end{proof}
Outside of the training set $\cD$, $g(\btheta; \cD, \bpsi)$ provides an estimate of the misfit between the GP model with the hyperparameter vector $\bpsi$ and the measurement data $\bz$.
By treating $g(\btheta; \cD, \bpsi)$ as a random function of $\bpsi$ for a given test input $\btheta$ with distribution induced by $p(\bpsi | \cD)$,  we can explore the minima of $g(\btheta)$ using an auxiliary ``acquisition function''. Specifically, we employ the expected improvement idea from Bayesian optimization \cite{DRJones_MSchonlau_WJWelch_1998a}. 

Denote the best (i.e., the smallest) misfit value for the points in the training set as
\begin{equation}\label{eq:g_min_value}
	g_{min} \coloneqq \min\big\{ g\big(\btheta^{(j)}_{train}\big) \,\big|\, j=1,\dots, n_{train} \big\}.
\end{equation}
Consider the following problem:
\begin{equation}\label{eq:expected_improvement_max_problem}
	\max\limits_{\ensuremath{\boldsymbol{\theta}}\in\cB_\theta} \cI(\btheta) \coloneqq \frac{1}{n_\psi} \sum_{j=1}^{n_\psi} \left[ g_{min} - g\big(\btheta; \cD, \bpsi^{(j)}\big) \right]^+,
\end{equation}
where $[\,\cdot\,]^+$ denotes the positive part function, $[\,\cdot\,]^+\coloneqq\max\{ \,\cdot\,, 0 \}$, and $\cB_\theta$ is a closed and bounded subset of $\real^p$.
The idea behind formulation \eqref{eq:expected_improvement_max_problem} is to find the input that offers the largest \textit{expected improvement in the fit} to the measurement data under the mixture GP model conditioned on the misfit being smaller than the current best  true misfit value. More specifically, consider the following \textit{relative improvement in fit} function:
\[
	\cI_{rel}(\btheta; \bpsi) \coloneqq \bigg[1 - \frac{g(\btheta; \cD, \bpsi)}{g_{min}}\bigg]^+.
\]
For fixed $\bpsi$ and $\btheta$, if $g(\btheta; \cD, \bpsi) \geq g_{min}$, the improvement in fit is zero---the misfit between the GP model corresponding to $\bpsi$ and the measurement data $\bz$ at a given input $\btheta$ is the same or larger than the current best misfit value. This is reflected in $\cI_{rel}(\btheta; \bpsi) = 0$. The maximum achievable relative improvement value is $\cI_{rel}(\btheta; \bpsi) = 1$, which corresponds to the case when the GP model with a fixed $\bpsi$ takes the exact value of the measurement data $\bz$ at a given $\btheta$. The fact that we have a mixture of GP models for a given $\btheta$ induced by $p(\bpsi | \cD)$ means that, for the same $\btheta$, $\cI_{rel}(\btheta; \bpsi)$ can take a range of values from $0$ to $1$. By taking the average of the relative improvement values at a given $\btheta$, we obtain an ``expected improvement'' under the current GP mixture model. This motivates the formulation in \eqref{eq:expected_improvement_max_problem} (with the  objective multiplied by $g_{min}$). By adding a maximizer of \eqref{eq:expected_improvement_max_problem} to the training set $\cD$, we strive to improve our GP model in a way that improves the likelihood approximation $L(\btheta | \bz, \cD)$. 

The strategy described above is a greedy one-step look-ahead strategy that is focused on finding the modes of the likelihood function $L(\btheta | \bz)$. This strategy is designed to explore the parameter space globally just enough to make sure that the GP-based likelihood $L(\btheta | \bz, \cD)$ does not have modes in the regions where the true likelihood is flat, and to generate a sufficient number of training inputs locally around the regions of the modes of $L(\btheta | \bz)$ where the GP accuracy is  needed the most. Thus, our strategy balances exploration and exploitation---the global search  reduces the uncertainty in the GP model while  the local search samples in regions where the data misfit is likely to be minimized. The pseudocode of the full algorithm is described in Algorithm \ref{algo:adaptive_GP}.
 

\begin{algorithm}[!htb]
\caption{Adaptive construction of GP surrogate for likelihood estimation}
\begin{algorithmic}[1]\label{algo:adaptive_GP}
	\REQUIRE Initial design $\big\{\btheta^{(j)}_{train}\big\}_{j=1}^{n_{train}}$, threshold value $\epsilon_{thresh}$, search space $\cB_\theta$,\newline maximum number of forward model evaluations $n_{max}$.
	\ENSURE Surrogate-based $\cD$-restricted likelihood $L(\btheta | \bz, \cD)$.
	
	\STATE Evaluate $\bff(\btheta)$ for $\btheta\in\big\{\btheta^{(j)}_{train}\big\}_{j=1}^{n_{train}}$ to obtain $\cD = \big\{\btheta^{(j)}_{train}, \bff\big(\btheta^{(j)}_{train}\big)\big\}_{j=1}^{n_{train}}$.
	\FOR{$k$ from $1$ to $n_{max}$}
	\STATE using $L(\bpsi | \cD)$ from \eqref{eq:evidence_mult_out}, run MCMC to obtain hyperparameter samples $\{\bpsi^{(j)}\}_{j=1}^{n_\psi}$ from the posterior distribution $p(\bpsi | \cD)$;
    \STATE evaluate $g_{min}$ as in \eqref{eq:g_min_value};
	\STATE solve maximization problem \eqref{eq:expected_improvement_max_problem}, and let $\btheta^{(k)}=\argmax\limits_{\ensuremath{\boldsymbol{\theta}}\in\cB_\theta} \cI(\btheta)$;
	\IF{$\cI\big(\btheta^{(k)}\big) < \epsilon_{thresh}\cdot g_{min}$} 
	\STATE \textbf{break}
	\ENDIF
    \STATE evaluate forward model $\bff$ at $\btheta^{(k)}$ and augment training set: $\cD = \cD\cup\big\{\btheta^{(k)}, \bff\big(\btheta^{(k)}\big)\big\}$;
	\ENDFOR
	\RETURN $L(\btheta | \bz, \cD)$ as in \eqref{eq:D_restricted_likelihood_approx}.
\end{algorithmic}
\end{algorithm}

The relative improvement function $\cI_{rel}(\btheta; \bpsi)$ motivates a natural stopping criterion for our algorithm. Specifically, we terminate the algorithm if the maximum average relative improvement value is smaller than some threshold $\epsilon_{thresh}$. In the numerical examples, we set this threshold to be $1\%$. For the formulation \eqref{eq:expected_improvement_max_problem} that we use for the solution, this means that we terminate when the objective value $\cI(\btheta)$ is less than or equal to $\epsilon_{thresh}\cdot g_{min}$.

%
%
%
%
%
%

\subsection{Difference with the original expected improvement criterion}\label{sec:difference_with_EGO}

In the original paper \cite{DRJones_MSchonlau_WJWelch_1998a} that popularized the expected improvement idea,  this criterion was applied to the problem of finding the global minimum of a model function approximated by a Gaussian process. Since integration was performed with respect to a Gaussian variable, the expected improvement function could be derived in a closed form. It was then maximized using a branch-and-bound algorithm. The closed form expression from \cite{DRJones_MSchonlau_WJWelch_1998a} does not apply in our case since the distribution of the misfit function $g(\btheta; \cD, \bpsi)$ at a given $\btheta$ is not normal but determined by $p(\bpsi | \cD)$. In this sense, our expected improvement in fit criterion is different from the expected improvement in the Bayesian optimization literature where the hyperparameters are usually fixed prior to computing the expectation with respect to the random GP variable. A notable exception to that is approach taken in \cite{JSnoek_HLarochelle_RPAdams_2012a} where the original expected improvement criterion is marginalized over the hyperparameter distribution.

\subsection{Analysis of the proposed algorithm}

Problem \eqref{eq:expected_improvement_max_problem} is well-defined: the misfit function $g(\btheta; \cD, \bpsi)$ is continuous, see part (b) of Proposition \ref{prop:properties_of_GP_misfit}, and the maximization is performed over a compact set $\cB_\theta$. Thus, there exists a solution to the problem \eqref{eq:expected_improvement_max_problem}. In case of multiple maximizers, we select any one of them; we will consider the simultaneous selection of multiple maximizers in the future.

Furthermore, observe that, as Algorithm \ref{algo:adaptive_GP} progresses, the value of $g_{min}$ either decreases or stays the same. The optimal value $\cI\big(\btheta^{(k)}\big)$ might not decrease with every iteration; however, as our numerical experiments in Section \ref{sec:numericalExp} demonstrate, with the addition of new training points, it does decrease and eventually falls below the threshold value. The following simple observation ensures that new information about the forward model is obtained in every iteration.

\begin{proposition} At each iteration $k$, if the new training input $\btheta^{(k)}$ selected by Algorithm \ref{algo:adaptive_GP} is added to the training set $\cD$, it is necessarily distinct from the other points in $\cD$.
\end{proposition}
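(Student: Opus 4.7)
The plan is to show that the expected improvement function $\cI$ vanishes identically on the current training set $\cD$, so any point selected by step 5 of Algorithm \ref{algo:adaptive_GP} that passes the threshold test in step 6 must lie outside $\cD$.

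First I would evaluate $\cI$ at an arbitrary existing training input $\btheta^{(j)}_{train} \in \cD$. By part (a) of Proposition \ref{prop:properties_of_GP_misfit}, the GP-based misfit interpolates the true misfit at training inputs, i.e.\
\[
  g\bigl(\btheta^{(j)}_{train}; \cD, \bpsi\bigr) \;=\; g\bigl(\btheta^{(j)}_{train}\bigr)
\]
for every hyperparameter sample $\bpsi$. Combined with the definition \eqref{eq:g_min_value} of $g_{min}$ as the minimum of $g$ over the training set, this gives $g\bigl(\btheta^{(j)}_{train};\cD,\bpsi\bigr) \ge g_{min}$, so each summand $\bigl[g_{min} - g(\btheta^{(j)}_{train};\cD,\bpsi^{(j)})\bigr]^+$ in \eqref{eq:expected_improvement_max_problem} is zero. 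Therefore $\cI(\btheta^{(j)}_{train}) = 0$ for every $\btheta^{(j)}_{train} \in \cD$.

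Next I would combine this with the stopping condition. The selected maximizer $\btheta^{(k)}$ is added to $\cD$ only if the break condition in step 6 is not triggered, i.e.\ only if
\[
  \cI\bigl(\btheta^{(k)}\bigr) \;\geq\; \epsilon_{thresh}\cdot g_{min} \;>\; 0,
\]
assuming $g_{min} > 0$ (which holds as soon as the forward model does not reproduce the data exactly at some training input; in the trivial case $g_{min}=0$ the inference problem is already solved and the algorithm terminates). Since $\cI$ vanishes on every point of $\cD$ but is strictly positive at $\btheta^{(k)}$, we conclude $\btheta^{(k)} \notin \cD$.

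I do not anticipate a genuine obstacle: the argument is a direct consequence of the interpolation property established in Proposition \ref{prop:properties_of_GP_misfit}(a) and the non-negativity of the positive-part function. The only subtlety worth flagging is the degenerate case $g_{min}=0$, which is handled by observing that the acceptance threshold $\epsilon_{thresh}\cdot g_{min}$ then also equals zero and the algorithm terminates before any duplicate point could be added.
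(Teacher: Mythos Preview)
Your proposal is correct and follows essentially the same argument as the paper: both rely on Proposition~\ref{prop:properties_of_GP_misfit}(a) and the definition of $g_{min}$ to conclude that $\cI$ vanishes on $\cD$, so the stopping criterion in line~6 would be triggered before any duplicate point is added. The paper phrases this as a proof by contradiction while you argue directly, and you additionally flag the degenerate case $g_{min}=0$, which the paper leaves implicit; note, however, that with the strict inequality in line~6 the algorithm as written would \emph{not} break when $\cI(\btheta^{(k)})=0=\epsilon_{thresh}\cdot g_{min}$, so your remark that it ``terminates'' in that case is not quite accurate---this is a genuine (if practically irrelevant) edge case in the algorithm itself rather than a flaw in your reasoning.
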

\begin{proof}
Suppose that at a $k$-th iteration the maximizer $\btheta^{(k)}$ of the problem \eqref{eq:expected_improvement_max_problem} is already in the training set $\cD$. Due to part (a) of Proposition \ref{prop:properties_of_GP_misfit} and \eqref{eq:g_min_value}, for $\btheta\in\cD$ we have $g(\btheta; \cD, \bpsi^{(j)}) = g(\btheta)\geq g_{min}$ for $j=1,\dots,n_\psi$. Therefore,  $\cI\big(\btheta^{(k)}\big) = 0$, which means that the stopping criterion in line 6 of Algorithm \ref{algo:adaptive_GP} has been satisfied. The algorithm terminates without adding $\btheta^{(k)}$ to the training set $\cD$.
\end{proof}

If the hyperparameters $\bpsi$ of the GP covariance function were known and fixed, the predictive mean would be the best linear unbiased predictor and the mean squared prediction error, given by the predictive variance, would decrease with the addition of each new training point. As more data would be accumulated, under certain assumptions on the generating process, the mean of the GP model would converge to the model function and the variance would go to zero, see, for example, \cite{AMStuart_ALTeckentrup_2018a} for the case of stationary covariance functions with known hyperparameters.

In the case of hyperparameters estimated from the data as considered here, the predictive mean estimator is neither linear nor unbiased, therefore, it becomes difficult to make statements about the mean squared prediction error and the accuracy of its estimates.

\cite{EVazquez_JBect_2010a} have analyzed the convergence of the standard expected improvement algorithm of \cite{DRJones_MSchonlau_WJWelch_1998a} for the fixed Gaussian process prior. The results in this paper, while interesting from a theoretical point of view, are not applicable in practice since the prior on the GP model cannot be known in advance. Convergence rates of the expected improvement strategy for finding the global minimum of a function modeled by a Gaussian process with a fixed prior have been derived in \cite{ADBull_2011a}. The authors also extended their results for a case when parameters of the GP prior were estimated from the data by maximizing the marginal likelihood and provided an automatic choice of the parameters that retains the convergence rate of a fixed prior. In both papers, convergence results were stated in the norm of the Reproducing Kernel Hilbert Space associated with the chosen covariance function of a Gaussian process \cite[Section~6.1]{CERasmussen_CKIWilliams_2006a}. As explained in Section \ref{sec:difference_with_EGO}, we apply the expected improvement to the misfit function $g(\btheta; \cD, \bpsi)$, which depends on the mean and the variance of a Gaussian process but is not a Gaussian process itself. Thus, the above mentioned results cannot be directly applied in our case and further detailed analysis of the proposed algorithm is needed.  


%
%

\section{Numerical experiments}\label{sec:numericalExp}

In this section, we demonstrate our method on a one-dimensional model problem with a single output and on a two-dimensional source inversion problem with multiple outputs. Implementation details and additional experiments on a higher-dimensional problem are reported in the supplementary materials \ref{sec:implementation} and \ref{sec:permeability}.

%

\subsection{One-dimensional example}

We start by testing the proposed algorithm on a univariate scalar function. This allows us to illustrate the steps of the algorithm and to provide intuition behind it. We use the  forward model function
\begin{equation}\label{eq:1D_example_function}
	f(\theta) = \frac{\theta^2 - 5\theta + 6}{\theta^2 + 1}, \quad \theta\in[-6,+6].
\end{equation}
We generate measurement data $z$ by evaluating $f(\theta)$ at $\theta_{true}=2.41$ and adding zero-mean Gaussian noise with $\sigma = 0.01$. The function defined in \eqref{eq:1D_example_function} and the measurement $z$ are shown in Figure \ref{fig:1D_function}.

\begin{figure}[!htb]
    \centering
    \begin{subfigure}[t]{0.45\textwidth}
        \centering
        \includegraphics[width=\textwidth]{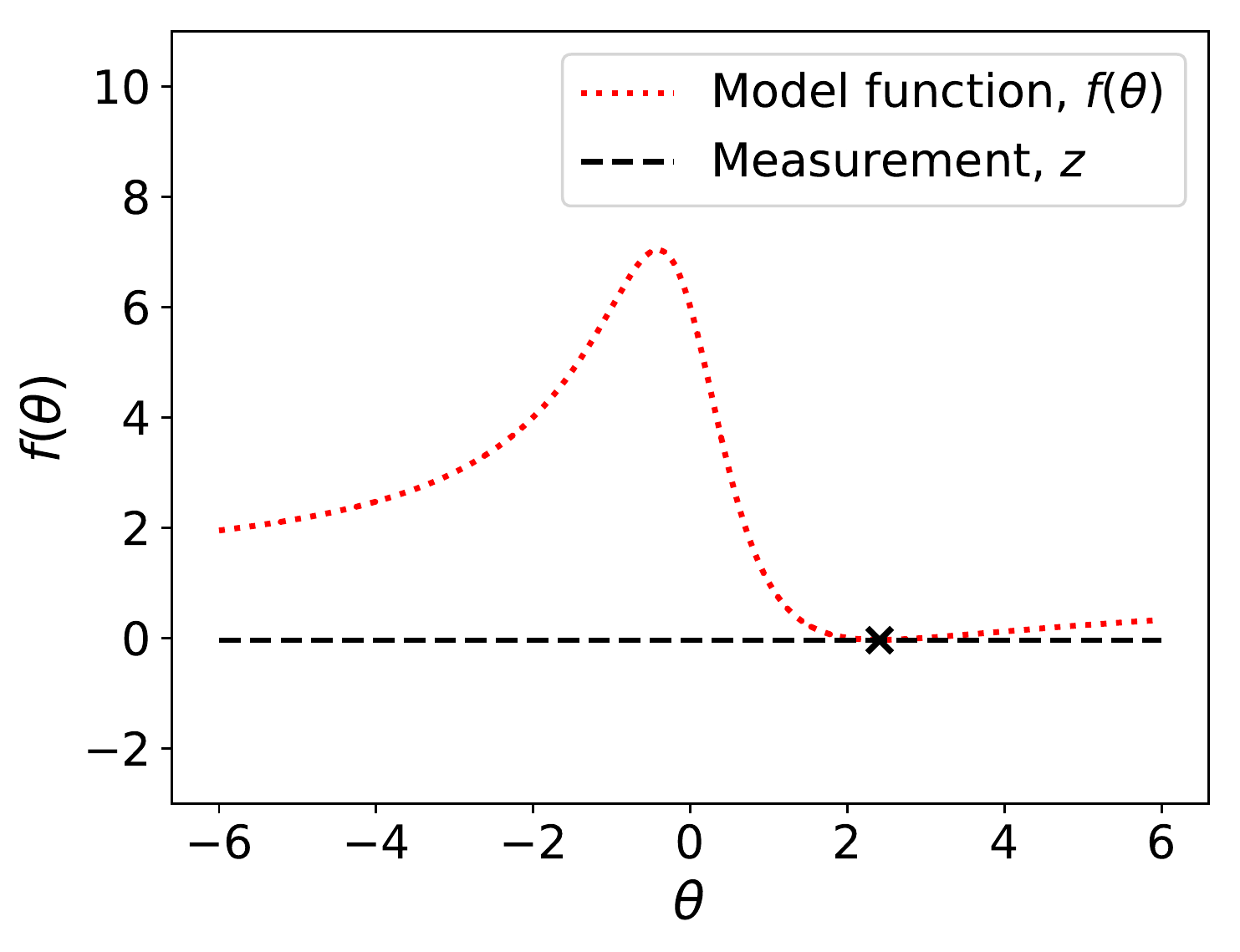}
	\vspace{-1.4\baselineskip}
    \end{subfigure}%
    \caption{Function \eqref{eq:1D_example_function}, $\theta_{true}$ (black cross), and measurement level $z$.}
    \setlength{\belowcaptionskip}{-10pt}
    \label{fig:1D_function}
\end{figure}

\begin{figure}[!htb]
    \centering
    \begin{subfigure}[t]{0.45\textwidth}
        \centering
        \includegraphics[width=\textwidth]{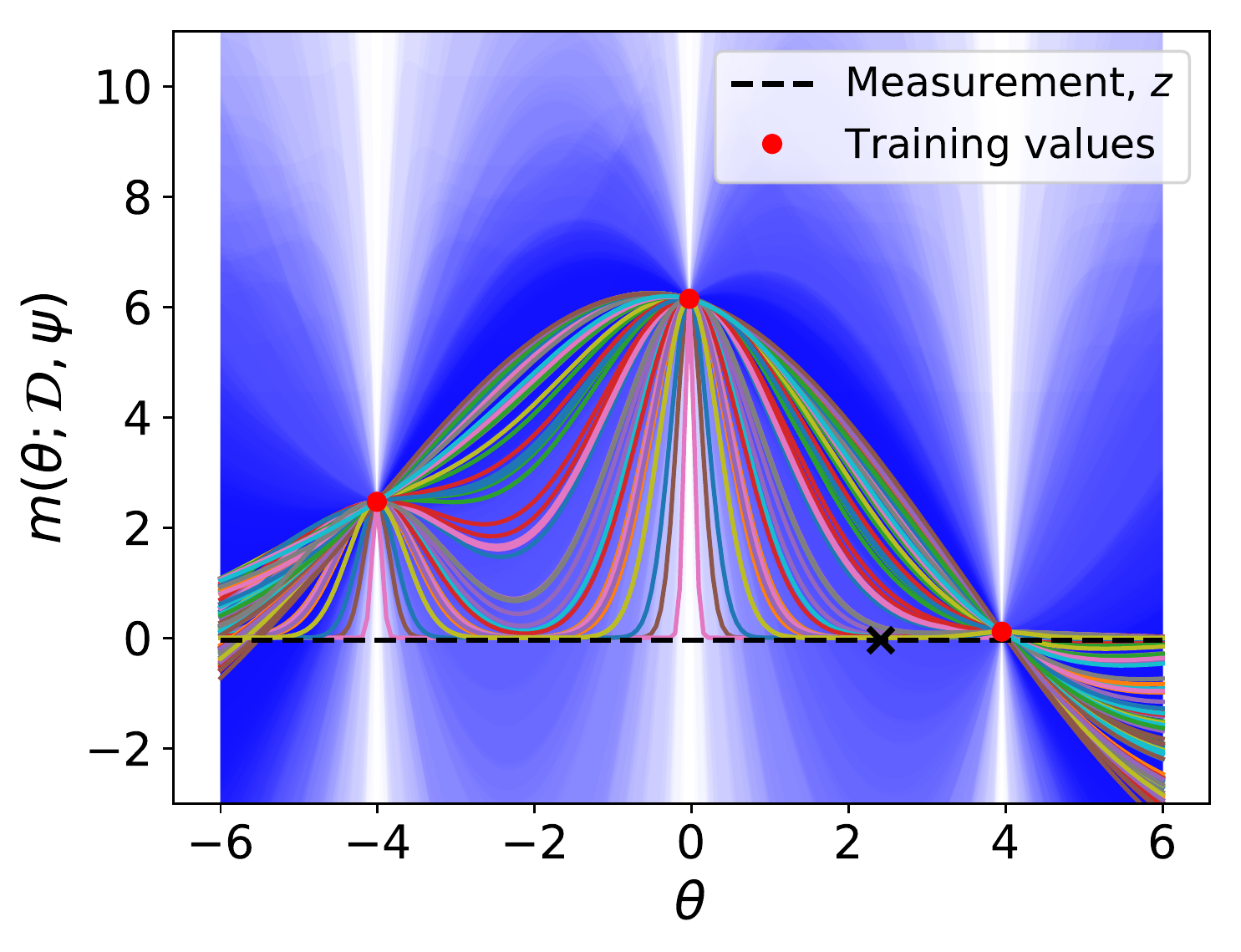}
	\vspace{-1.4\baselineskip}
        \caption{\small{Predictive means $m(\theta; \cD, \bpsi)$ (graphs)  and $95\%$ confidence regions (blue shaded areas) at iteration $k=1$.}}
        \label{fig:1D_means_step0}
    \end{subfigure}%
    ~~
    \begin{subfigure}[t]{0.47\textwidth}
        \centering
        \includegraphics[width=\textwidth]{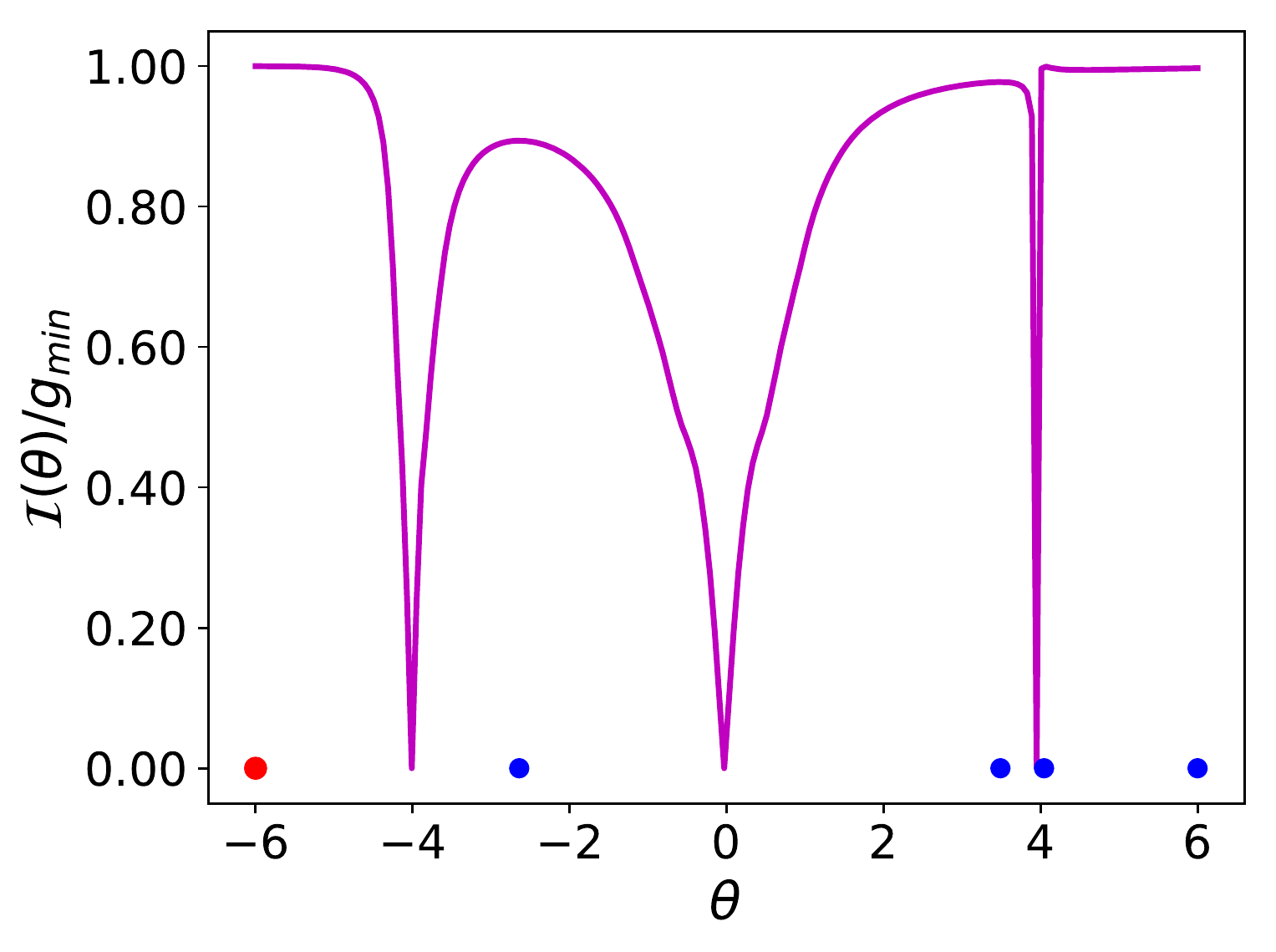}
	\vspace{-1.4\baselineskip}
        \caption{\small{Relative expected improvement function $\cI(\theta)/g_{min}$ at iteration $k=1$.}}
        \label{fig:1D_exp_imp_step0}
    \end{subfigure}%
    \\
    \begin{subfigure}[t]{0.45\textwidth}
        \centering
        \includegraphics[width=\textwidth]{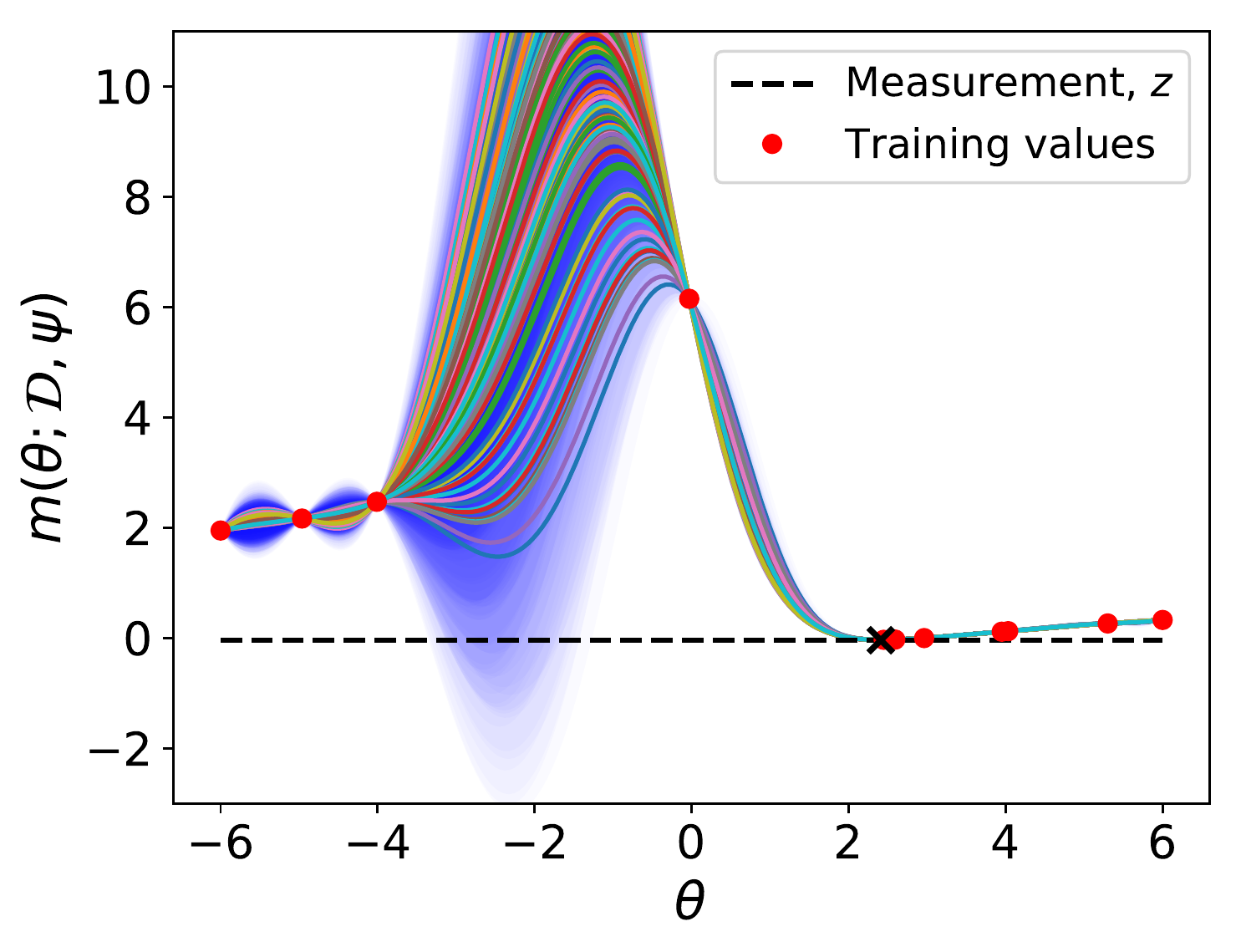}
	\vspace{-1.4\baselineskip}
        \caption{\small{Predictive means $m(\theta; \cD, \bpsi)$ (graphs)  and $95\%$ confidence regions (blue shaded areas) at iteration $k=10$.}}
        \label{fig:1D_means_step9}
    \end{subfigure}%
    ~~
    \begin{subfigure}[t]{0.48\textwidth}
        \centering
        \includegraphics[width=\textwidth]{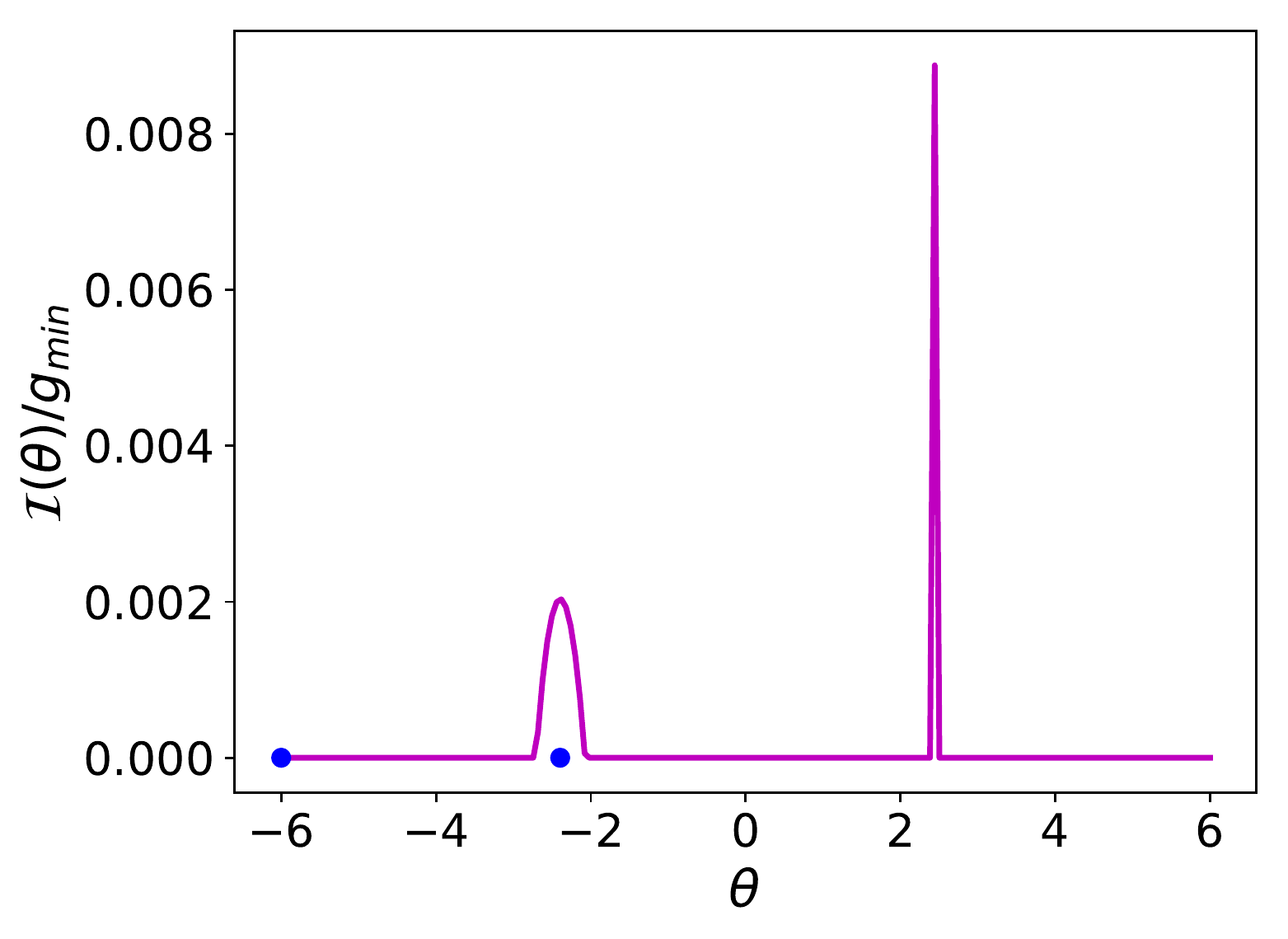}
	\vspace{-1.4\baselineskip}
        \caption{\small{Relative expected improvement function $\cI(\theta)/g_{min}$ at iteration $k=10$.}}
        \label{fig:1D_exp_imp_step9}
    \end{subfigure}%
\caption{Predictive means and expected improvement functions at the first and at the last iterations of Algorithm \ref{algo:adaptive_GP} applied to function \eqref{eq:1D_example_function}. In the left plots, red circles indicate training values and black cross indicates the location of $\theta_{true}$. In the right plots, blue circles indicate local maxima of the expected improvement function and the larger red circle indicates the selected input $\theta^{(k)}$.} 
\label{fig:1D_ex_means_and_exp_imp}
\end{figure}

We use  three equidistant input values $\{-4, 0, +4\}$ and the corresponding function values as initial design $\cD$. 
Next, we apply Algorithm \ref{algo:adaptive_GP} in the following way. For the hyperparameter estimation, we start with uniform priors on the covariance parameters $\sigma_c$ and $\ell_1$ (see \eqref{algo:adaptive_GP}): $p(\sigma_c)=\cU(10^{-8}, 12)$, $p(\ell_1)=\cU(10^{-8},5)$. 
Using MCMC we obtain $n_\psi=100$ posterior samples $\bpsi^{(j)}$ (see \ref{sec:implementation} for details).
Figure \ref{fig:1D_means_step0} shows the predictive means (see \eqref{eq:gp_pred_mean}) corresponding to different hyperparameter vectors together with the $95\%$ confidence regions around them based on the predictive variances \eqref{eq:gp_pred_var}. We observe that a small number of initial training inputs results in a broad hyperparameter posterior with a variety of corresponding GP models. The variances at the untested inputs appear to be quite large leading to wide confidence regions. This ensemble of GP models consistent with the initial training data gives a better idea of the uncertainty of the surrogate model than any single surface approximation would, for example, the one corresponding to the maximum likelihood estimate of the hyperparameters \eqref{eq:max_log_marg_psi}.

\begin{figure}[!htb]
    \centering
    \begin{subfigure}[t]{0.45\textwidth}
        \centering
        \includegraphics[width=\textwidth]{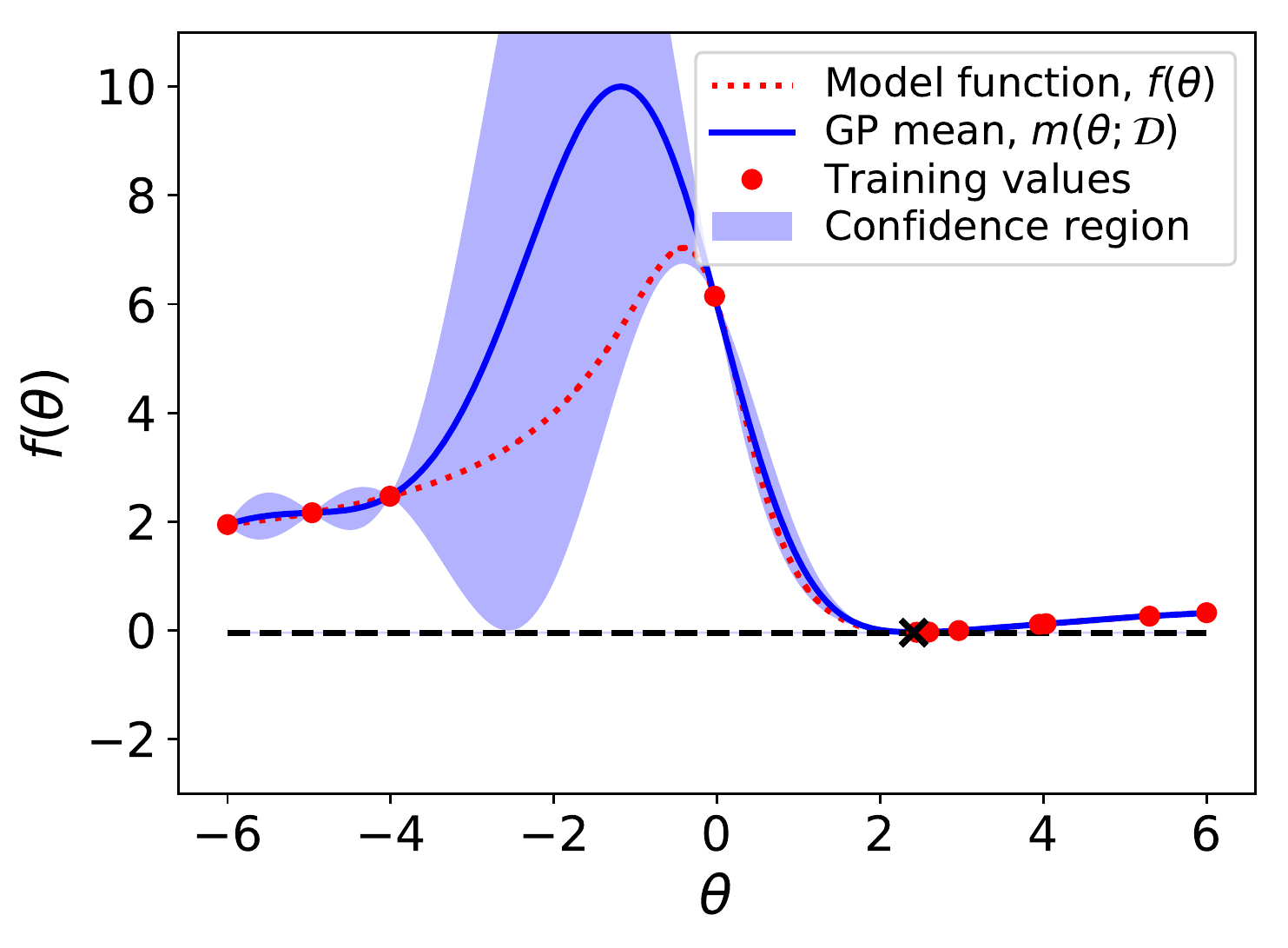}
	\vspace{-1.4\baselineskip}
        \caption{GP approximation with adaptive design.}
        \label{fig:1D_prediction_gp_adaptive}
    \end{subfigure}%
    ~
    \begin{subfigure}[t]{0.45\textwidth}
        \centering
        \includegraphics[width=\textwidth]{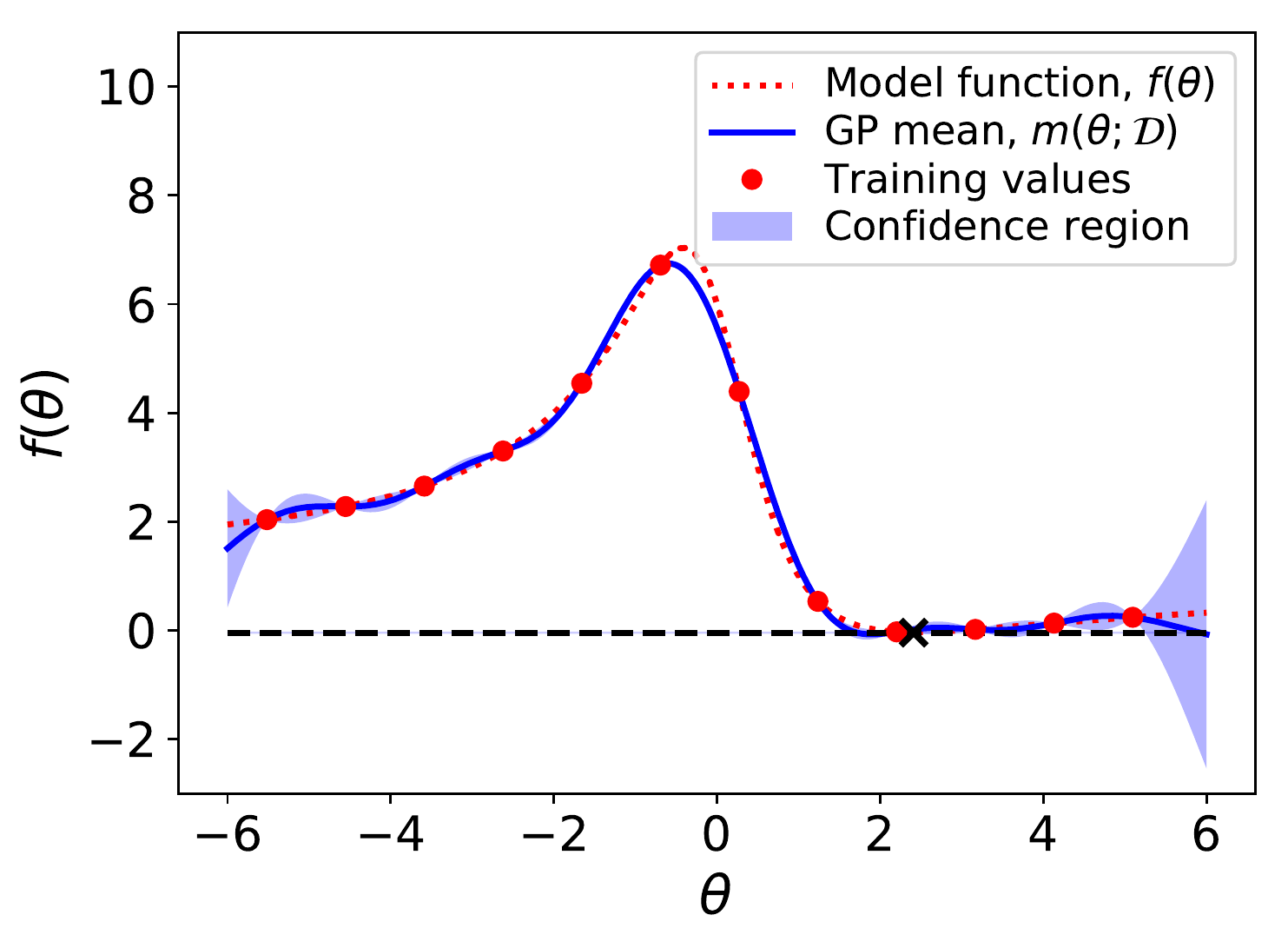}
	\vspace{-1.4\baselineskip}
        \caption{GP approximation with fixed design.}
        \label{fig:1D_prediction_LH}
    \end{subfigure}%
\caption{Predictive GP models corresponding to adaptive and non-adaptive designs. Means $m(\theta; \cD)$ are computed as in \eqref{eq:mean_mixture_single_out} and $95\%$ confidence regions are based on \eqref{eq:var_mixture_single_out}.}
\end{figure}

\begin{figure}[!htb]
    \centering
    \begin{subfigure}[t]{0.46\textwidth}
        \centering
        \includegraphics[width=\textwidth]{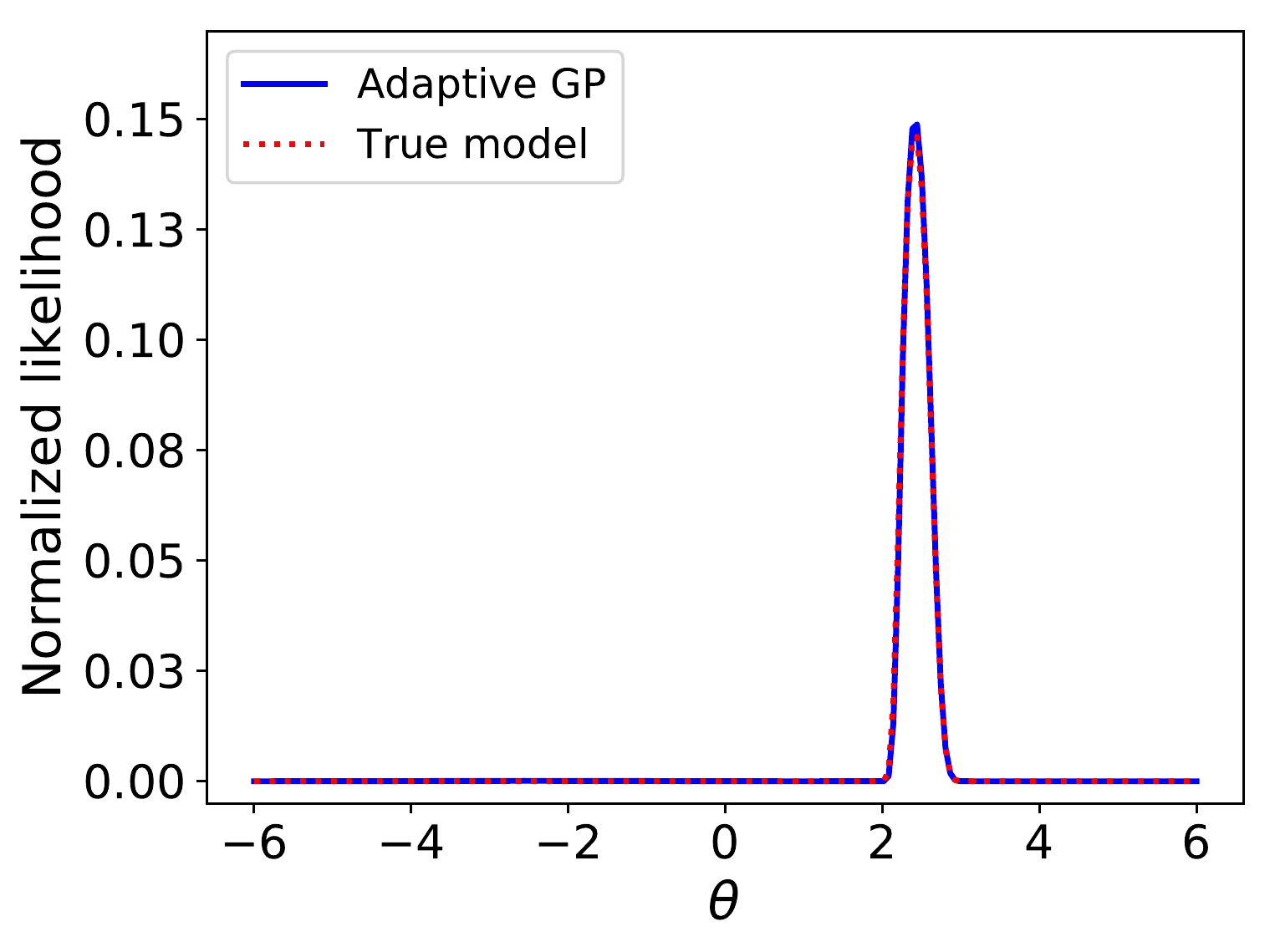}
	\vspace{-1.4\baselineskip}
        \caption{\small{Likelihood based on the adaptive design in Figure \ref{fig:1D_prediction_gp_adaptive}.}}
        \label{fig:1D_lik_gp_adaptive}
    \end{subfigure}%
    ~
    \begin{subfigure}[t]{0.46\textwidth}
        \centering
        \includegraphics[width=\textwidth]{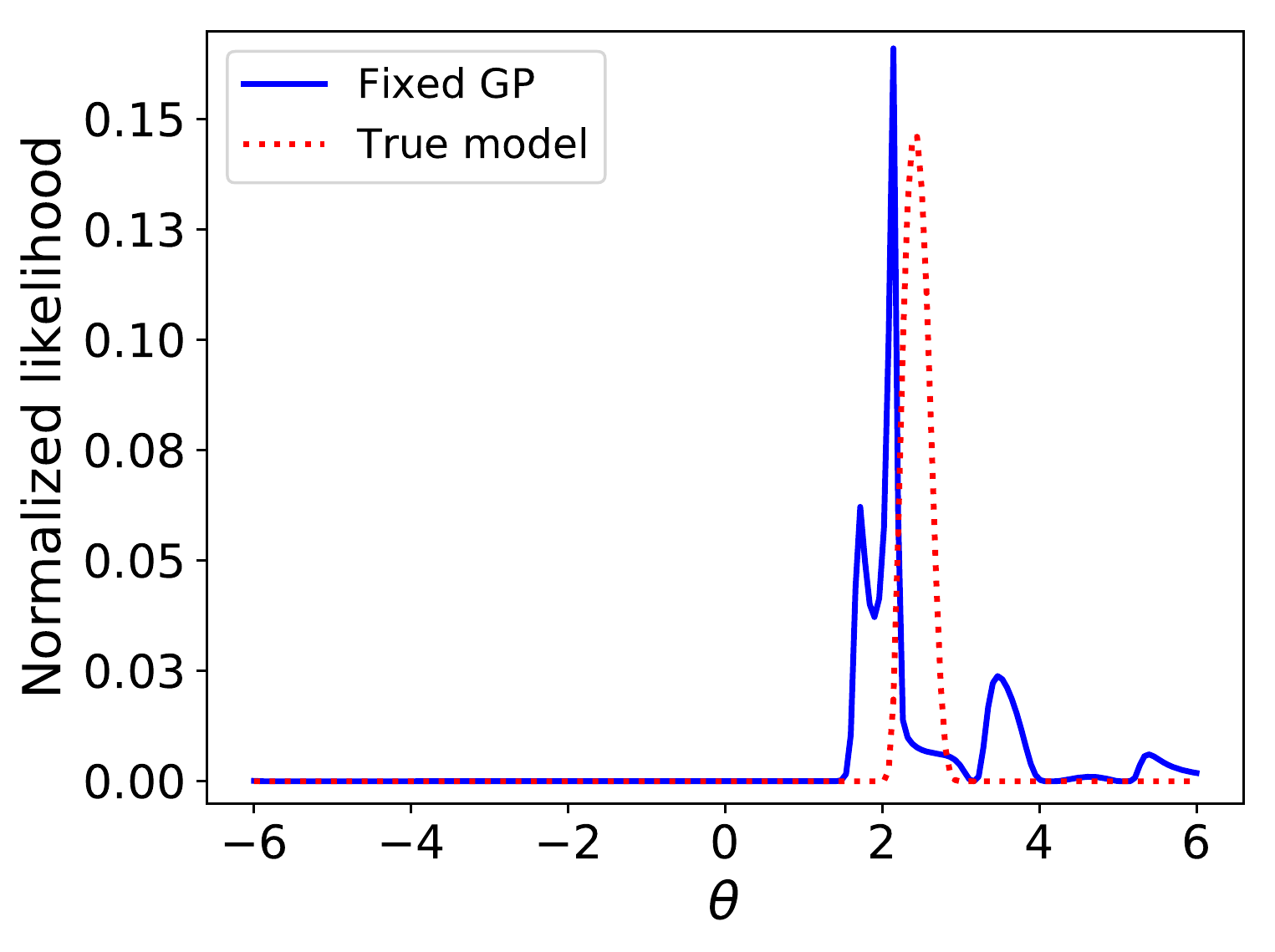}
	\vspace{-1.4\baselineskip}
        \caption{\small{Likelihood based on the fixed design in Figure \ref{fig:1D_prediction_LH}.}}
        \label{fig:1D_lik_LH}
    \end{subfigure}%
\caption{$\cD$-restricted likelihood functions built with adaptive and non-adaptive designs and compared to the likelihood computed with the true model \eqref{eq:1D_example_function}.}
\end{figure}

Using the ensemble of the GP models,  we now formulate the optimization problem \eqref{eq:expected_improvement_max_problem}. The objective function $\cI(\theta)$ (scaled by $1/g_{min}$) is presented in Figure \ref{fig:1D_exp_imp_step0}. By comparing Figures \ref{fig:1D_means_step0} and \ref{fig:1D_exp_imp_step0}, we observe that the expected improvement is largest in the regions where the majority of predictive means is close to the measurement value; it also grows with the distance to the points in the training set. This is the desired behavior: we want to explore the regions with highest uncertainty that are most likely to result in values at the level of the measurement $z$. 

We solve the problem \eqref{eq:expected_improvement_max_problem} using gradient-based optimization as described in the supplement \ref{sec:implementation} with $25$ initial points taken equidistantly on the interval $[-6, +6]$. The obtained maximizers are shown as blue circles in Figure \ref{fig:1D_exp_imp_step0}; the larger red circle at $\theta=-6$ corresponds to the largest maximum. We add this maximizer to the training set $\cD$ and proceed. We terminate the algorithm once the stopping criterion in line 6 of  Algorithm \ref{algo:adaptive_GP} is satisfied with $\epsilon_{thresh}=0.01$. At the final step ($k=10$), which corresponds to $12$ total training inputs, the means of the predictive distributions corresponding to the ensemble of the hyperparameters $\bpsi$ look as shown in Figure \ref{fig:1D_means_step9}. We observe that our adaptive algorithm has primarily added to the training set the inputs to the right of $\theta_{true}$ where the function \eqref{eq:1D_example_function} takes values closest to $z$. With the large number of training inputs in this region, the GP models are indistinguishable from the true function and the variances are low. On the other hand, to the left of $\theta=0$, the adaptive algorithm has added only two more inputs to the training set; as a result, there is still considerable uncertainty associated with the GP predictions in that region. However, most of the ensemble means in this region are far from the measurement value $z$, and the uncertainty of the ensemble values is not sufficiently large to imply that the measurement data could have originated in this region and to warrant further exploration. This is reflected in the maximum value of the relative expected improvement being below the threshold $\epsilon_{thresh}=0.01$ (see Figure \ref{fig:1D_exp_imp_step9}).

The obtained mixture GP approximation is presented in Figure \ref{fig:1D_prediction_gp_adaptive}. Here, we plot the mean of the ensemble as in \eqref{eq:mean_mixture_single_out} and base the $95\%$ confidence region on \eqref{eq:var_mixture_single_out}. The inputs in the final training set are shown as red circles. With this final design, we evaluate and plot the approximate $\cD$-restricted likelihood function \eqref{eq:D_restricted_likelihood_approx} (see Figure \ref{fig:1D_lik_gp_adaptive}). In the same figure, we plot the ``true'' likelihood $L(\theta | z)$ evaluated using the forward model function \eqref{eq:1D_example_function}. We observe that the two are identical.

Finally, we contrast our adaptively constructed GP-based likelihood with a GP-based likelihood constructed using a naive fixed design. The mixture GP model constructed with $12$ equidistant training inputs is shown in Figure \ref{fig:1D_prediction_LH} and the $\cD$-restricted likelihood based on this design is shown in Figure \ref{fig:1D_lik_LH}. We observe that the GP-based likelihood using the naive non-adaptive design is of considerably worse quality than the one that was built adaptively, even though the GP model corresponding to the naive design looks reasonably good and has narrow confidence regions.
%

\subsection{Source inversion}
Next, we consider the source inversion problem studied in \cite{YMMarzouk_HNNajm_LARahn_2007a} and later in \cite{JLi_YMMarzouk_2014a}. The forward model is given by a diffusion equation in two dimensions:
\begin{subequations}\label{eq:heat}
\begin{align}
	\frac{\partial u}{\partial t} - \nabla^2 u &= s(\bx, t) &\bx\in \Omega\coloneqq[0,1]^2 \\
	\nabla u \cdot \mathbf{n} &= 0 &\bx\in\partial \Omega \\
	u(\bx, 0) &= 0 &\bx\in \Omega.
\end{align}
\end{subequations}
The source term $s(\bx, t)$ is given by
\[
	s(\bx, t) = \begin{cases}
                               \frac{a}{2\pi h^2} \exp\big( - \| \btheta - \bx\|^2/2h^2\big), &\quad 0\leq t\leq \tau, \\
                               0, &\quad t>\tau.
\end{cases}
\]
The following parameters have fixed values: $a=2$, $h=0.05$, $\tau=0.1$. The location of the source center is denoted by $\btheta$ and is the (two-dimensional) parameter of interest. We solve \eqref{eq:heat} in FEniCS \cite{ALogg_KAMardal_GNWells_2012a} using a $32\times 32$ uniform finite element mesh with piecewise-linear Lagrange elements, and backward Euler time discretization with a time step of $0.01$.

The measurements are taken at times $t=0.1$ and $t=0.2$ on a uniform $3\times 3$ grid covering $\Omega$ resulting in a total of $18$ measurements. The measurement noise is assumed to be a vector of independent zero-mean Gaussian random variables. Thus, the model is:
\[
	\bz = \bff(\btheta) + \be, \quad e_i \sim \cN(0, \sigma_i^2), \quad i=1,\dots,18.
\]
We fix $\sigma_i=0.1$ for all $i$. The measurement data $\bz$ is generated by solving the forward model with $\btheta_{true}=(0.25, 0.75)$ and adding noise. To avoid the obvious ``inverse crime'' \cite[Section~1.2]{JKaipio_ESomersal_2005a}, the measurement data is generated with a finer $128\times 128$ grid and a smaller time step of $0.0025$.

\begin{figure}[htb!]
    \centering
    \begin{subfigure}[t]{0.45\textwidth}
        \centering
        \includegraphics[width=\textwidth]{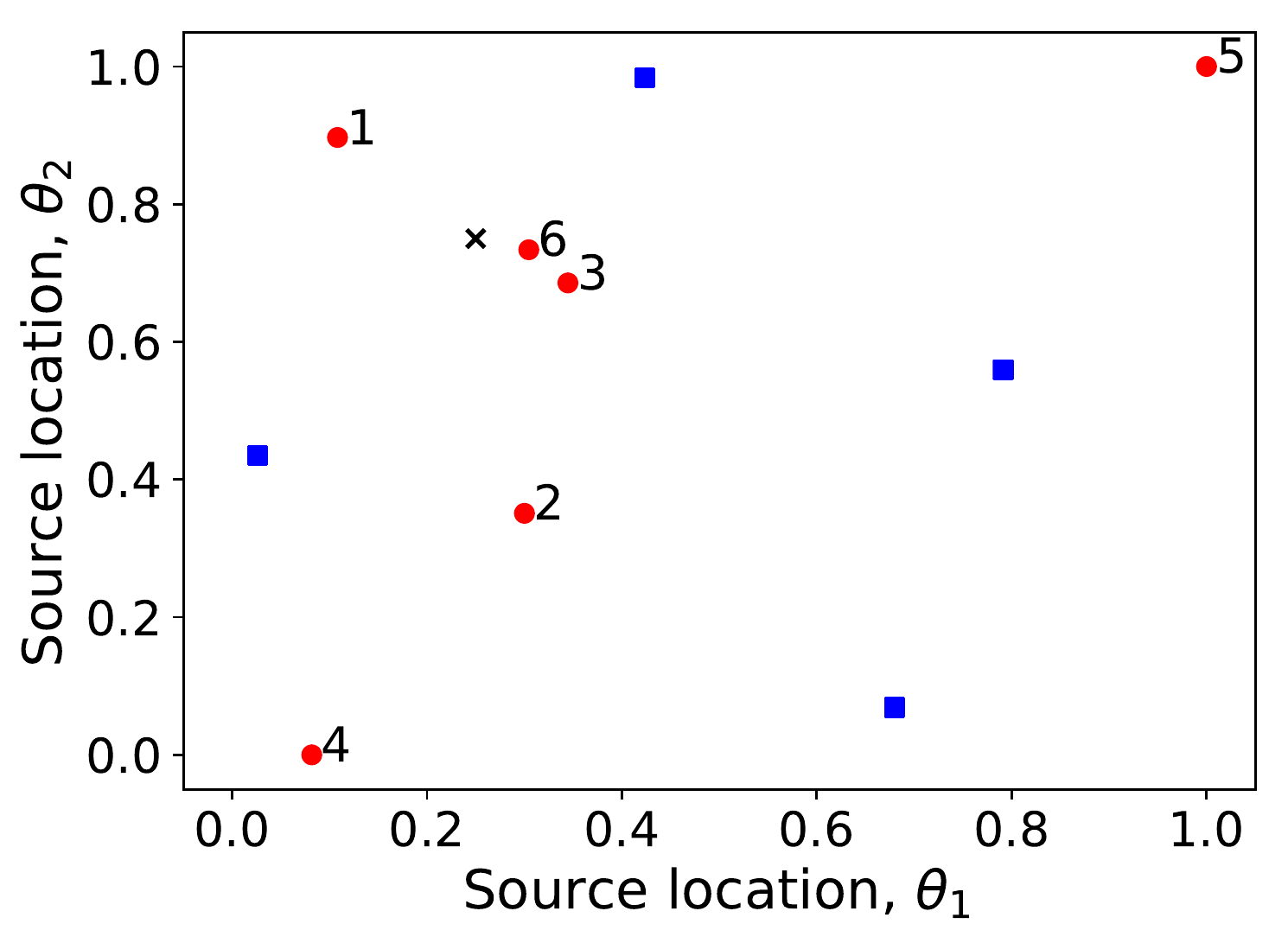}
	\vspace{-1.4\baselineskip}
        \caption{\small{Final adaptive design.}}
        \label{fig:heat_design_final}
    \end{subfigure}%
   ~
    \begin{subfigure}[t]{0.45\textwidth}
        \centering
        \includegraphics[width=\textwidth]{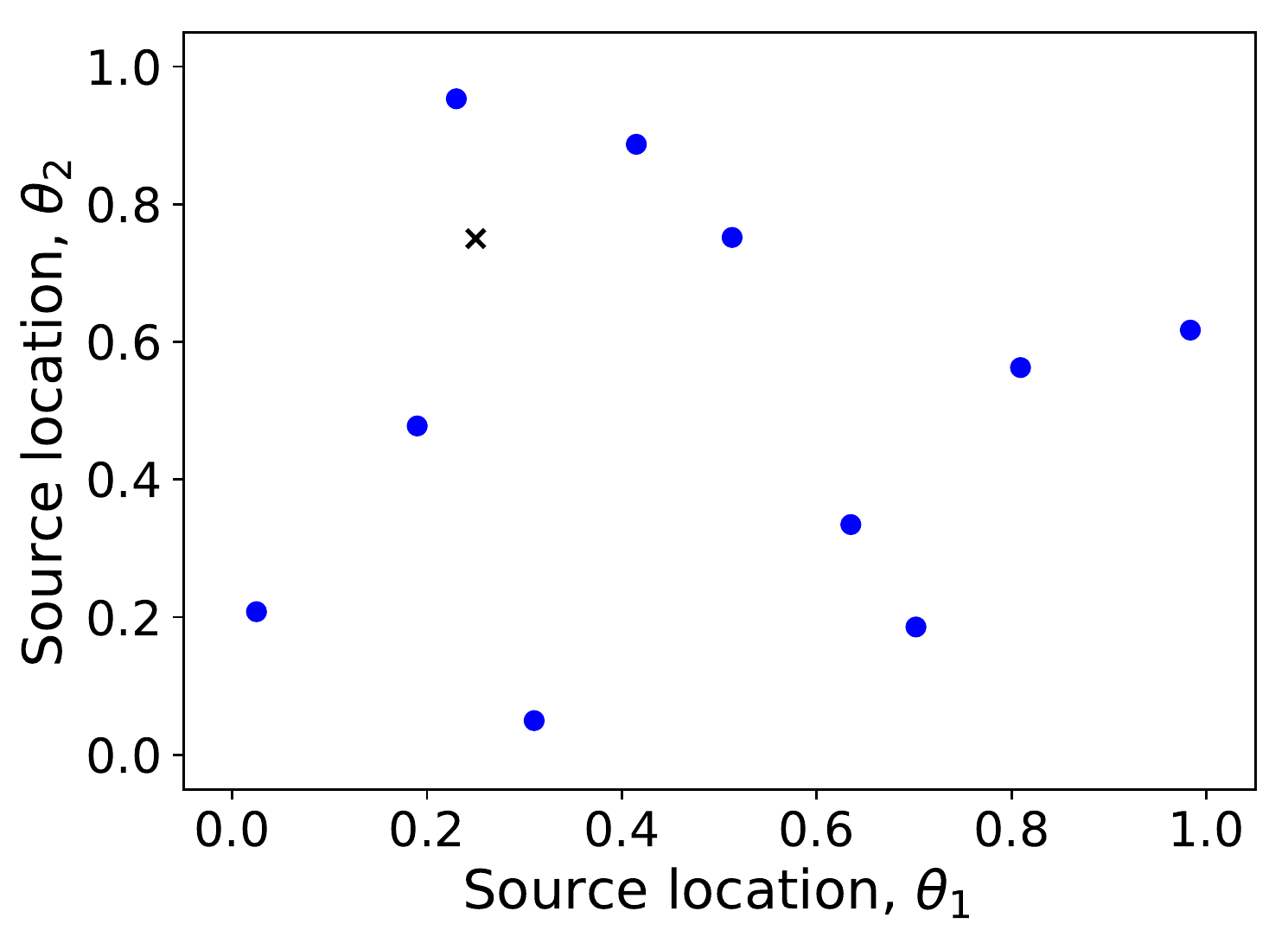}
	\vspace{-1.4\baselineskip}
        \caption{\small{Fixed Latin hypercube design.}}
        \label{fig:heat_design_LH}
    \end{subfigure}%
\caption{Designs for the problem \eqref{eq:heat}:  \subref{fig:heat_design_final} final adaptive design with the blue squares being the initial inputs and  the red circles added by Algorithm \ref{algo:adaptive_GP} (with numbers indicating iterations $k$ at which they were added), and \subref{fig:heat_design_LH} non-adaptive Latin hypercube design  with the same total number of inputs. Black cross indicates $\btheta_{true}$.}
\end{figure}

We start with the initial design $\cD$ consisting of $4$ inputs $\btheta$ arranged in a Latin hypercube design (see the blue squares in Figure \ref{fig:heat_design_final}). The priors for the hyperparameters are taken as follows: $p(\sigma_c)=\cU(10^{-8},2)$, $p(\ell_1)=\cU(10^{-8},1)$, $p(\ell_2)=\cU(10^{-8},1)$. The hyperparameter posterior is obtained with MCMC using the likelihood function \eqref{eq:evidence_mult_out} with normalized outputs \eqref{eq:scaled_outputs_mult_case}.

We run Algorithm \ref{algo:adaptive_GP} with $n_{max}=11$, $\epsilon_{thresh}=0.01$, and $\cB_{\theta}=[0,1]^2$. To solve \eqref{eq:expected_improvement_max_problem}, we initialize the optimization algorithm with $50$ points from a two-dimensional Sobol sequence. If the resulting maximum expected improvement is less than the threshold, we perform another search initialized at an additional $100$ Sobol points.

\begin{figure}[htb!]
    \centering
    \begin{subfigure}[t]{0.45\textwidth}
        \centering
        \includegraphics[width=\textwidth]{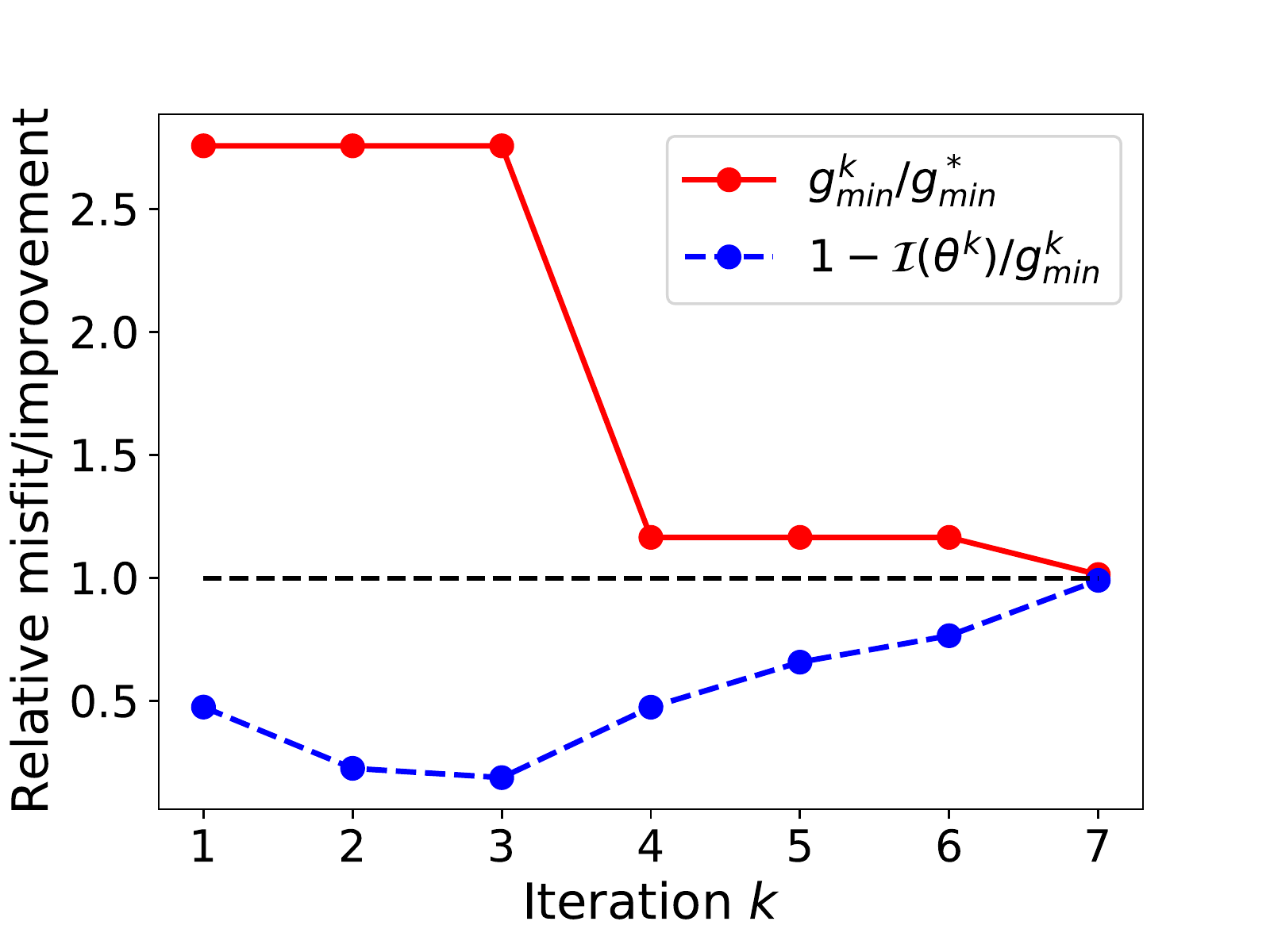}
	\vspace{-1.4\baselineskip}  
    \end{subfigure}%
    \caption{Iteration history of Algorithm \ref{algo:adaptive_GP} for the Problem \eqref{eq:heat}. Here $g_{min}^*=15.015$.}
    \label{fig:heat_misfits}
\end{figure}

Figure \ref{fig:heat_misfits} shows the iteration history of Algorithm \ref{algo:adaptive_GP}. The red solid line shows the values of $g_{min}^k/g_{min}^*$ over iterations, where $g_{min}^k$ is $g_{min}$ at iteration $k$, and $g_{min}^*=15.015$ is the minimum of $g(\btheta)$ that we find by exhaustively searching in the region around $\btheta_{true}$ and use here only as a reference value (this value is, of course, unknown in practice). The blue dashed line shows $1-\cI(\btheta^{(k)})/g_{min}^k$, i.e., one minus the relative expected improvement (recall, that $\btheta^{(k)}$ corresponds to the maximizer of problem \eqref{eq:expected_improvement_max_problem} at iteration $k$). As the algorithm progresses, we expect both lines to approach $1$.

Comparing Figure \ref{fig:heat_misfits} with the order in which the inputs were added to the training set (see the numbers in Figure \ref{fig:heat_design_final}), we can make a few observations. At the initial stages, $k=1,2,3$, the inputs that maximize the expected improvement are located  in the interior of $\cB_\theta$ and around $\btheta_{true}$. The relative expected improvement value is high---over $50\%$---and the $g_{min}$ value remains unchanged. Upon addition of input $3$, the value of $g_{min}$ drops, and the algorithm starts adding inputs corresponding to high variance---inputs $4$ and $5$. It is expected that these inputs lie on the boundary where the uncertainty is highest. At this time, the relative expected improvement steadily decreases. Finally, adding input $6$ leads to further reduction of $g_{min}$. This time, the  maximum relative expected improvement drops below the threshold value $\epsilon_{thresh}=0.01$ and the algorithm terminates. The final design in Figure \ref{fig:heat_design_final} contains $10$ training inputs, and so does the Latin hypercube design in Figure \ref{fig:heat_design_LH} that we use for comparison below.

\begin{figure}[!htb]
    \centering
    \begin{subfigure}[t]{0.45\textwidth}
        \centering
        \includegraphics[width=\textwidth]{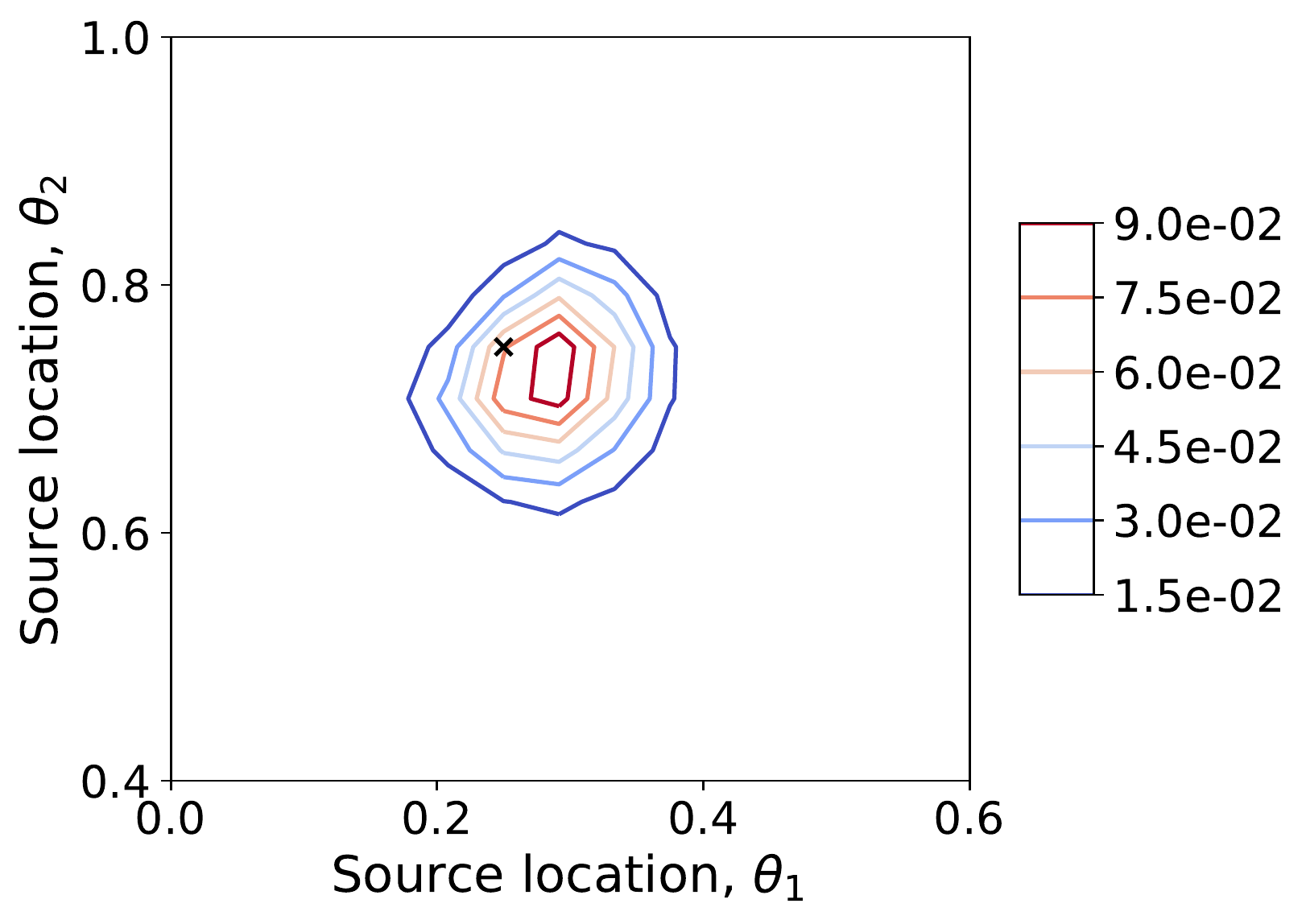}
	\vspace{-1.4\baselineskip}
        \caption{\small{Full model.}}
        \label{fig:heat_lik_true}
    \end{subfigure}%
   ~
    \begin{subfigure}[t]{0.45\textwidth}
        \centering
        \includegraphics[width=\textwidth]{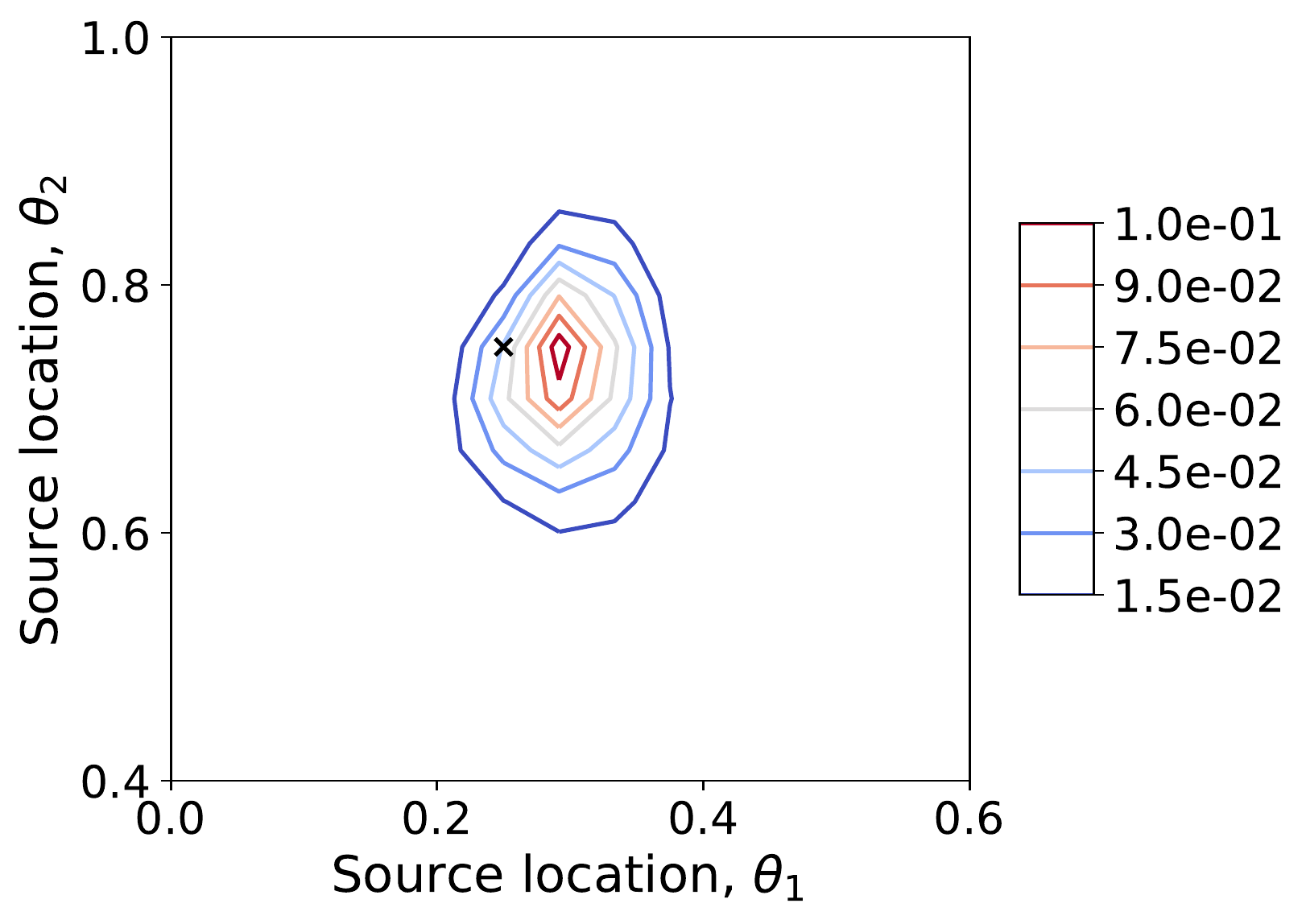}
	\vspace{-1.4\baselineskip}
        \caption{\small{GP (adaptive).}}
        \label{fig:heat_lik_gp_adaptive}
    \end{subfigure}%
    \\~
    \begin{subfigure}[t]{0.45\textwidth}
        \centering
        \includegraphics[width=\textwidth]{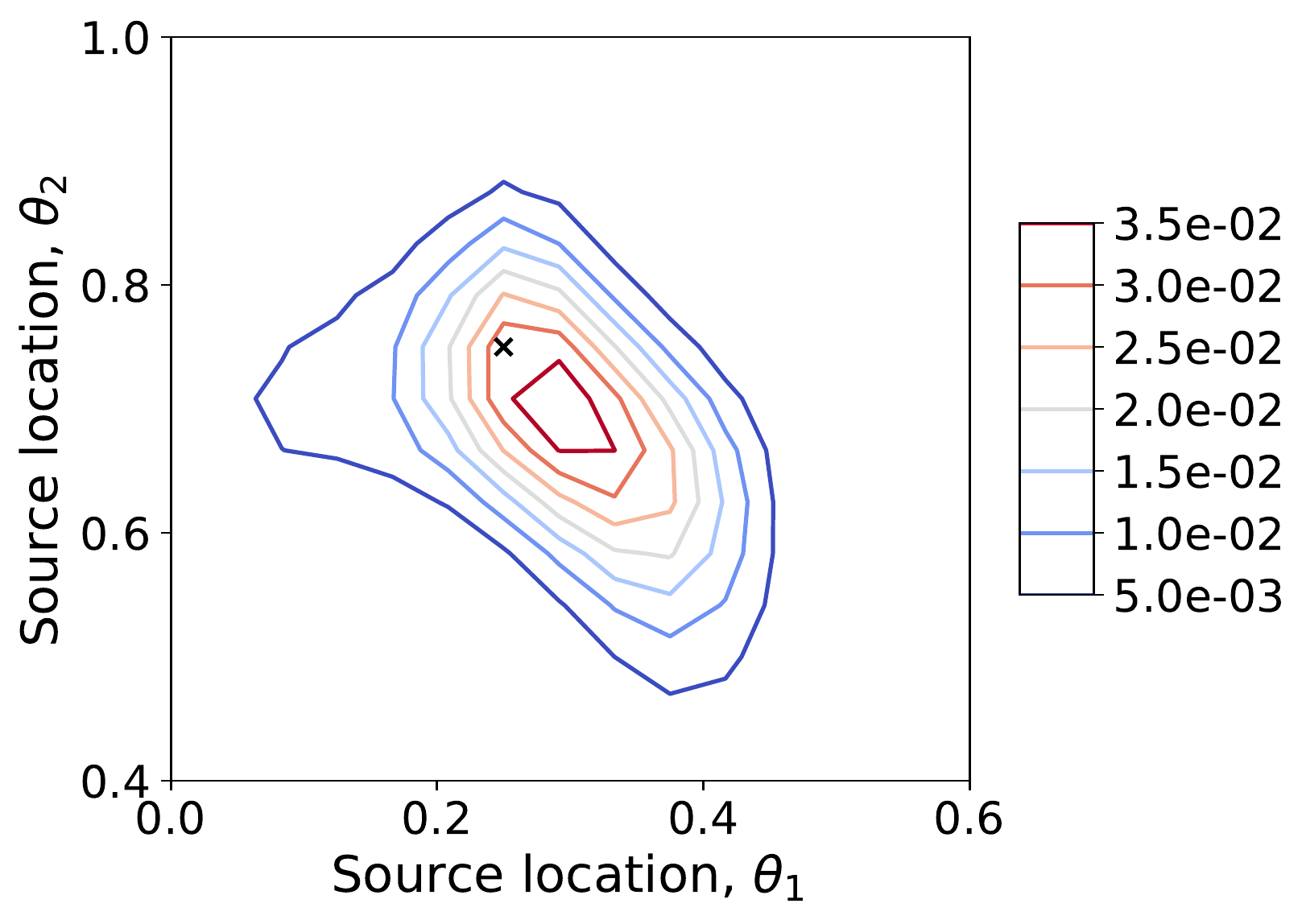}
	\vspace{-1.4\baselineskip}
        \caption{\small{GP (fixed).}}
        \label{fig:heat_lik_gp_fixed}
    \end{subfigure}%
\caption{Contours of normalized likelihoods constructed with the full model \protect\subref{fig:heat_lik_true}, the GP model with adaptive design \subref{fig:heat_lik_gp_adaptive}, and the GP model with a fixed design \subref{fig:heat_lik_gp_fixed}. Black cross indicates $\btheta_{true}$.}
\label{fig:heat_likelihoods}
\end{figure}

\begin{figure}[!htb]
    \centering
    \begin{subfigure}[t]{0.45\textwidth}
        \centering
        \includegraphics[width=\textwidth]{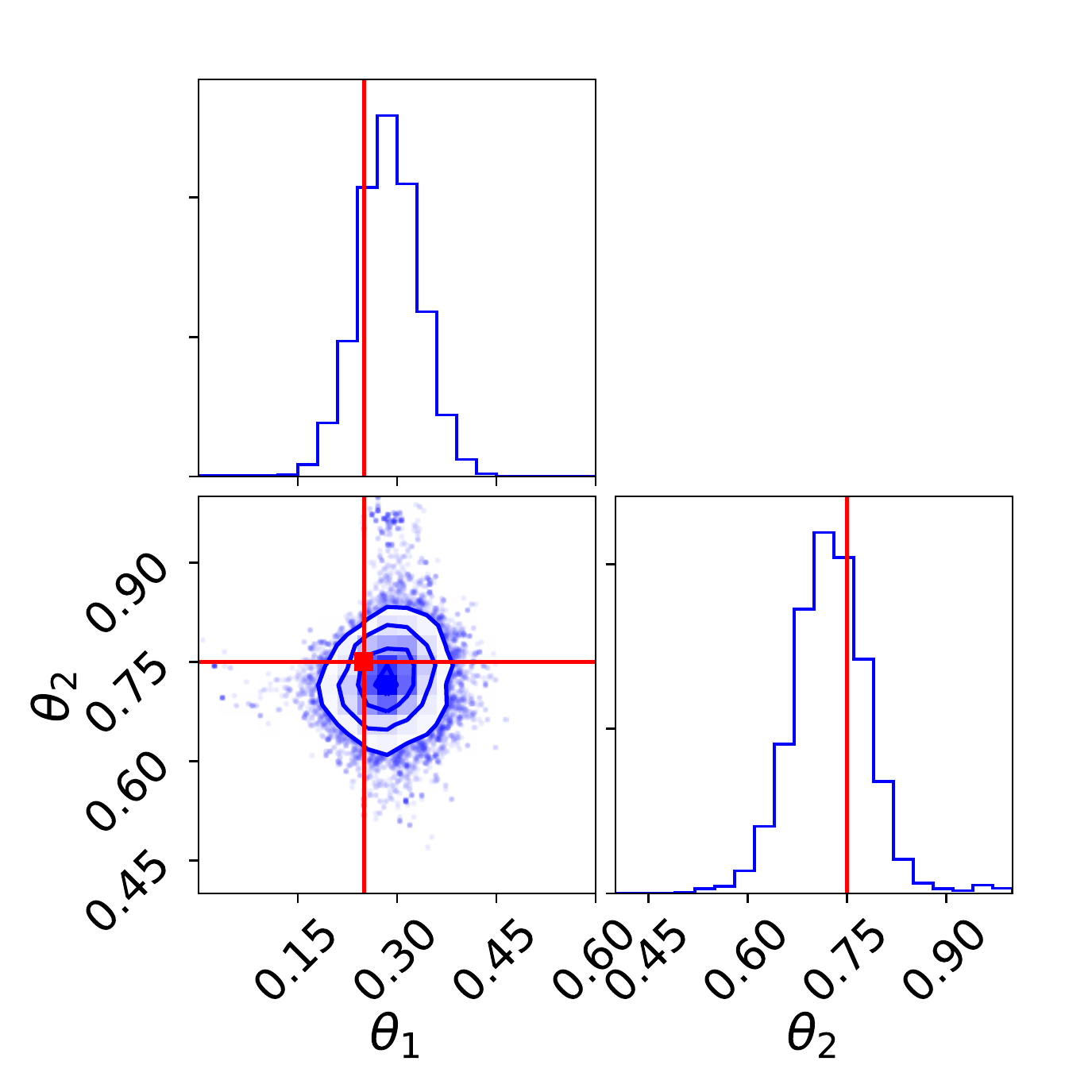}
	\vspace{-1.4\baselineskip}
        \caption{\small{Full model.}}
        \label{fig:heat_post_true}
    \end{subfigure}%
   ~
    \begin{subfigure}[t]{0.45\textwidth}
        \centering
        \includegraphics[width=\textwidth]{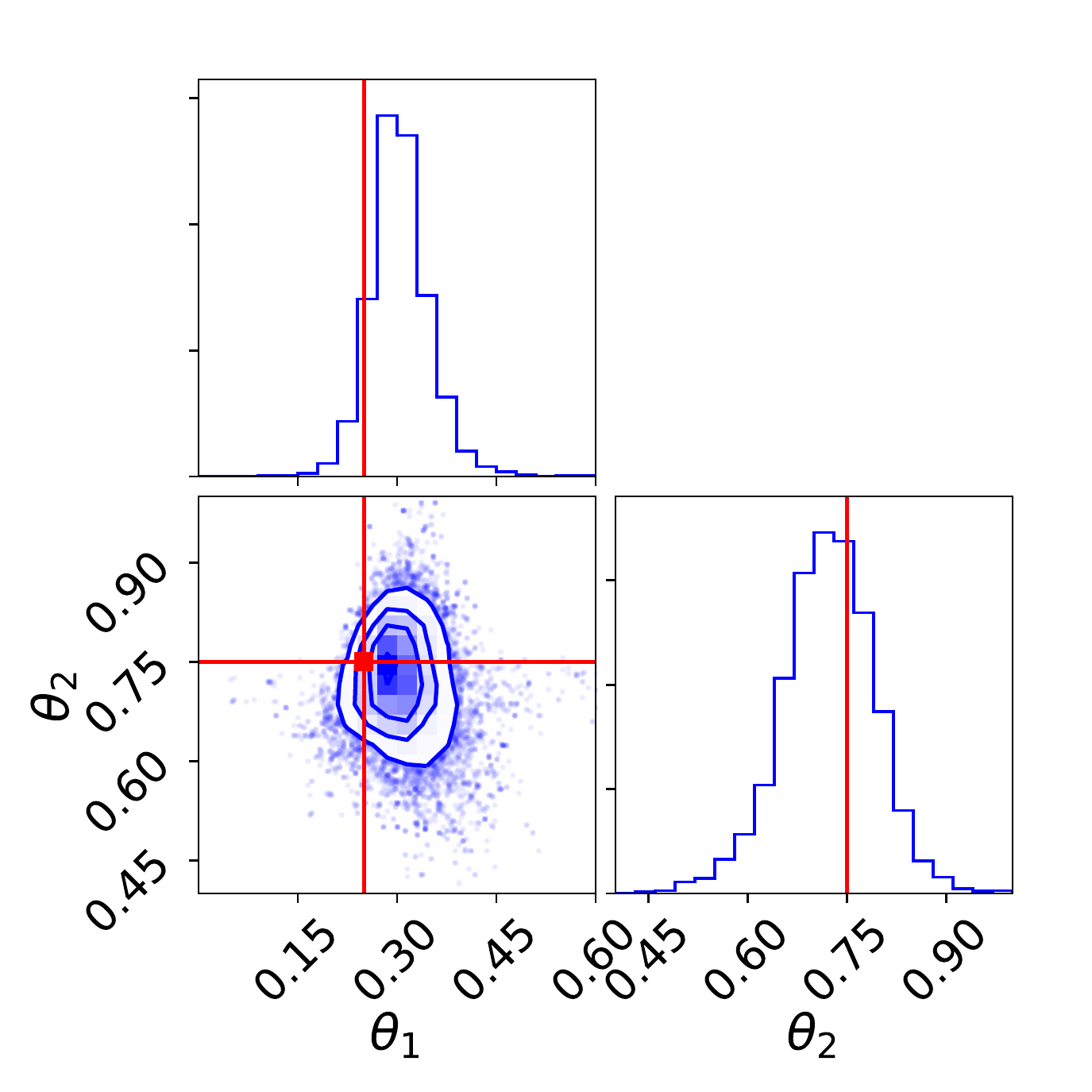}
	\vspace{-1.4\baselineskip}
        \caption{\small{GP (adaptive).}}
        \label{fig:heat_post_gp_adaptive}
    \end{subfigure}%
    \\
    \begin{subfigure}[t]{0.45\textwidth}
        \centering
        \includegraphics[width=\textwidth]{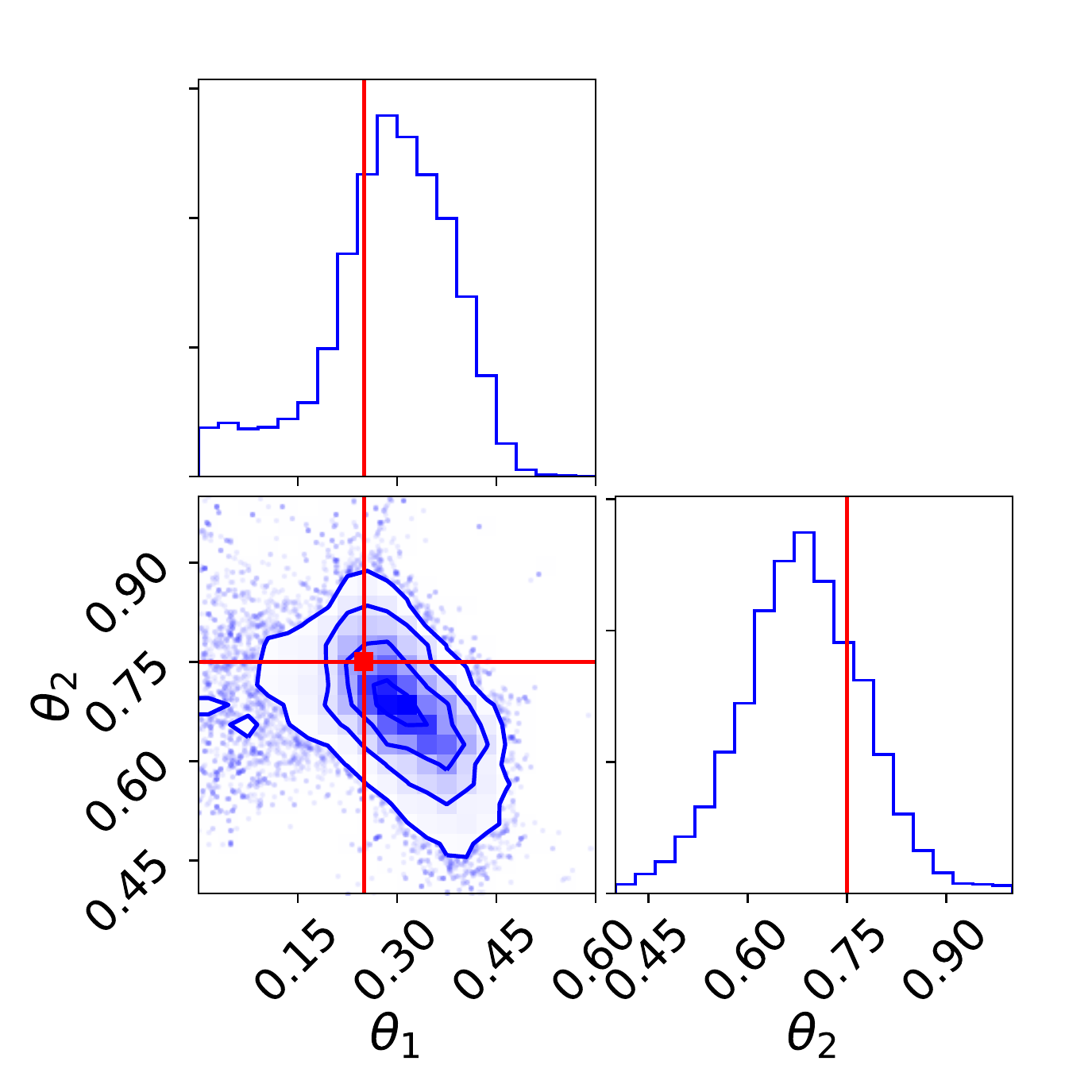}
	\vspace{-1.4\baselineskip}
        \caption{\small{GP (fixed).}}
        \label{fig:heat_post_gp_fixed}
    \end{subfigure}%
\caption{Corner plots of the posteriors estimated with the full model \protect\subref{fig:heat_lik_true}, the GP model with adaptive design \subref{fig:heat_lik_gp_adaptive}, and the GP model with a fixed design \subref{fig:heat_lik_gp_fixed} (each based on $2\times 10^4$ posterior samples). Red lines indicate the location of $\btheta_{true}$.}
\label{fig:heat_posteriors}
\end{figure}

Figure \ref{fig:heat_likelihoods} shows the contours of the normalized likelihoods---each likelihood function is evaluated on a grid of $625$ equidistant points in $\Omega$ and its values are divided by their sum. Figure \ref{fig:heat_lik_gp_adaptive} shows the $\cD$-restricted likelihood obtained with the adaptively constructed design in Figure \ref{fig:heat_design_final}. It appears to be very similar to the ``true'' likelihood in Figure \ref{fig:heat_lik_true}. By contrast, the $\cD$-restricted likelihood in Figure \ref{fig:heat_lik_gp_fixed}  that is based on the Latin hypercube design in Figure \ref{fig:heat_design_LH} deviates from the truth considerably and covers a larger region of the parameter space.

Posteriors estimated with each likelihood function in Figure \ref{fig:heat_likelihoods} are shown in Figure \ref{fig:heat_posteriors}. These figures are generated with $2\times 10^4$ posterior samples obtained with the MCMC sampler initialized using a uniform prior $p(\btheta)=\cU([0,1]^2)$ (see \ref{sec:implementation} for details).
The red lines in each figure show the location of $\btheta_{true}$. We observe that the posterior obtained with the adaptively built GP model, see Figure \ref{fig:heat_post_gp_adaptive}, gives good estimates of the several important characteristics of the true posterior shown in Figure \ref{fig:heat_post_true}, such as its mode, its highest posterior density region (discussed below), and one-dimensional marginals. This cannot be said about the posterior obtained with  the GP model based on the non-adaptive fixed design shown in Figure \ref{fig:heat_post_gp_fixed}.

As a summary statistic to compare the quality of the obtained posteriors, we use the Highest Posterior Density (HPD) region defined as follows.

\begin{definition}\label{def:hpd_region}
A $100(1-\alpha)\%$ HPD region for $\btheta$ is a subset $\cH_\theta\subset\cB_\theta$ defined by
    $\cH_\theta = \{\btheta \in \cB_{\theta} \,:\, p(\btheta \,|\, \bz) \geq t\}$,
where $t$ is the largest number such that $\int\nolimits_{\ensuremath{\boldsymbol{\theta}} \,:\, p(\ensuremath{\boldsymbol{\theta}} | \bz) \geq t} p(\btheta \,|\, \bz) d\btheta = 1 - \alpha$.
\end{definition}


In short, HPD is the smallest region enclosing $(1-\alpha)\%$ of the posterior mass and is a form of Bayesian credibility region. Note that other commonly used summary statistics, such as K-L divergence, might not be appropriate in our case. Since we cannot guarantee that the support of the GP-based posterior will include that of the ``true'' posterior, or vice versa, K-L divergence would not be well-defined.

\noindent
For the posteriors in Figures \ref{fig:heat_post_true} and \ref{fig:heat_post_gp_adaptive}, the $95\%$ HPD regions are respectively:
\begin{align*}
	\cH_\theta &= [0.19, 0.38]\times[0.61, 0.83] \quad \text{ for the full model,} \\
    \cH_\theta &= [0.22, 0.39]\times[0.57, 0.83] \quad \text{ for the adaptive GP model.}
\end{align*}
Note, that in the case of $\cD$-restricted posterior we substitute $p(\btheta | \bz)$ with $p(\btheta | \bz, \cD)$ in  Definition \ref{def:hpd_region}.
For the fixed-design GP posterior in Figure \ref{fig:heat_post_gp_fixed} the $95\%$ HPD region is
\[
	\cH_\theta = [0.04, 0.45]\times[0.47, 0.86] \quad \text{ for the fixed GP model.}
\]

\begin{figure}[!htb]
    \centering
    \begin{subfigure}[t]{0.45\textwidth}
        \centering
        \includegraphics[width=\textwidth]{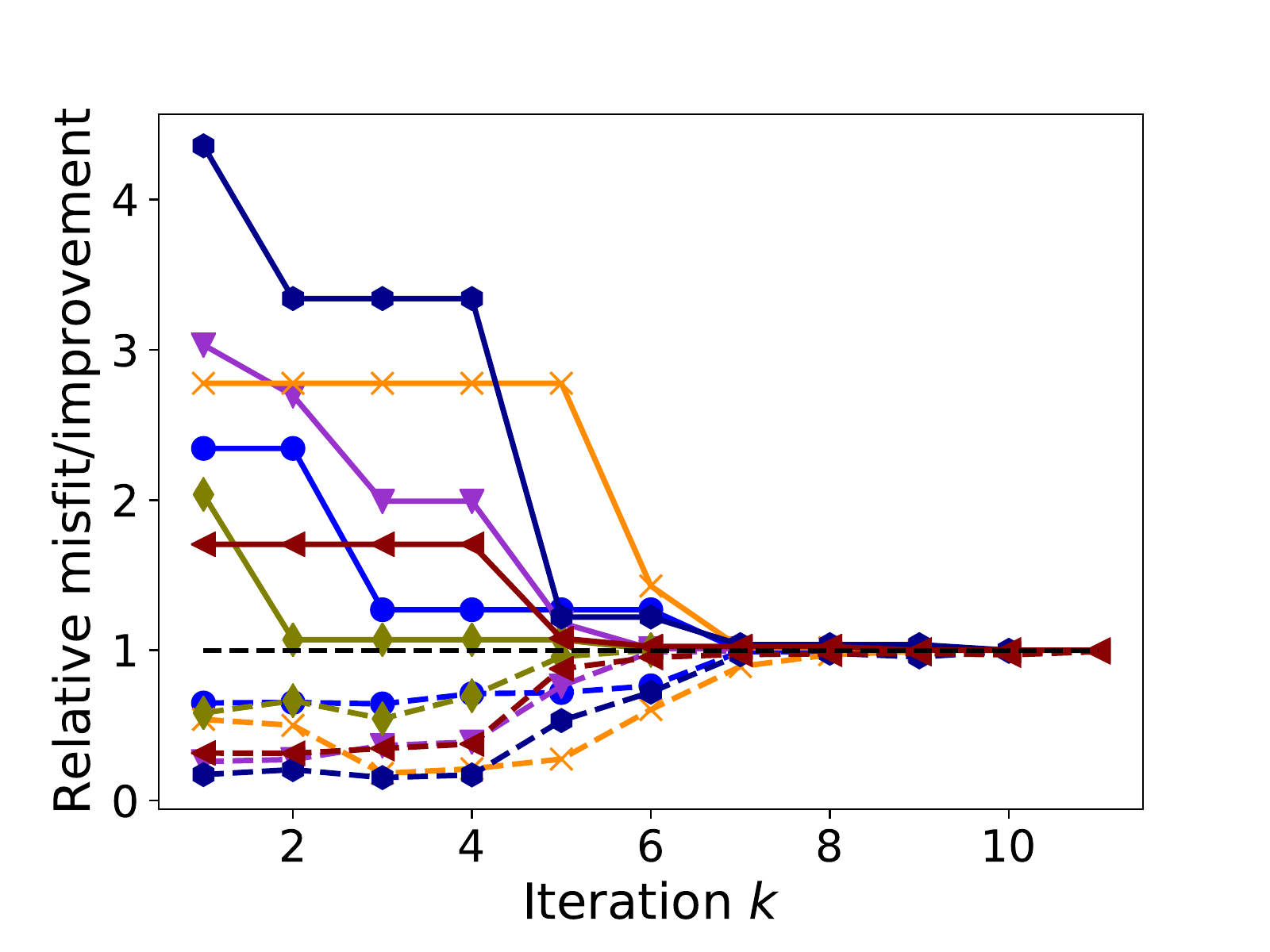}
	\vspace{-1.4\baselineskip}
        \caption{\small{Cases that terminated with $k<n_{max}$.}}
        \label{fig:heat_misfits_converged}
    \end{subfigure}%
   ~
    \begin{subfigure}[t]{0.45\textwidth}
        \centering
        \includegraphics[width=\textwidth]{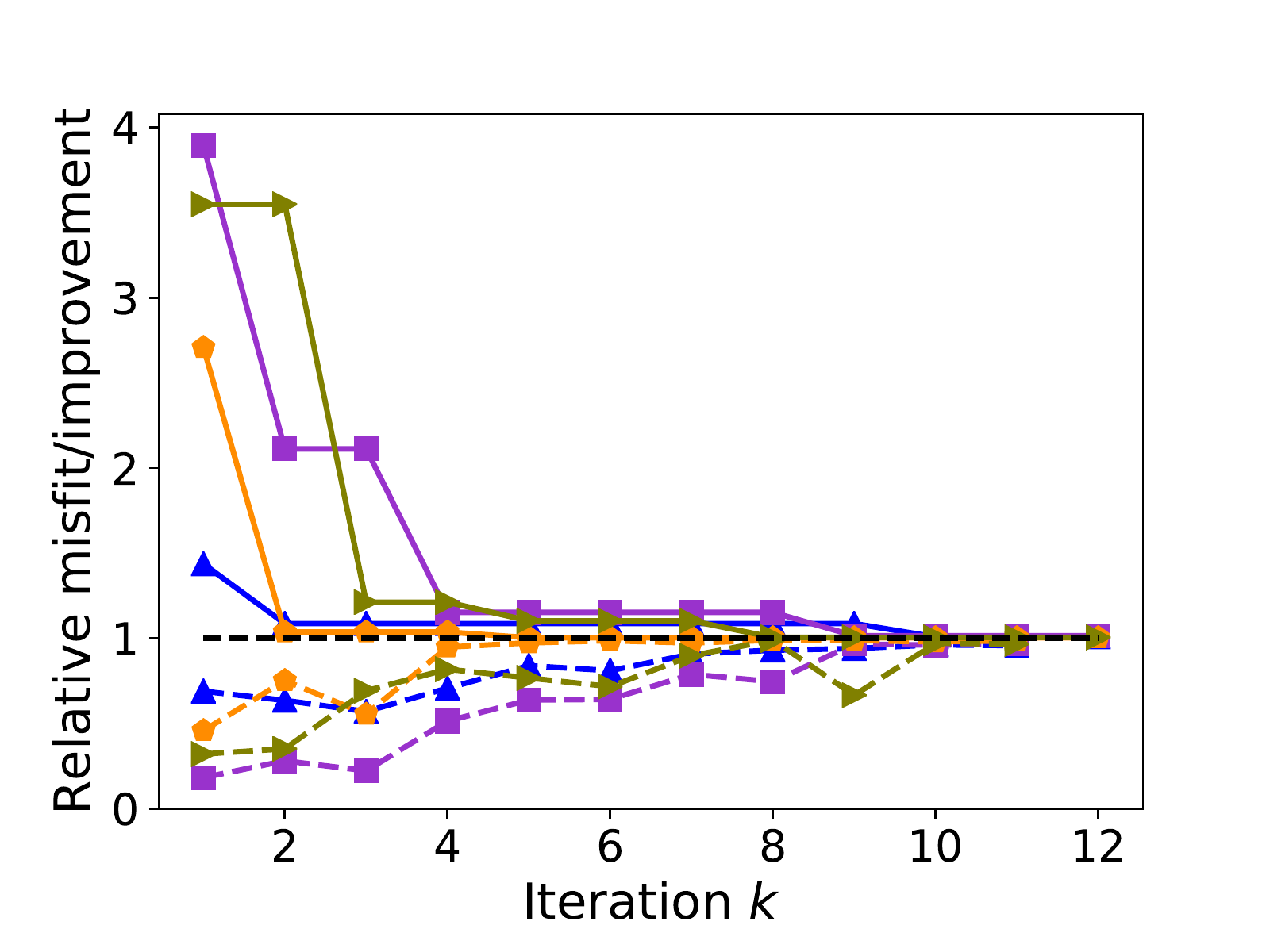}
	\vspace{-1.4\baselineskip}
        \caption{\small{Cases that terminated with $k=n_{max}$.}}
        \label{fig:heat_misfits_not_converged}
    \end{subfigure}%
\caption{Iteration histories for $10$ runs of Algorithm \ref{algo:adaptive_GP} for the Problem \eqref{eq:heat}. \protect\subref{fig:heat_misfits_converged} In $6$ cases, the algorithm terminated before reaching $n_{max}$ iterations.  \protect\subref{fig:heat_misfits_not_converged} In $4$ cases, the  maximum number of iterations was reached. Solid lines correspond to $g_{min}^k/g_{min}^*$, and dashed lines correspond to $1-\cI(\btheta^{(k)})/g_{min}^k$. Here $g_{min}^*=15.015$.}
\label{fig:heat_misfits_multiple_runs}
\end{figure}

The results reported above are based on a single run of  Algorithm \ref{algo:adaptive_GP}. Since the algorithm is stochastic due to the randomness in the initial design $\cD$, multiple runs are required to draw meaningful conclusions. Figure \ref{fig:heat_misfits_multiple_runs} reports the iteration histories for  $10$ random starts of Algorithm \ref{algo:adaptive_GP}, each with $4$ initial inputs arranged in randomized Latin hypercube designs. As before, we set the total allowed number of iterations $n_{max}=11$ so that the final designs have at most $15$ inputs. Figure \ref{fig:heat_misfits_converged} shows the iteration histories for the cases that satisfied the termination condition $\cI(\btheta^{(k)}) \leq \epsilon_{thresh}\cdot g_{min}$ without exceeding $n_{max}$ iterations. The behavior of the algorithm in these cases is similar to that in Figure \ref{fig:heat_misfits}: in the first few iterations, the relative expected improvement is between $35$-$85\%$, and the $g_{min}$ value either stays the same or is reduced slowly---the algorithm is in the exploration stage. A small value of $g_{min}$ is achieved by iterations $5$-$6$ at which point the relative expected improvement drops to less than $30\%$, and the final iterations are spent on further reducing the uncertainty in the model by exploiting the found $g_{min}$ value. In the cases for which the maximum number of iterations $n_{max}$ was reached before the relative expected improvement value reached  the specified threshold $\epsilon_{thresh}=0.01$, see Figure \ref{fig:heat_misfits_not_converged}, the behavior of the algorithm is slightly different: the small $g_{min}$ value is found relatively fast by iterations $3$-$4$, and the remaining iterations are spent on reducing the uncertainty in the GP model. Typically, the inputs added to the training set during these iterations lie on the boundaries of the parameter domain $\cB_\theta$. The relative expected improvement is slowly reduced and by the last iteration is very close to the threshold value---about $2$-$4\%$ instead of the desired $1\%$---which suggests that only a few more iterations would be required to satisfy the desired threshold condition. A summary of the results for all $10$ random starts is given in Table \ref{tbl:results_heat}.

\begin{table}[!htb]
\caption{\small{Results from 10 runs of Algorithm \ref{algo:adaptive_GP} for the Problem \eqref{eq:heat}.}}
\label{tbl:results_heat}
\centering
\begin{tabular}{ c | c | c | c | c }
Run & Final $n_{train}$ & Final $g_{min}^k$ & Final $\cI(\btheta^{(k)})/g_{min}^k$ & Less $\epsilon_{thresh}$ \\[0.5ex]
  \hline			
	1 & 11 & 15.016 & $7\times 10^{-3}$ & yes\\
    2 & 10 & 15.177 & $6\times 10^{-3}$ & yes\\
    3 & 15 & 15.135 & $4\times 10^{-2}$ & no\\
    4 & 12 & 15.036 & $6\times 10^{-3}$ & yes\\
    5 & 9 & 15.114 & 0 & yes\\
    6 & 13 & 15.019 & $7\times 10^{-3}$ & yes\\
    7 & 15 & 15.234 & $3\times 10^{-2}$ & no\\
    8 & 15 & 15.067 & $1.4\times 10^{-2}$ & no\\
    9 & 14 & 15.023 & $9\times 10^{-3}$ & yes\\
    10 & 15 & 15.093 & $3\times 10^{-2}$ & no\\
  \hline  
\end{tabular}
\end{table}

\begin{figure}[!htb]
    \centering
    \begin{subfigure}[t]{0.45\textwidth}
        \centering
        \includegraphics[width=\textwidth]{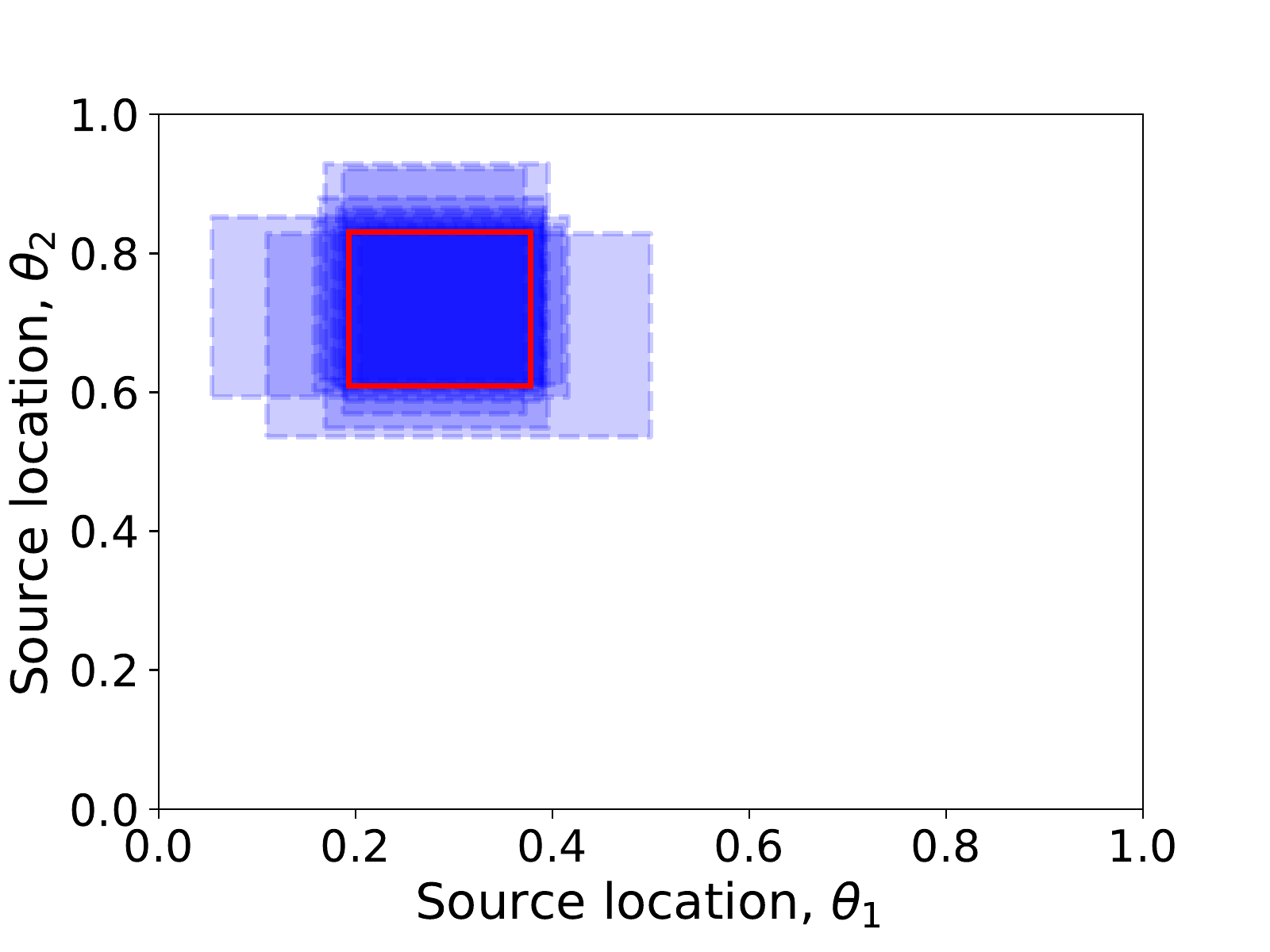}
	\vspace{-1.4\baselineskip}
        \caption{\small{GP adaptive (all cases).}}
        \label{fig:heat_hpd_gp_adaptive}
    \end{subfigure}%
   ~
    \begin{subfigure}[t]{0.45\textwidth}
        \centering
        \includegraphics[width=\textwidth]{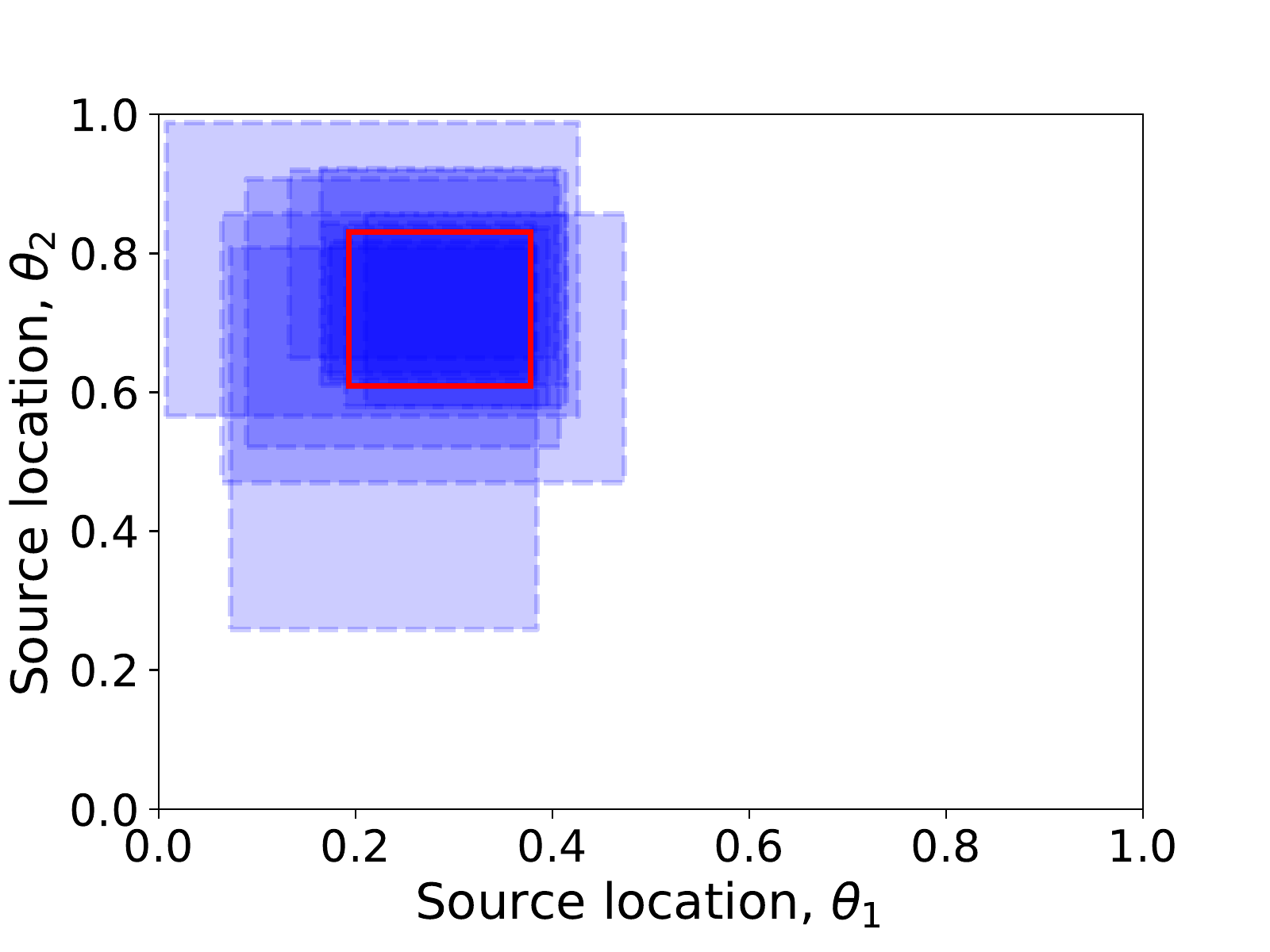}
	\vspace{-1.4\baselineskip}
        \caption{\small{GP fixed.}}
        \label{fig:heat_hpd_gp_LH}
    \end{subfigure}%
    \\
    \begin{subfigure}[t]{0.45\textwidth}
        \centering
        \includegraphics[width=\textwidth]{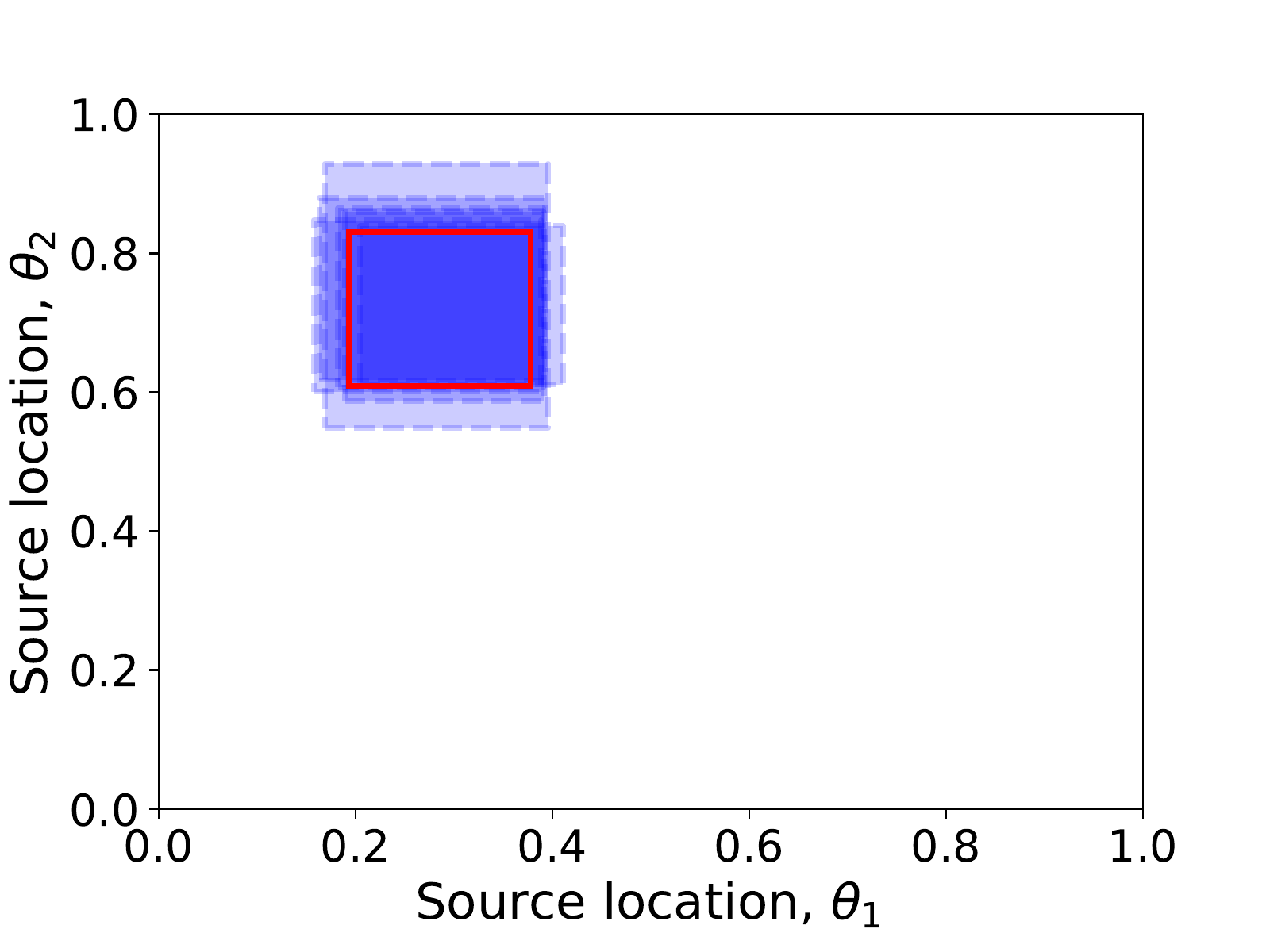}
	\vspace{-1.4\baselineskip}
        \caption{\small{GP adaptive (cases with final $k<n_{max}$).}}
        \label{fig:heat_hpd_gp_adaptive_converged}
    \end{subfigure}%
\caption{The $95\%$ HPD regions for $10$ runs of Algorithm \ref{algo:adaptive_GP} \protect\subref{fig:heat_hpd_gp_adaptive} and for $10$ Latin hypercube designs (each with $15$ points) \protect\subref{fig:heat_hpd_gp_LH} shown as blue rectangles with dashed borders. The $95\%$ HPD region for the true posterior is shown as red rectangle with solid border. \protect\subref{fig:heat_hpd_gp_adaptive_converged} corresponds to the 6 cases from \protect\subref{fig:heat_hpd_gp_adaptive} for which Algorithm \ref{algo:adaptive_GP} terminated with $k<n_{max}$.}
\end{figure}

We plot the $95\%$ HPD regions for all 10 runs of Algorithm \ref{algo:adaptive_GP} in Figure \ref{fig:heat_hpd_gp_adaptive}. These regions correspond well to the $95\%$ HPD region of the true posterior shown as red rectangle. Discarding the cases that reached the maximum number of iterations produces an even more convincing picture, see Figure \ref{fig:heat_hpd_gp_adaptive_converged}. To contrast our adaptive designs with randomized fixed designs, we also perform the posterior estimation with $10$ randomized Latin hypercube designs each containing $15$ points. As Figure \ref{fig:heat_hpd_gp_LH} demonstrates, the resulting posteriors vary  significantly and, in general, are of much worse quality than those obtained with the adaptive designs. Note further that the average number of inputs in the final designs (see second column of Table \ref{tbl:results_heat}) for all adaptive cases is $12.9$, and for the cases with final $k<n_{max}$ depicted in Figure \ref{fig:heat_hpd_gp_adaptive_converged} it is $11.5$. Thus, we achieve consistently better results with adaptive designs than with fixed designs at $76$-$86\%$ of the  cost (measured in the number of forward model evaluations). For our target applications (e.g., cosmology simulations), the reduction in the number of forward model evaluations without sacrificing the accuracy of the inference is critical due to the high computational demands of simulations.

%

\section{Discussion and conclusions}\label{sec:conclusions}

We presented  a novel approach to the adaptive construction of Gaussian process surrogates for the solution of inverse problems in the Bayesian framework. Our approach builds upon the Bayesian surrogate framework of \cite{IBilionis_NZabaras_2014a} and utilizes the expected improvement acquisition function from Bayesian optimization \cite{DRJones_MSchonlau_WJWelch_1998a} in order to sequentially and adaptively select training inputs. We optimize the \textit{expected improvement in fit} function that takes into account measurement noise as well as uncertainty of the GP surrogate. At each step of the algorithm, we add its maximizers to the training set and re-evaluate the parameters of the surrogate model. In this way, we build a hierarchical Bayesian model that adjusts to the obtained simulation data.

The low-dimensional numerical examples demonstrate the effectiveness of our method compared to a fixed design approach based on Latin hypercube sampling. While a formal analysis remains a difficult task, our empirical results show that our adaptive method achieves significantly better results than the non-adaptive method in terms of estimating the parameter posteriors and the computational  cost that is associated with  the posterior estimation using the forward model. Results for a problem with $9$ parameters that can be found in the supplement \ref{sec:permeability} suggest that the method is effective in a higher-dimensional setting as well.



Some comments on the limitations of the methodology are in order. Firstly, the method of Algorithm \ref{algo:adaptive_GP} has a ``potential for deception'' as it relies on the estimates of the prediction error of the Gaussian process model which might considerably underestimate the true error at untested inputs. As noted by Jones \cite[Section~7]{DRJones_2001a}, in some clinical cases, the expected improvement criterion might completely fail if the training sample misleads the construction of the GP surrogate. Furthermore, the greedy and myopic strategy of Algorithm \ref{algo:adaptive_GP} means that if the misfit function has multiple minima with about the same value, only one of them will most likely be explored. The first problem is somewhat unavoidable, but unlikely in practice. The second problem, however,  can be alleviated by modeling. One could  further restrict the search region $\cB_\theta$ or one could tweak the algorithm parameters, for example, by adding several maximizers of the expected improvement in fit function in each iteration and by decreasing the threshold parameter of the stopping criterion. Note, however, that once the $g_{min}$ value is sufficiently close to the global minimum $g_{min}^*$ of the true misfit function, the expected improvement function becomes mostly zero with sharp peaks (see also Figure \ref{fig:1D_exp_imp_step9}). Therefore, it becomes increasingly difficult to find its maxima and  a reduction of the threshold parameter might have no effect.

Further analysis and development of the method are a matter of future work. Several potential extensions of the presented method are: adaptation to a case of correlated outputs using the ideas in \cite{SConti_AOHagan_2010a}; selection of multiple new training inputs at a time in cases when the forward model evaluations can be efficiently performed in parallel; and incorporation of several levels of fidelity of the simulation code as in the autoregressive setting of \cite{MCKennedy_AOHagan_2000a}.

\begin{appendix}

\section{Implementation details} \label{sec:implementation}

\textbf{GP training.} We specify an ``uninformative'' prior on the hyperparameters $\bpsi$ by only specifying their ranges, i.e., we assume uniform priors on both the variance parameter $\sigma_c^2$ and on the characteristic length-scales $\ell_i$, and take their product as the prior $p(\bpsi)$:
\[
	p(\bpsi) = \cU(10^{-8},\sigma_c^U) \times \prod_{i=1}^p \cU(10^{-8}, \ell_i^U),
\]
where the choice of the upper bounds $\sigma_c^U$, $\ell_i^U$ depends on the problem at hand.
The posterior $p(\bpsi | \cD)$ is then obtained by MCMC methods. We utilize the Python library \textit{gptools}  \cite{MAChilenski_MGreenwald_YMMarzouk_NTHoward_AEWhite_JERice_JRWalk_2015a} that uses the the affine-invariant ensemble sampler \cite{JGoodman_JWeare_2010a} known as \textit{emcee} \cite{DForemanMackey_DWHogg_DLang_JGoodman_2013a}. In order to perform sampling, we initialize an ensemble of walkers (typically, $200$) using the prior $p(\bpsi)$, run the parallelized \textit{emcee} sampler for $400$ steps, and use the final states of each walker's chain as posterior samples $\{\bpsi\}_{j=1}^{n_\psi}$ with $n_\psi$ equal to the number of walkers. 
Evaluation of the posterior means and variances is also performed in parallel.


\textbf{Solving \eqref{eq:expected_improvement_max_problem}.} 
Our strategy for solving \eqref{eq:expected_improvement_max_problem} is to first apply smoothing to the positive part function $[\,\cdot\,]^+$, and then to apply a gradient-based optimization method to find its maxima. Since the expected improvement function $\cI(\btheta)$ is multi-modal, we employ a multi-start strategy to find its multiple local maxima, and we choose the best one as our solution.

For convenience and to make the following derivations simpler, we assume a diagonal noise covariance $\BSigma_E$. 
This allows us to write
\[
	g(\btheta; \cD, \bpsi) = \sum_{i=1}^q g(\btheta; \cD_i, \bpsi),
\]
where $g(\btheta; \cD_i, \bpsi)$ is a misfit for the $i$-th measurement (recall \eqref{eq:gp_vec_pred_mean} and \eqref{eq:gp_vec_pred_cov}):
\[
	g(\btheta; \cD_i, \bpsi) = \frac{(z_i - {\V_i}^{1/2}m(\btheta; \cD_i, \bpsi) - m_i)^2}{\sigma_i^2 + \V_i\V(\btheta; \cD, \bpsi)}.
\]

In order to use gradient-based algorithms for solving  \eqref{eq:expected_improvement_max_problem}, we use a smoothed positive part function $[\,\cdot\,]^+_\eta$ that depends on a smoothing parameter $\eta$. Specifically, we use the following twice continuously differentiable function from \cite{DPKouri_TMSurowiec_2016a}:
\[
	[x]^+_\eta = \begin{cases}
		0, \quad &\text{if } x \leq 0,\\
		\left(\frac{x^3}{\eta^2} - \frac{x^4}{2\eta^3}\right), \quad &\text{if } x\in(0,\eta),\\
		x - \frac{\eta}{2}, \quad &\text{if } x\geq \eta.
\end{cases}
\] 
For this function $[x]^+_\eta \leq [x]^+ \leq [x]^+_\eta + 0.5\eta$. We set $\eta=10^{-4}$ and in the following treat $[x]^+_\eta$ as a function of $x$ only. With $[\,\cdot\,]^+$ substituted by $[\,\cdot\,]^+_{\eta}$, the problem \eqref{eq:expected_improvement_max_problem} is substituted by
\begin{equation}\label{eq:expected_improvement_max_smooth_problem}
	\max_{\boldsymbol{\theta}\in\cB_\theta} \cI_{\eta}(\btheta) \coloneqq \frac{1}{n_\psi} \sum_{j=1}^{n_\psi} \left[ g_{min} - g\big(\btheta; \cD, \bpsi^{(j)}\big)\right]^+_\eta.
\end{equation}
The gradient of the objective $\cI_\eta(\btheta)$ can be computed as follows:
\[
	\nabla_{\boldsymbol{\theta}} \cI_\eta(\btheta) = -\frac{1}{n_\psi} \sum_{j=1}^{n_\psi} \left(\left[ g_{min} - g\big(\btheta; \cD, \bpsi^{(j)}\big) \right]^{+}_\eta \right)' \nabla_{\boldsymbol{\theta}} g\big(\btheta; \cD, \bpsi^{(j)}\big),
\]
where the gradient of the misfit function is given by
\begin{align*}
	\nabla_{\boldsymbol{\theta}} g(\btheta; \cD, \bpsi) = -\sum_{i=1}^q &\bigg[ \frac{2{\V_i}^{1/2}(z_i - {\V_i}^{1/2}m(\btheta; \cD_i, \bpsi) - m_i)}{\sigma_i^2 + \V_i\V(\btheta; \cD, \bpsi)} \nabla_{\boldsymbol{\theta}} m(\btheta; \cD_i, \bpsi) \\
	&+ \V\nolimits_i\frac{(z_i - {\V_i}^{1/2}m(\btheta; \cD_i, \bpsi) - m_i)^2}{(\sigma_i^2 + \V\nolimits_i\V(\btheta; \cD, \bpsi))^2} \nabla_{\boldsymbol{\theta}}\V(\btheta; \cD, \bpsi) \bigg].
\end{align*}
Recall that the predictive mean for the $i$-th output has the form
\[
	m(\btheta; \cD_i, \bpsi) = \bc_\psi^T(\BC_\psi)^{-1} \by_i = \sum_{j=1}^{n_{train}} c\big(\btheta, \btheta^{(j)}_{train} ; \bpsi \big) v_j^{(i)}
\]
with $(v_j^{(i)}, \dots, v_{n_{train}}^{(i)})^T = \bv_i \coloneqq (\BC_\psi)^{-1}\by_i$. The gradient of the covariance between $\btheta$ and a point $\btheta^{(j)}_{train}$ in the training set is
\[
	\nabla_{\boldsymbol{\theta}} c\big(\btheta, \btheta^{(j)}_{train}; \bpsi\big) = -\BLambda^{-1}\big(\btheta - \btheta^{(j)}_{train}\big) c\big(\btheta, \btheta^{(j)}_{train}; \bpsi\big),
\]
where $\BLambda \coloneqq \text{diag}[\ell_1^2, \dots, \ell_p^2]$. Thus, the gradient of the predictive mean with respect to $\btheta$ is
\[
	\nabla_{\boldsymbol{\theta}} m(\btheta; \cD_i, \bpsi) = - \sum_{j=1}^{n_{train}} \BLambda^{-1} \big(\btheta - \btheta^{(j)}_{train}\big) c\big(\btheta, \btheta^{(j)}_{train}; \bpsi\big) v_j^{(i)},
\]
or in a more compact form
\[
	\nabla_{\boldsymbol{\theta}} m(\btheta; \cD_i, \bpsi) = -\BLambda^{-1}(\btheta - \btheta_{train}) (\bc_\psi \cdot \bv_i),
\]
where $\btheta - \btheta_{train} = \bigg[\btheta - \btheta^{(1)}_{train}, \dots, \btheta - \btheta^{(n_{train})}_{train} \bigg] \in \real^{p\times n_{train}}$, and $\ba\cdot\bb$ means element-wise product of the vectors $\ba$ and $\bb$.

\noindent
For the predictive variance
\[
	\V(\btheta; \cD, \bpsi) = c(\btheta, \btheta; \bpsi) - \bc_\psi^T(\BC_\psi)^{-1} \bc_\psi = \sigma_c^2 - \bc_\psi^T(\BC_\psi)^{-1} \bc_\psi,
\]
we get
\[
	\nabla_{\boldsymbol{\theta}} \V(\btheta; \cD, \bpsi) = 2\BLambda^{-1}(\btheta-\btheta_{train})(\bc_\psi \cdot (\BC_\psi)^{-1} \bc_\psi ).
\]
Next we discuss the choice of the optimization method for solving \eqref{eq:expected_improvement_max_smooth_problem} and the choice of initial points.

We utilize the truncated Newton method \cite{SGNash_1984a} with bound constraints. Its implementation is available through the Python function \textit{scipy.optimize.fmin\_tnc}. The bounding box $\cB_{\theta}$ corresponds to the domain of the uniform prior on the parameters $\btheta$. We initialize the solver from multiple initial locations chosen in $\cB_\theta$ either on a grid, or according to a quasi-random design (Sobol sequence \cite[Section~5.6.4]{TJSantner_BJWilliams_WINotz_2003a}). The number of initializations is dictated by the dimensionality of the parameter space and by computational time considerations. Since evaluation of the objective function is already performed in parallel (with respect to the hyperparameter samples), we perform the optimizations sequentially for the multiple starts. As convergence criteria we use the norm of the projected gradient, the absolute difference in the consecutive function values, and the norm of the difference in the consecutive iterate values. From the set of converged results we select the one corresponding to the highest optimal objective value as the solution of \eqref{eq:expected_improvement_max_smooth_problem}.

\textbf{Posterior estimation.} In the numerical examples estimation of the posterior is also performed with \textit{emcee}. It requires providing the log-probability function that is a product of the log-prior and the log-likelihood.
For the ``true'' likelihood $L(\btheta | \bz)$, the log-likelihood (assuming diagonal noise covariance $\BSigma_{E}$) is computed as follows:
\[
	\log L(\btheta | \bz) = -\frac{1}{2} \sum_{i=1}^q \bigg[ \frac{(z_i - f_i(\btheta))^2}{2\pi \sigma_i^2} + \log(2\pi \sigma_i^2) \bigg].
\]
For the GP-based likelihood $L(\btheta | \bz, \cD)$, the log-likelihood is approximated as:
\[
	\log L(\btheta | \bz, \cD) \approx \log\bigg( \sum_{j=1}^{n_\psi} \frac{k^{(j)}}{n_\psi} \exp\left[-\frac{1}{2} g\big(\btheta; \cD, \bpsi^{(j)}\big) \right] \bigg)
\]
with $k^{(j)} = \big( \prod_{i=1}^q 2\pi\big(\sigma_i^2 + \V_i\cdot\V\big(\btheta; \cD, \bpsi^{(j)}\big) \big)^{-1/2}$.
In order to avoid an underflow when computing this approximation, we use \textit{scipy.misc.logsumexp} in Python that is based on the following formulation:
\[
	\log L(\btheta | \bz, \cD) \approx -\frac{1}{2}g^* + \log\bigg( \sum_{j=1}^{n_\psi} \frac{k^{(j)}}{n_\psi} \exp\left[-\frac{1}{2} \big(g\big(\btheta; \cD, \bpsi^{(j)}\big) - g^*\big) \right] \bigg),
\]
where
\[
	g^* =\min\{g\big(\btheta; \cD, \bpsi^{(j)}\big) \,|\, j=1,\dots,n_{train}\}.
\]
Finally, as previously mentioned, the prior $p(\btheta)$ is taken to be uniform $\cU(\btheta^L, \btheta^U)$.
The posterior plots are generated using the Python library \textit{corner} \cite{DForemanMackey_2016a}.
%

\section{Inversion of permeability field}\label{sec:permeability}
We use a test problem motivated by steady flow in porous media considered in \cite{CuiT_YMMarzouk_KEWillcox_2015a}. The governing equations are given by
\begin{subequations}\label{eq:laplace}
\begin{align}
	- \nabla\cdot(\kappa(\bx; \btheta) \nabla u(\bx)) &= q(\bx) &\bx\in \Omega\coloneqq[0,1]^2 \\
	\kappa(\bx; \btheta)\nabla u(\bx) \cdot \mathbf{n}(\bx) &= 0 &\bx\in\partial \Omega \\
	\int_\Omega u(\bx) d\bx &= 0.
\end{align}
\end{subequations}
The source term $q(\bx)$ is defined by the mixture of four weighted Gaussians with standard deviations of $0.05$, centered at $(0.3, 0.3)$, $(0.7, 0.3)$, $(0.7, 0.7)$, $(0.3, 0.7)$, and with weights $\{+2, -3, +3, -2\}$. Equations \eqref{eq:laplace} are solved in FEniCS \cite{HPLangtangen_ALogg_2017a} using a $32\times 32$ uniform finite element mesh with piecewise-linear Lagrange elements. 

The permeability field $\kappa(\bx; \btheta)$ is defined as a weighted sum of $p=9$ radial basis functions with the weights being the parameters of interest:
\begin{equation}\label{eq:permeability_field}
	\kappa(\bx; \btheta) = \sum_{i=1}^p \theta_i b_i(\bx),
\end{equation}
where
\[
	b_i(\bx) = \exp\left[ -\frac{\|\bx - \bc_i\|^2}{2(0.15)^2} \right]
\]
with centers $\bc_i$ given by $(0.5, 0.5)$, $(0.25, 0.25)$, $(0.75, 0.25)$, $(0.75, 0.75)$, $(0.25, 0.75)$, $(0, 0.5)$, $(0.5, 0)$, $(1., 0.5)$, $(0.5, 1.)$.

\begin{figure}[!htb]
    \centering
    \begin{subfigure}[t]{0.46\textwidth}
        \centering
        \includegraphics[width=\textwidth]{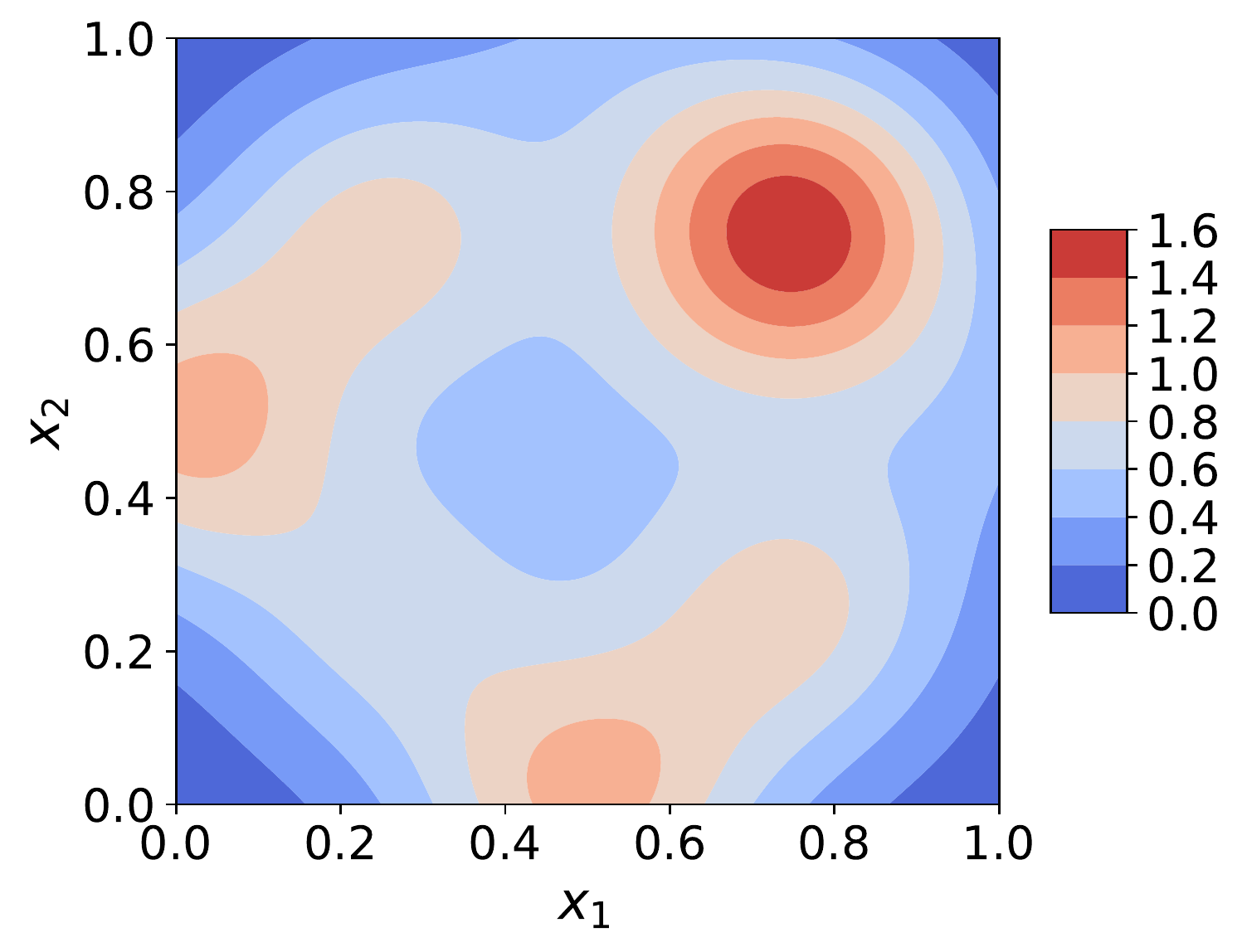}
	\vspace{-1.4\baselineskip}
        \caption{\small{Permeability field $\kappa(\bx; \btheta_{true})$.}}
        \label{fig:perm_field}
    \end{subfigure}%
   ~
    \begin{subfigure}[t]{0.46\textwidth}
        \centering
        \includegraphics[width=\textwidth]{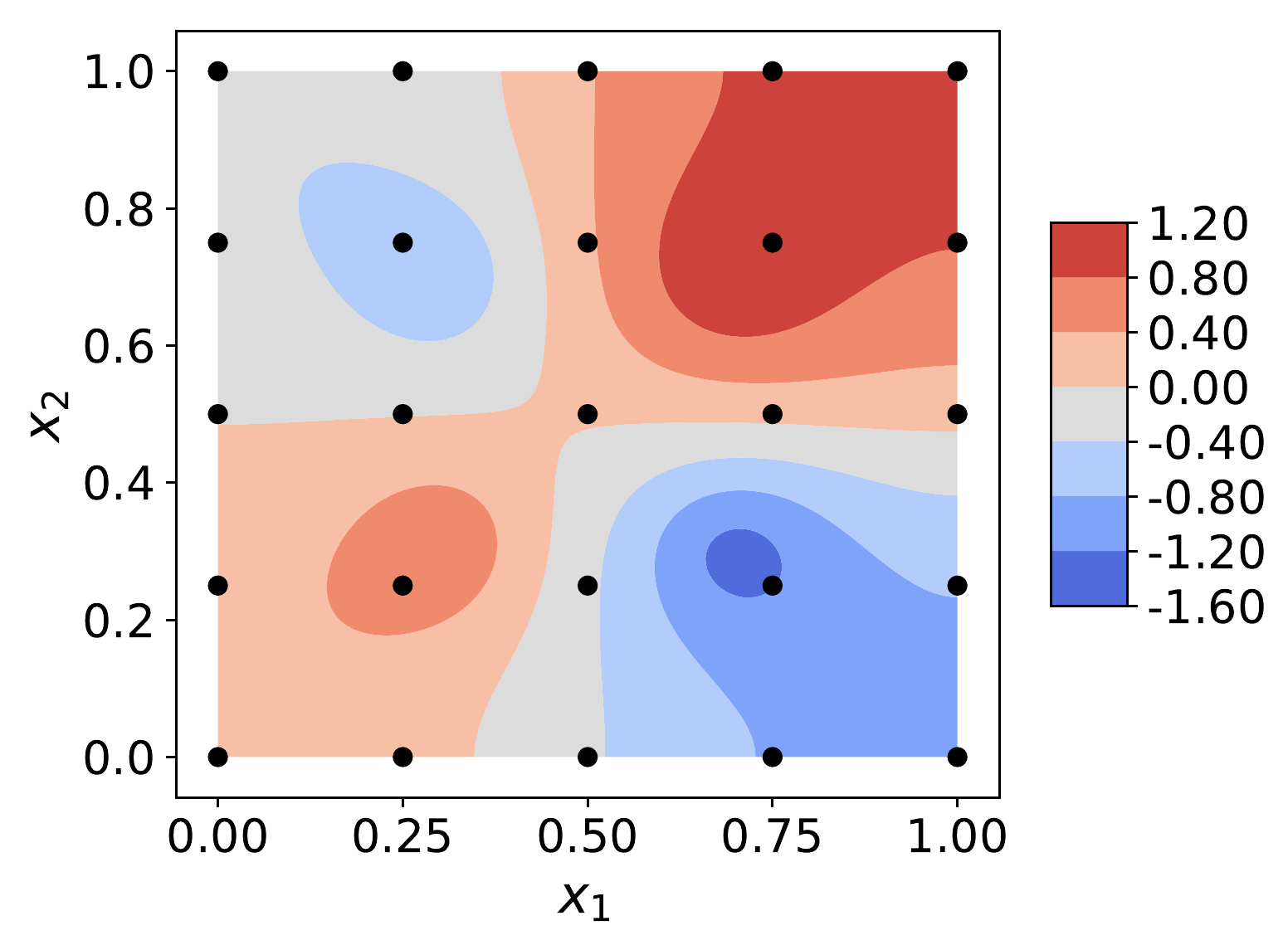}
	\vspace{-1.4\baselineskip}
        \caption{\small{Solution $u(\bx)$ corresponding to $\btheta_{true}$.}}
        \label{fig:perm_solution}
    \end{subfigure}%
\caption{Setup of the Problem \eqref{eq:laplace}. Black dots in \protect\subref{fig:perm_solution} indicate measurement locations.}
\end{figure}

The measurements are taken on a uniform $5\times 5$ grid covering $\Omega$ resulting in a total of $25$ measurements. The measurement data $\bz$ is generated with the following $\btheta_{true}=(0.3, 0.6, 0.8, 1.5, 0.8, 1.0, 1.0, 0.3, 0.3)^T$ by solving \eqref{eq:laplace} on a $128\times 128$ finite element mesh and adding zero-mean measurement noise with $\sigma_i=0.01$, $i=1,\dots, 25$. The permeability field corresponding to $\btheta_{true}$ is shown in Figure \ref{fig:perm_field}, and the corresponding solution $u(\bx)$ of the problem \eqref{eq:laplace} is shown in Figure \ref{fig:perm_solution}.

We assume uniform priors on the parameters $\btheta$, $p(\btheta)=\cU(\cB_\theta)$, where $\cB(\theta) = [0, 1]\times[0, 1]\times[0, 1]\times[0.8, 1.8]\times[0, 1]\times[0.5, 1.5]\times[0.6, 1.6]\times[0, 1]\times[0, 1]$.

The initial design $\cD$ contains $18$ training inputs arranged in a Latin hypercube design. We run Algorithm \ref{algo:adaptive_GP} (described in the main document) with $n_{max}=20$, $\epsilon_{thresh}=0.01$, and we use $500$ $9$-dimensional Sobol sequence points as initial guesses for the multi-start optimization problem \eqref{eq:expected_improvement_max_problem}. We use the hyperparameter prior  $p(\bpsi)=\cU([0,4]^{p+1})$. We obtain $n_{\psi}=200$ hyperparameter posterior samples with \textit{emcee} using the likelihood function \eqref{eq:evidence_mult_out} with normalized outputs \eqref{eq:scaled_outputs_mult_case}.

The iteration history of  Algorithm \ref{algo:adaptive_GP} for the Problem \eqref{eq:laplace} is presented in Figure \ref{fig:perm_misfits}. The reference value $g_{min}^*=12.62$ is computed by minimizing the true misfit function $g(\btheta)$. After $16$ iterations of  Algorithm \ref{algo:adaptive_GP}, the achieved value of $g_{min}$ is $56.09$ which is relatively large compared to $g_{min}^*$. At iteration $17$, however, all $500$ local optimizations return zero expected improvement objective value and the algorithm exits. Without performing additional searches, we use the obtained GP model based on a total of $34$ forward model evaluations to estimate the parameter posteriors. 

\begin{figure}[!htb]
    \centering
    \begin{subfigure}[t]{0.46\textwidth}
        \centering
        \includegraphics[width=\textwidth]{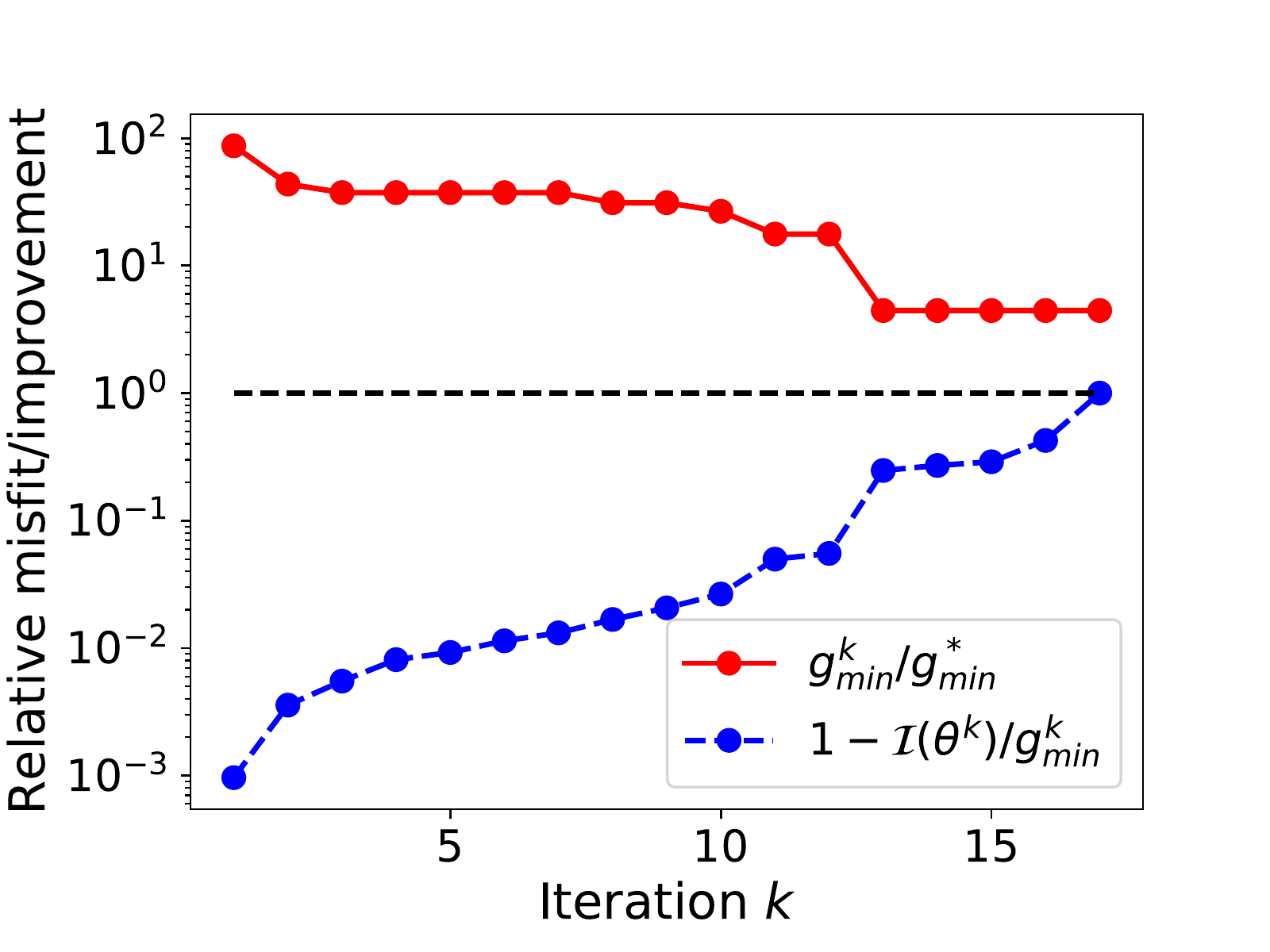}
	\vspace{-1.4\baselineskip}  
    \end{subfigure}%
    \caption{Iteration history of Algorithm \ref{algo:adaptive_GP} for the Problem \eqref{eq:laplace}. Here $g_{min}^*=12.62$.}
    \label{fig:perm_misfits}
\end{figure}

The result of estimating $p(\btheta | \bz)$ with the full forward model is presented in Figure \ref{fig:perm_post_true}, and the result of estimating $p(\btheta | \bz, \cD)$ with the adaptively constructed GP model is shown  in Figure \ref{fig:perm_post_gp_adaptive}. In both cases, the plots are generated with $2\times 10^4$ posterior samples. Even though our algorithm did not find the global minimum of the misfit function $g(\btheta)$, the obtained posterior agrees with the ``true'' one for most of the parameters judging by the two- and one-dimensional marginals. The two slightly misspecified parameters appear to be $\theta_3$ and $\theta_4$.

\noindent
The $95\%$ high probability density (HPD) region for the true posterior $p(\btheta | \bz)$ is given by
\begin{align*}
	\cH_\theta^{true} &= [0.17, 0.48]\times[0.52, 0.66]\times[0.75, 0.88]\times[1.35, 1.53]\times[0.66, 0.83]\\
    &\times[0.89, 1.19]\times[0.86, 1.04]\times[0.22, 0.38]\times[0.25, 0.42].
\end{align*}
The $95\%$ HPD region for the GP-based posterior $p(\btheta | \bz, \cD)$ is given by
\begin{align*}
	\cH_\theta^{GP} &= [0.17, 0.49]\times[0.53, 0.66]\times[0.71, 0.83]\times[1.34, 1.49]\times[0.66, 0.79] \\
 &\times[0.92, 1.21]\times[0.86, 1.03]\times[0.26, 0.39]\times[0.28, 0.42].
\end{align*}

To illustrate the inference results with the full model and the GP model, we plot the recovered permeability fields corresponding to the values of $\btheta$ fixed at the medians of the one-dimensional marginals of the posteriors $p(\btheta | \bz)$ (Figure \ref{fig:perm_field_posterior_median_fm}) and $p(\btheta | \bz, \cD)$ (Figure \ref{fig:perm_field_posterior_median_gp}). As Figures \ref{fig:perm_post_true} and \ref{fig:perm_post_gp_adaptive} suggest, the medians of the one-dimensional marginals provide good estimates of the modes of both posteriors. Thus, the permeability fields in Figures \ref{fig:perm_field_posterior_median_fm} and \ref{fig:perm_field_posterior_median_gp} can be considered as point estimates of $\kappa(\bx; \btheta_{true})$ shown in Figure \ref{fig:perm_field} given the measurements $\bz$.

\begin{figure}[!htb]
    \centering
    \begin{subfigure}[t]{0.46\textwidth}
        \centering
        \includegraphics[width=\textwidth]{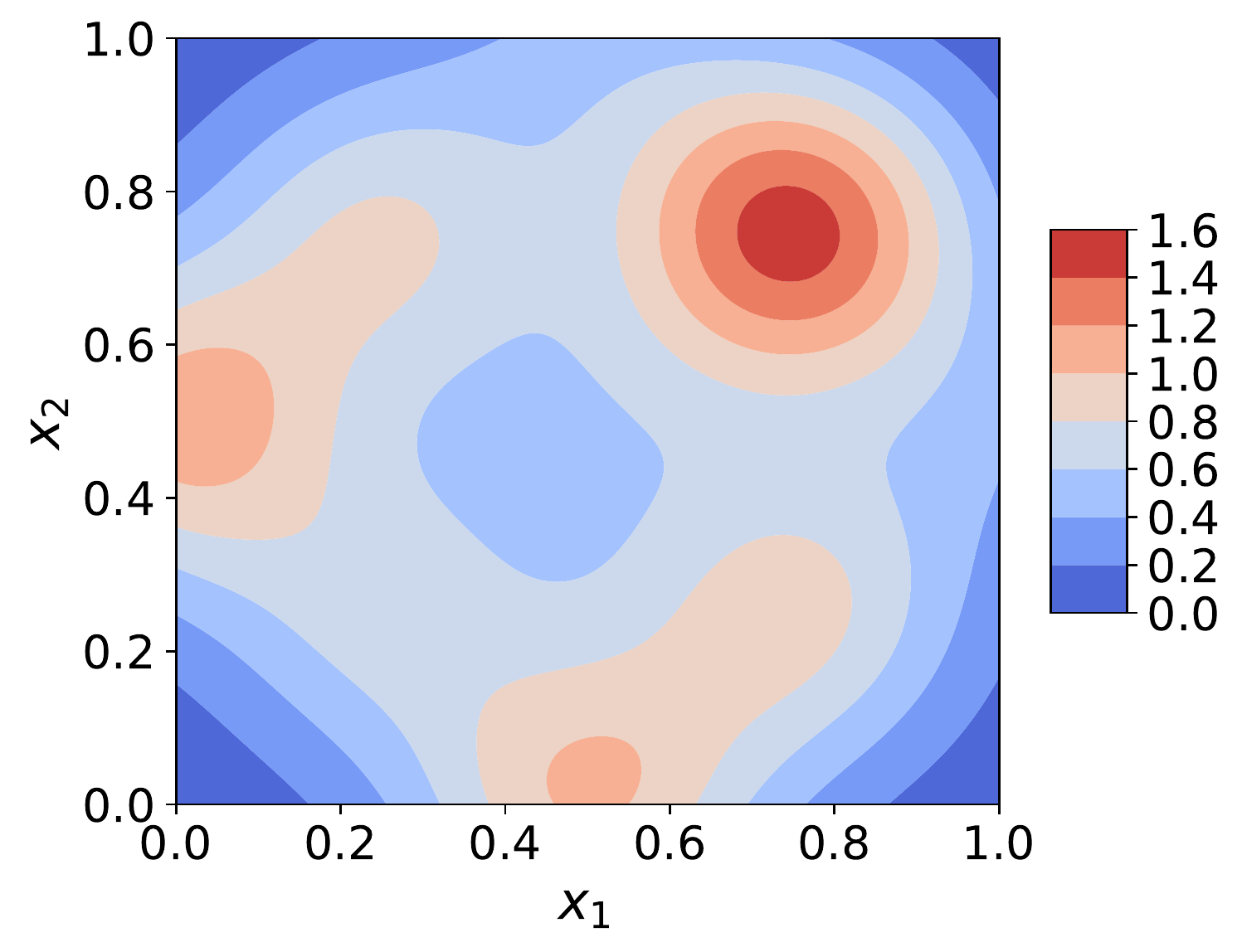}
	\vspace{-1.4\baselineskip}
        \caption{\small{Permeability field $\kappa(\bx; \btheta_{median}^{full})$.}}
        \label{fig:perm_field_posterior_median_fm}
    \end{subfigure}%
   ~
    \begin{subfigure}[t]{0.46\textwidth}
        \centering
        \includegraphics[width=\textwidth]{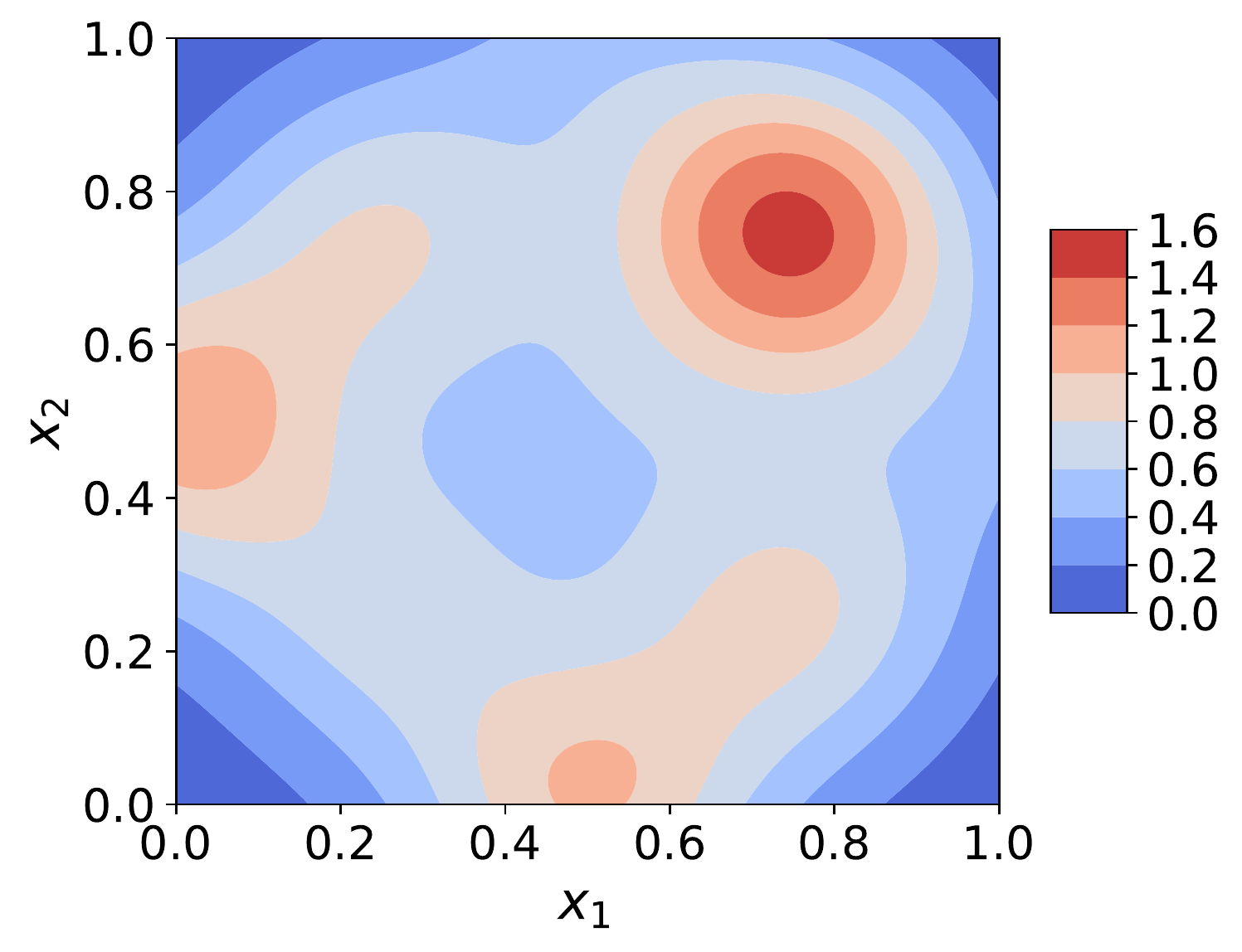}
	\vspace{-1.4\baselineskip}
        \caption{\small{Permeability field $\kappa(\bx; \btheta_{median}^{GP})$.}}
        \label{fig:perm_field_posterior_median_gp}
    \end{subfigure}%
\caption{Recovered permeability fields corresponding to $\btheta$ fixed at the medians of the one-dimensional marginals of the posteriors $p(\btheta | \bz)$ \protect\subref{fig:perm_field_posterior_median_fm} and $p(\btheta | \bz, \cD)$ \protect\subref{fig:perm_field_posterior_median_gp}.}
\end{figure}

\begin{figure}[!htb]
    \centering
    \begin{subfigure}[t]{\textwidth}
        \centering
        \includegraphics[width=\textwidth]{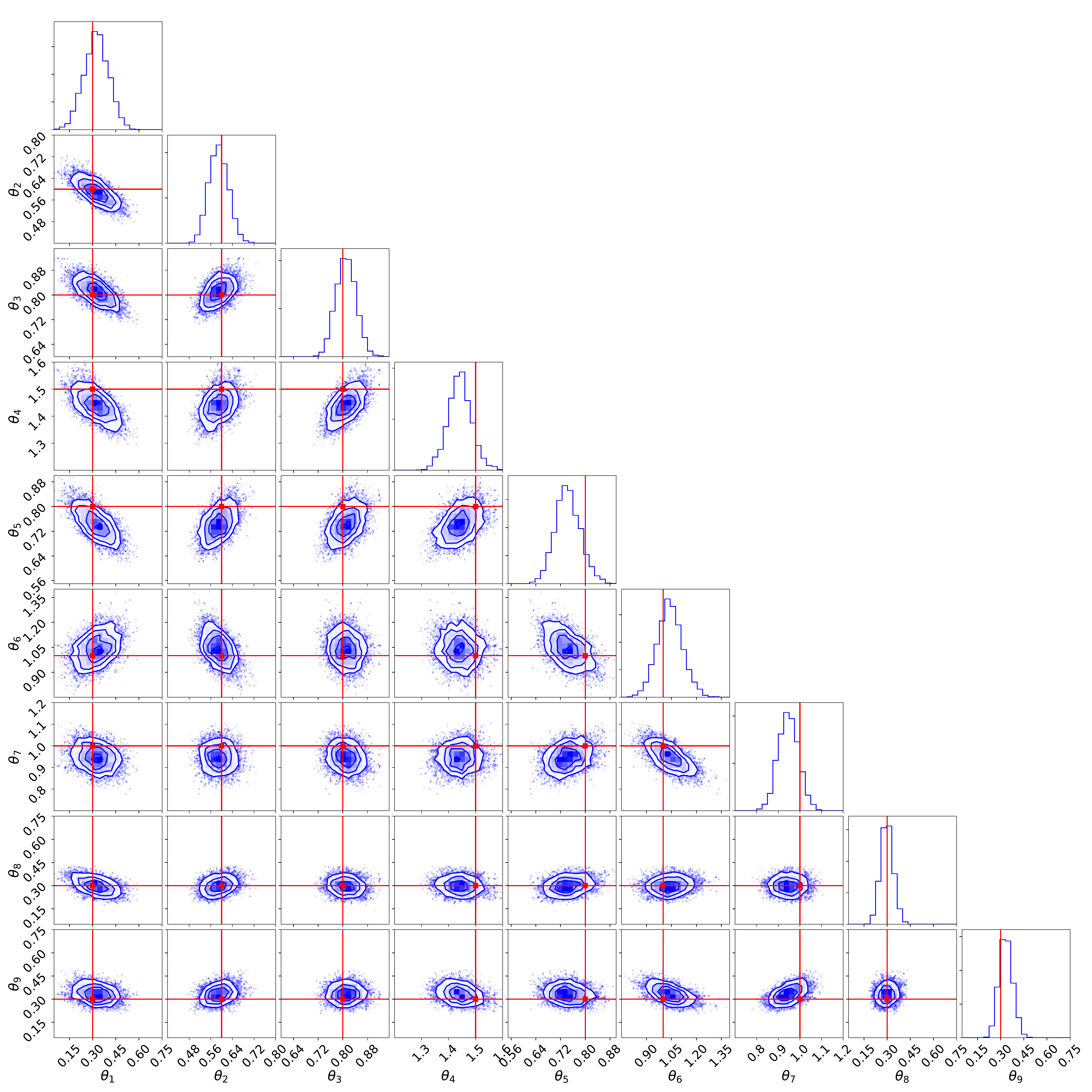}
    \end{subfigure}%
    \caption{\small{Posterior $p(\btheta | \bz)$ estimated with forward model (based on $2\times 10^4$ samples).}}
        \label{fig:perm_post_true}
\end{figure}

\begin{figure}[!htb]
\centering
    \begin{subfigure}[t]{\textwidth}
        \centering
        \includegraphics[width=\textwidth]{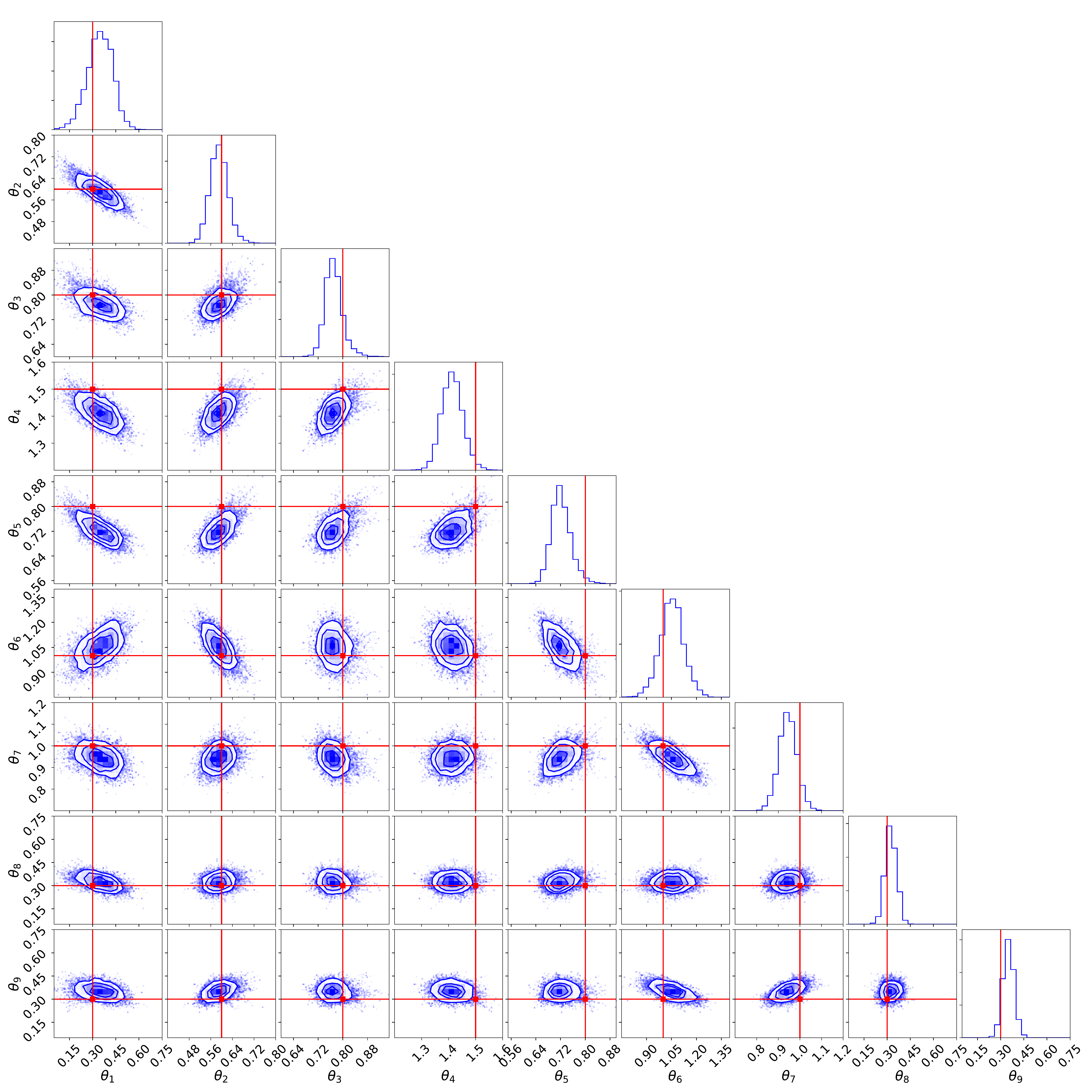}
    \end{subfigure}%
        \caption{\small{Posterior $p(\btheta | \bz, \cD)$ estimated with adaptive GP model built using a total of $34$ forward model evaluations (based on $2\times 10^4$ samples).}}
        \label{fig:perm_post_gp_adaptive}
\end{figure}

\end{appendix}

\clearpage
\bibliography{references}
\bibliographystyle{plain}


\end{document}